\documentclass[USenglish,onecolumn]{article}

\usepackage[utf8]{inputenc}
\usepackage[big]{dgruyter}

\usepackage{enumitem}

\usepackage{tabularx}
 
\newcommand{\indep}{\mathrel{\perp\mspace{-10mu}\perp}}
\usepackage{centernot}
\newcommand{\dep}{\centernot{\indep}}
\newcommand{\given}{\,|\,}
\newcommand{\dsep}[1]{\overset{d}{\underset{{#1}}{\perp}}}

\newcommand{\tuple}[1]{\langle {#1} \rangle}

\newcommand{\BG}{\mathcal{B}}
\newcommand{\G}{\mathcal{G}}
\newcommand{\V}{\mathcal{V}}
\newcommand{\E}{\mathcal{E}}
\newcommand{\R}{\mathbb{R}}

\usepackage{subcaption}
\usepackage{float}
\usepackage{tikz}
\usepackage{tkz-graph}
\usetikzlibrary{arrows,arrows.meta,fit,automata,matrix,calc,patterns,decorations,decorations.markings,shapes,positioning}

\usepackage{thmtools}
\theoremstyle{plain}
\newtheorem{theorem}{Theorem}
\newtheorem*{theorem*}{Theorem}

\newtheorem*{proposition*}{Proposition}
\newtheorem{lemma}{Lemma}

\theoremstyle{definition}

\newtheorem{assumption}{Assumption}


\newenvironment{remark*}
{\pushQED{\qed}\remarkxx}
{\popQED\endremarkxx\vspace{2ex}}

\newenvironment{example}
{\pushQED{\qed}\examplex}
{\popQED\endexamplex\vspace{2ex}}
\newenvironment{example*}
{\pushQED{\qed}\examplexx}
{\popQED\endexamplexx\vspace{2ex}}

\bibliographystyle{vancouver}

\graphicspath{{fig/}}

\begin{document}

  \articletype{Research Article{\hfill}Open Access}

\author[1]{Tineke Blom}

\author*[2]{Joris M. Mooij}

\affil[1]{Informatics Institute, University of Amsterdam, the Netherlands; E-mail: t.blom2@uva.nl}

\affil[2]{Korteweg-de Vries Institute, University of Amsterdam, the Netherlands; E-mail: j.m.mooij@uva.nl}

\title{\huge Causality and independence in perfectly adapted dynamical systems}

\runningtitle{Causality and independence under perfect adaptation}

\abstract{Perfect adaptation in a dynamical system is the phenomenon that one or more variables have an initial transient response to a persistent change in an external stimulus but revert to their original value as the system converges to equilibrium. With the help of the causal ordering algorithm, one can construct graphical representations of dynamical systems that represent the causal relations between the variables and the conditional independences in the equilibrium distribution. We apply these tools to formulate sufficient graphical conditions for identifying perfect adaptation from a set of first-order differential equations. Furthermore, we give sufficient conditions to test for the presence of perfect adaptation in experimental equilibrium data. We apply this method to a simple model for a protein signalling pathway and test its predictions both in simulations and using real-world protein expression data. We demonstrate that perfect adaptation can lead to misleading orientation of edges in the output of causal discovery algorithms.}

\keywords{causal ordering, causal discovery, feedback, equilibrium, perfect adaptation}

\classification[MSC]{37C20, 60G99, 62A01, 62A09, 62D20, 62H22, 62M99, 92C42, 92C45, 93C73}

\journalname{Journal of Causal Inference}
\DOI{DOI}
\startpage{1}
\received{..}
\revised{..}
\accepted{..}

\journalyear{2023}
\journalvolume{..}

\maketitle

\section{Introduction}
\label{sec:introduction}

Understanding causal relations is an objective that is central to many scientific endeavors. It is often said that the gold standard for causal discovery is a randomized controlled trial, but practical experiments can be too expensive, unethical, or otherwise infeasible. The promise of causal discovery is that we can, under certain assumptions, learn about causal relations by using a combination of data and background knowledge \citep{Cooper1997, Richardson1999, Pearl2000, Spirtes2000, Zhang2008, Hyttinen2012, Colombo2012, Forre2018, Mooij2020, Mooij2020b}. Roughly speaking, causal discovery algorithms construct a graphical representation that encodes certain aspects of the data, such as conditional independences, given some constraints that are imposed by background knowledge. Under additional assumptions on the underlying causal mechanisms these graphical representations have a causal interpretation as well. In this work, we specifically consider the equilibrium distribution of \emph{perfectly adapted dynamical systems}. We will show that such systems may have the property that the corresponding graphs that encode the conditional independences in the distribution do not have a straightforward causal interpretation in terms of the changes in distribution induced by interventions. 

\emph{Perfect adaptation} in a dynamical system is the phenomenon that one or more variables in the system temporarily respond to a persistent change of an external stimulus, but ultimately revert to their original values as the system reaches equilibrium again. We study the differences between the causal structure implied by the dynamic equations and the conditional dependence structure of the equilibrium distribution. To do so, we make use of the technique of \emph{causal ordering}, introduced by Simon \cite{Simon1953}, which can be used to construct a \emph{causal ordering graph} that represents causal relations and a \emph{Markov ordering graph} that encodes conditional independences between variables \citep{Blom2020}. We introduce the notion of a \emph{dynamic causal ordering graph} to represent transient causal effects in a dynamical model. We use these graphs to provide a sufficient graphical condition for a dynamical system to achieve perfect adaptation, which does not require simulations or cumbersome calculations. Furthermore, we provide sufficient conditions to test for the presence of perfect adaptation in real-world data with the help of the equilibrium Markov ordering graph and we explain why the usual interpretation of causal discovery algorithms, when applied to perfectly adapted dynamical systems at equilibrium, can be misleading.

We illustrate our ideas on three simple dynamical systems with feedback: a filling bathtub model \citep{Iwasaki1994,Dash2005}, a viral infection model \citep{Boer2012,BlomMooij_UAI_22}, and a chemical reaction network \citep{Ma2009}. We point out how perfect adaptation may also manifest itself in protein signalling networks, and take a closer look at the consequences for causal discovery from a popular protein expression data set \citep{Sachs2005}.  We adapt a model for the RAS-RAF-MEK-ERK signalling pathway from \citep{Shin2009} and study its properties  under certain saturation conditions. We test its predictions both in simulations and on real-world data. We propose that the phenomenon of perfect adaptation might explain why the presence and orientation of edges in the output of causal discovery algorithms does not always agree with the direction of edges in biological consensus networks that are based on a partial representation of the underlying dynamical mechanisms.

\section{Background}
\label{sec:background}

In this section, we provide an overview of the relevant background material on which this work is based. We first consider the assumptions underpinning popular constraint-based causal discovery algorithms and give a brief description of a simple local causal discovery algorithm in Section~\ref{sec:background:local causal discovery}. In Section~\ref{sec:background:causal ordering}, we proceed with a concise introduction to the causal ordering algorithm of Simon \cite{Simon1953} and demonstrate how it can be applied to a set of equations together with a pre-specified set of exogenous variables to deduce the implied causal relations and conditional independences. Finally, in Section~\ref{sec:background:related work}, we discuss the relationship of the present work with existing work.

\subsection{Causal discovery}
\label{sec:background:local causal discovery}

The main objective of causal discovery is to infer causal relations from experimental and observational data. The most common causal discovery algorithms can be roughly divided into score-based and constraint-based approaches. In this work, we will focus on the latter approach (examples include PC, FCI and variants thereof, see \citep{Cooper1997,Richardson1999,Spirtes2000,Zhang2008,Colombo2012,Mooij2020,Mooij2020b}), which exploits conditional independences in data to infer causal relations. We will first discuss assumptions for constraint-based causal discovery. We then consider the application of causal discovery algorithms to models with feedback. We proceed with a brief but concrete introduction to a simple local causal discovery algorithm. Finally, we discuss how the present work relates to existing literature. 

Learning a graphical structure from conditional independence constraints typically relies on Markov and faithfulness assumptions relating conditional independences to properties of a graph. In particular, a \emph{$d$-separation} is a relation between three disjoint sets of vertices in a graph that indicates whether all paths between two sets of vertices are $d$-blocked by the vertices in a third \citep{Pearl2000,Spirtes2000}. If every $d$-separation in a graph implies a conditional independence in the probability distribution then we say that the distribution satisfies the \emph{directed global Markov property} (or the \emph{$d$-separation} criterion) w.r.t.\ that graph \citep{Lauritzen1990}. Conversely, if every conditional independence in the probability distribution corresponds to a $d$-separation in a graph then we say that the distribution is \emph{$d$-faithful} to that graph.\footnote{While most authors refer to the notion of `$d$-faithful' simply as `faithful', we distinguish between two different notions, `$d$-faithful' and `$\sigma$-faithful'.} When a probability distribution satisfies the $d$-separation Markov property w.r.t.\ a graph and is also $d$-faithful to the graph, then this graph is a compact representation of the conditional independences in the probability distribution and we say that it \emph{encodes} its independence relations. A lot of work has been done to understand the (combinations of) various assumptions (e.g.\ linearity, Gaussianity, discreteness, causal sufficiency, acyclicity, no selection bias) under which a graph that encodes all conditional independences and dependences in a probability distribution has a certain causal interpretation \citep[see, e.g., ][]{Richardson1999, Spirtes2000, Lacerda2008, Zhang2008, Colombo2012, Hyttinen2012, Forre2018, Strobl2018, Mooij2020, Mooij2020b}. Perhaps the simplest assumption is that the data was generated by a causal DAG \citep{Spirtes2000,Pearl2000}.\footnote{A \emph{directed acyclic graph} (DAG) is a pair $\tuple{V,E}$ where $V$ is a set of vertices and $E \subseteq \{ i \to j : i, j \in V \}$ a set of directed edges between vertices such that there are no directed cycles.} This graphical object represents both the conditional independence structure (the observational probability distribution is assumed to satisfy the directed global Markov property with respect to the graph) and the causal structure (every directed edge that is absent in the DAG corresponds with the absence of a direct causal relation between the two variables, relative to the set of variables in the DAG).
For this acyclic setting, sophisticated constraint-based causal discovery algorithms (such as the PC and FCI algorithms \citep{Spirtes2000,Zhang2008}) have been developed.
The key assumption that these algorithms make is that the same DAG expresses both the conditional independences in the observational distribution and the causal relations between the variables.

However, many systems of interest in various scientific disciplines (e.g.\ biology, econometrics, physics) include feedback mechanisms. Cyclic Structural Causal Models (SCMs) can be used to model causal features and conditional independence relations of systems that contain cyclic causal relationships \citep{Bongers2020}. For linear SCMs with causal cycles, several causal discovery algorithms have been developed that are based on $d$-separations \citep{Richardson1999, Lacerda2008, Hyttinen2012, Strobl2018}. The $d$-separation criterion is applicable to acyclic settings and to cyclic SCMs with either discrete variables or linear relations between continuous variables, but it is too strong in general \citep{Spirtes1995}. A weaker Markov property, based on the notion of \emph{$\sigma$-separation}, has been derived for graphs that may contain cycles \citep{Spirtes1995,Forre2017,Bongers2020}. If every $\sigma$-separation in a graph implies a conditional independence in the probability distribution then we say that it satisfies the \emph{generalized directed global Markov property} (or the \emph{$\sigma$-separation} criterion) w.r.t.\ that graph \citep{Forre2017}. Conversely, if every conditional independence in the probability distribution corresponds to a $\sigma$-separation in a graph then we say that it is \emph{$\sigma$-faithful} to that graph. 
Richardson \cite{Richardson1996} already proposed a causal discovery algorithm that is sound under the generalized directed Markov property and the $d$-faithfulness assumption, assuming causal sufficiency.
Recently, a causal discovery algorithm was proposed based on the $\sigma$-separation criterion and the assumption of $\sigma$-faithfulness that is sound and complete \citep{Forre2018}. 
It was also shown that, perhaps surprisingly, the PC and FCI algorithms are still sound and complete in that setting, although their output has to be interpreted differently \citep{Mooij2020b}. 

For the sake of simplicity, we will limit our attention in this paper to one of the simplest causal discovery algorithms, Local Causal Discovery (LCD) \citep{Cooper1997}. This algorithm is a straightforward and efficient search method to detect a specific causal structure from background knowledge and observational or experimental data. The algorithm looks for triples of variables $(C, X, Y)$ for which (a) $C$ is a variable that is not caused by $X$, and (b) the following (in)dependences hold: $C\dep X$, $X\dep Y$, and $C\indep Y\mid X$. Figure~\ref{fig:lcd triple} shows all acyclic directed mixed graphs\footnote{An \emph{acyclic directed mixed graph} (ADMG) is a triple $\tuple{V,E,F}$ where $V$ is a set of vertices, $E \subseteq \{ i \to j : i, j \in V \}$ a set of directed edges between vertices such that there are no directed cycles, and $F \subseteq \{ i \leftrightarrow j : i,j \in V : i \ne j \}$ is a set of bidirected edges between vertices.} that correspond to the LCD triple $(C, X, Y)$, assuming $d$-faithfulness and no selection bias. They all have in common that there is no bidirected edge between $X$ and $Y$, while there is a directed edge from $X$ to $Y$. Hence, we can conclude that $X$ causes $Y$ and the two variables are not confounded. The algorithm was proven to be sound in both the $\sigma$- and $d$-separation settings even when cycles may be present \citep{Mooij2020}.
Even though this algorithm is not as sophisticated as many other causal discovery algorithms, it suffices for our exposition of the pitfalls of attempting causal discovery on data generated by a perfectly adaptive dynamical system.

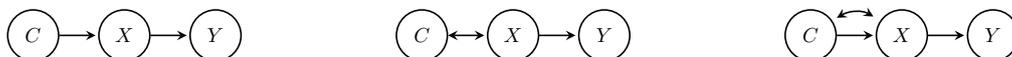
\begin{figure}[H]\centering
\begin{subfigure}[b]{0.33\textwidth}
	\centering
	\begin{tikzpicture}[->,>=stealth,shorten >=1pt,auto,node distance=1.5cm,
	semithick,square/.style={regular polygon,regular polygon sides=4}, scale=0.8,every node/.style={transform shape}]
	\node[state] (C) [] {$C$};
	\node[state] (X) [right of=C] {$X$};
	\node[state] (Y) [right of=X] {$Y$};	
	
	\path (C) edge node {} (X);
	\begin{scope}[]
	\path (X) edge node { } (Y);
	\end{scope}
	\end{tikzpicture}
\end{subfigure}%
\begin{subfigure}[b]{0.33\textwidth}
	\centering
	\begin{tikzpicture}[->,>=stealth,shorten >=1pt,auto,node distance=1.5cm,
	semithick,square/.style={regular polygon,regular polygon sides=4}, scale=0.8,every node/.style={transform shape}]
	\node[state] (C) [] {$C$};
	\node[state] (X) [right of=C] {$X$};
	\node[state] (Y) [right of=X] {$Y$};	
	
	\path (X) edge node {} (Y);
	\begin{scope}[style={<->}]
	\path ([yshift=0ex]C.east) edge[] node { } ([yshift=0ex]X.west);
	\end{scope}
	\end{tikzpicture}
\end{subfigure}%
\begin{subfigure}[b]{0.33\textwidth}
	\centering
	\begin{tikzpicture}[->,>=stealth,shorten >=1pt,auto,node distance=1.5cm,
	semithick,square/.style={regular polygon,regular polygon sides=4}, scale=0.8,every node/.style={transform shape}]
	\node[state] (C) [] {$C$};
	\node[state] (X) [right of=C] {$X$};
	\node[state] (Y) [right of=X] {$Y$};	
	
	\path (X) edge node {} (Y);
	\begin{scope}[]
	\path (C) edge node { } (X);
	\end{scope}
	\begin{scope}[style={<->}]
	\path ([yshift=2ex]C.east) edge[bend left] node { } ([yshift=2ex]X.west);
	\end{scope}
	\end{tikzpicture}
\end{subfigure}
  \caption{All acyclic directed mixed graphs that form an LCD triple $(C,X,Y)$, assuming $d$-faithfulness. In the absence of latent confounders and selection bias, the structure can only be that of the directed acyclic graph on the left.}
\label{fig:lcd triple}
\end{figure}

In this paper, we consider equilibrium distributions that are generated by dynamical models. The causal relations in an equilibrium model are defined through the effects of persistent interventions (i.e.\ interventions that are constant over time) on the equilibrium distribution, assuming that the system again converges to equilibrium. It has been shown that directed graphs encoding the conditional independences between endogenous variables in the equilibrium distribution of dynamical systems with feedback do not always have a straightforward and intuitive causal interpretation \citep{Iwasaki1994, Dash2005, Blom2020} (see also Appendix~\ref{app:causal interpretation mog}).
As a consequence, the output of causal discovery algorithms applied to equilibrium data of dynamical systems with feedback cannot always be interpreted causally in a na\"{i}ve way. In this work, we will show that this not only happens in isolated cases, but that this is actually a common phenomenon in a large class of models with perfectly adapting feedback mechanisms. 
In our opinion, a better understanding of how perfectly adapting feedback loops affect the causal interpretation of the conditional independence structure is a prerequisite for successful applications of causal discovery in fields like systems biology, where one often encounters perfect adaptivity. One way to arrive at such understanding is through the application of the causal ordering algorithm, the topic of the next section.

\subsection{Causal ordering}
\label{sec:background:causal ordering}

The causal ordering algorithm of Simon \cite{Simon1953} returns an ordering of endogenous variables that occur in a set of equations, given a specification of which variables are exogenous. The algorithm was recently reformulated so that the output is a causal ordering graph that encodes the generic effects of certain interventions and a Markov ordering graph that represents conditional independences (both under certain assumptions regarding the solvability of the equations) \citep{Blom2020}. We refer readers that are not yet familiar with the causal ordering algorithm to \citep{Blom2020} for a more extensive introduction to this approach. Here, we will provide only a brief description of the algorithm and discuss how its output should be interpreted.

First note that the structure of a set of equations and the variables that appear in them can be represented by a bipartite graph $\BG=\tuple{V,F,E}$, where vertices $F$ correspond to the equations and vertices $V$ correspond to the endogenous variables that appear in these equations. For each endogenous variable $v\in V$ that appears in an equation $f\in F$ there is an undirected edge $(v-f)\in E$. The output of the causal ordering algorithm is a directed cluster graph $\tuple{\V,\E}$, consisting of a partition $\V$ of the vertices $V\cup F$ into clusters, and edges $(v\to S)\in\E$ that go from vertices $v\in V$ to clusters $S\in \V$. Application of the causal ordering algorithm to a bipartite graph $\BG=\tuple{V,F,E}$ results in the directed cluster graph $\mathrm{CO}(\BG)=\tuple{\V,\E}$, which we will call the \emph{causal ordering graph} \citep{Blom2020}. For simplicity, we assume here that the bipartite graph has a \emph{perfect matching} (i.e.\ there exists a subset $M\subseteq E$ of the edges in the bipartite graph $\BG=\tuple{V,F,E}$ so that every vertex in $V \cup F$ is adjacent to exactly one edge in $M$).\footnote{Although the causal ordering algorithm has been extended to general bipartite graphs that may not have a perfect matching \citep{Blom2020}, we will always assume in this work that a perfect matching exists to keep the exposition simple.}
 The causal ordering graph is constructed by the following steps:\footnote{Here, we actually give the reformulation of the causal ordering algorithm by Nayak \cite{Nayak1995} based on the block triangular form of matrices in \citep{Pothen1990}. It was shown that the output of this algorithm is equivalent to that of the original causal ordering algorithm \citep{Blom2020}, but computationally more efficient than the original causal ordering algorithm \citep{Goncalves2016}. } 
\begin{enumerate}
	\item Find a perfect matching $M\subseteq E$ and let $M(S)$ denote the vertices in $V\cup F$ that are joined to vertices in $S\subseteq V\cup F$ by an edge in $M$.\label{alg:step 1}
	\item For each $(v-f)\in E$ with $v\in V$ and $f\in F$: if $(v-f)\in M$ orient the edge as $(v\leftarrow f)$ and if $(v-f)\notin M$ orient the edge as $(v\rightarrow f)$. Let $\G(\BG,M)$ denote the resulting directed bipartite graph. \label{alg:step 2}
  \item Partition vertices $V\cup F$ into strongly connected components $\V'$ of $\G(\BG,M)$. Create the cluster set $\V$ consisting of clusters $S\cup M(S)$ for each $S\in \V'$. For each edge $(v\to f)$ in $\G(\BG,M)$ add an edge $(v\to \mathrm{cl}(f))$ to $\E$ when $v\notin \mathrm{cl}(f)$, where $\mathrm{cl}(f)$ denotes the cluster in $\V$ that contains $f$. \label{alg:step 3}
  \item Exogenous variables appearing in the equations are added as singleton clusters to $\V$, with directed edges towards the clusters of the equations in which they appear in $\E$. \label{alg:step 4}
  \item Output the directed cluster graph $\mathrm{CO}(\BG) = \tuple{\V,\E}$.
\end{enumerate}

\begin{example}
	\label{ex:causal ordering}
	Consider the following set of equations with index set $F=\{f_1,f_2\}$ that contain endogenous variables with index set $V=\{v_1,v_2\}$:
	\begin{align}
	\label{eq:co ex1}
	f_1:\qquad & X_{v_1} - U_{w_1} =0, \\
	\label{eq:co ex2}
	f_2:\qquad & X_{v_2} + X_{v_1} - U_{w_2} =0,
	\end{align}
	where $U_{w_1}$ and $U_{w_2}$ are exogenous (random) variables indexed by $W=\{w_1,w_2\}$. Figure~\ref{fig:co example bipartite graph} shows the associated bipartite graph $\BG=\tuple{V,F,E}$. This graph has exactly one perfect matching $M=\{(v_1-f_1),(v_2-f_2)\}$, which is used in step \ref{alg:step 2} of the causal ordering algorithm to construct the directed graph $\G(\BG,M)$ in Figure~\ref{fig:co example oriented graph}. The causal ordering graph that is obtained after applying steps \ref{alg:step 3} and \ref{alg:step 4} of the causal ordering algorithm is given in Figure~\ref{fig:co example causal ordering graph}.
\end{example}

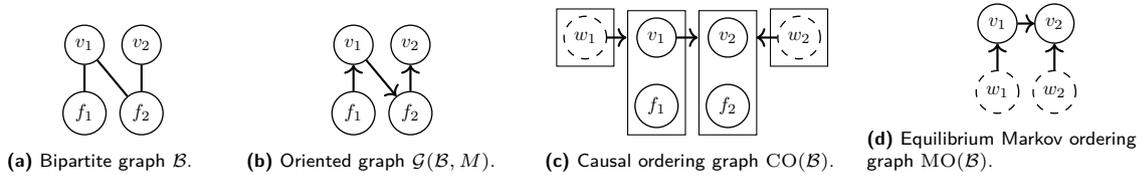
\begin{figure}[ht]
  \centering
	\begin{subfigure}[b]{0.18\textwidth}
		\centering
		\begin{tikzpicture}[scale=0.75,every node/.style={transform shape}] 
		\GraphInit[vstyle=Normal]
		\SetGraphUnit{1}
		\Vertex[L=$v_1$,x=0,y=0] {v1}
		\Vertex[L=$v_2$, x=1.0, y=0] {v2}
		\Vertex[L=$f_1$, x=0.0, y=-1.2] {f1}
		\Vertex[L=$f_2$, x=1.0, y=-1.2] {f2}
		
		\draw[EdgeStyle, style={-}](v1) to (f1);
		\draw[EdgeStyle, style={-}](v1) to (f2);
		\draw[EdgeStyle, style={-}](v2) to (f2);
		\end{tikzpicture}
		\caption{Bipartite graph $\BG$.}
		\label{fig:co example bipartite graph}
	\end{subfigure}\hfill
	\begin{subfigure}[b]{0.23\textwidth}
		\centering
		\begin{tikzpicture}[scale=0.75,every node/.style={transform shape}] 
		\GraphInit[vstyle=Normal]
		\SetGraphUnit{1}
		\Vertex[L=$v_1$,x=0,y=0] {v1}
		\Vertex[L=$v_2$, x=1.0, y=0] {v2}
		\Vertex[L=$f_1$, x=0.0, y=-1.2] {f1}
		\Vertex[L=$f_2$, x=1.0, y=-1.2] {f2}
		
		\draw[EdgeStyle, style={<-}](v1) to (f1);
		\draw[EdgeStyle, style={->}](v1) to (f2);
		\draw[EdgeStyle, style={<-}](v2) to (f2);
		\end{tikzpicture}
    \caption{Oriented graph $\G(\BG,M)$.}
		\label{fig:co example oriented graph}
	\end{subfigure}\hfill
	\begin{subfigure}[b]{0.25\textwidth}
		\centering
		\begin{tikzpicture}[scale=0.75,every node/.style={transform shape}] 
		\GraphInit[vstyle=Normal]
		\SetGraphUnit{1}
		\Vertex[L=$w_1$,style={dashed},x=-1.25,y=0] {w1}
		\Vertex[L=$v_1$,x=0,y=0] {v1}
		\Vertex[L=$v_2$, x=1.25, y=0] {v2}
		\Vertex[L=$f_1$, x=0.0, y=-1.2] {f1}
		\Vertex[L=$f_2$, x=1.25, y=-1.2] {f2}
		\Vertex[L=$w_2$,style={dashed},x=2.5,y=0] {w2}
		
		\node[draw=black, fit=(w1), inner sep=0.09cm ]{};
		\node[draw=black, fit=(v1) (f1), inner sep=0.09cm ]{};
		\node[draw=black, fit=(v2) (f2), inner sep=0.09cm ]{};
		\node[draw=black, fit=(w2), inner sep=0.09cm ]{};
		
		\draw[EdgeStyle, style={->}] (w1) to (-0.525,0.0);
		\draw[EdgeStyle, style={->}] (v1) to (0.725,0.0);
		\draw[EdgeStyle, style={->}] (w2) to (1.75,0.0);
		\end{tikzpicture}
    \caption{Causal ordering graph $\mathrm{CO}(\BG)$.}
		\label{fig:co example causal ordering graph}
	\end{subfigure}\hfill
	\begin{subfigure}[b]{0.27\textwidth}
		\centering
		\begin{tikzpicture}[scale=0.75,every node/.style={transform shape}] 
		\GraphInit[vstyle=Normal]
		\SetGraphUnit{1}
		\Vertex[L=$v_1$,x=0,y=0] {v1}
		\Vertex[L=$v_2$, x=1.0, y=0] {v2}
		\Vertex[L=$w_1$, x=0.0, y=-1.2, style={dashed}] {f1}
		\Vertex[L=$w_2$, x=1.0, y=-1.2, style={dashed}] {f2}
		
		\draw[EdgeStyle, style={<-}](v1) to (f1);
		\draw[EdgeStyle, style={->}](v1) to (v2);
		\draw[EdgeStyle, style={<-}](v2) to (f2);
		\end{tikzpicture}
    \caption{Equilibrium Markov ordering graph $\mathrm{MO}(\BG)$.}
		\label{fig:co example markov ordering graph}
	\end{subfigure}%
  \caption{Several graphs occurring in Example~\ref{ex:causal ordering}. The bipartite graph $\BG$ associated with equations \eqref{eq:co ex1} and \eqref{eq:co ex2} is given in Figure~\ref{fig:co example bipartite graph}. The oriented graph $\G(\BG,M)$ obtained in step \ref{alg:step 2} of the causal ordering algorithm (with perfect matching $M$) is shown in Figure~\ref{fig:co example oriented graph}. The causal ordering graph $\mathrm{CO}(\BG)$ is given in Figure~\ref{fig:co example causal ordering graph}. The corresponding equilibrium Markov ordering graph $\mathrm{MO}(\BG)$ is displayed in Figure~\ref{fig:co example markov ordering graph}.}
	\label{fig:co example}
\end{figure}

Throughout this work, we will assume that sets of equations are \emph{uniquely solvable with respect to the causal ordering graph}, which means that for each cluster, the equations in that cluster can be solved uniquely for the endogenous variables in that cluster (see Definition 14 in \citep{Blom2020} for details).
This implies amongst others that the endogenous variables in the model can be solved uniquely along a topological ordering of the causal ordering graph. Under this assumption, the causal ordering graph represents the effects of soft and certain perfect interventions \citep{Blom2020}. \emph{Soft interventions} target equations; they do not change which variables appear in the targeted equation and may only alter the parameters or functional form of the equation. \emph{Perfect interventions} target clusters in the causal ordering graph and replace the equations in the targeted cluster with equations that set the variables in the cluster equal to constant values.  A soft intervention on an equation or a perfect intervention on a cluster has no effect on a variable in the causal ordering graph whenever there is no directed path to that variable from the intervention target (i.e.\ the targeted equation or an arbitrary vertex in the targeted cluster, respectively), see Theorems 20 and 23 in \citep{Blom2020}.\footnote{We say that there is a \emph{directed path} from vertex $x$ to vertex $y$ in a directed cluster graph $\tuple{\V,\E}$ if either $\mathrm{cl}(x)=\mathrm{cl}(y)$ or there is a sequence of clusters $V_1=\mathrm{cl}(x),V_2,\ldots, V_{k-1}, V_k=\mathrm{cl}(y)$ so that for all $i\in\{1,\ldots,k-1\}$ there is a vertex $z_i\in V_i$ such that $(z_i\to V_{i+1})\in\mathcal{E}$.}
Since the equations in Example~\ref{ex:causal ordering} are uniquely solvable w.r.t.\ the causal ordering graph in Figure~\ref{fig:co example causal ordering graph},\footnote{Indeed, we can solve equation $f_1$ uniquely for $X_{v_1}$ in terms of $U_{w_1}$ resulting in $X_{v_1} = U_{w_1}$, and we can solve equation $f_2$ uniquely for $X_{v_2}$ in terms of $X_{v_1}$ and $U_{w_2}$ resulting in $X_{v_2} = U_{w_2} - X_{v_1}$.} we can for example read off from the causal ordering graph that a soft intervention targeting $f_1$ may have an effect on $X_{v_2}$ (since there is a directed path from $f_1$ to $v_2$), and that a perfect intervention targeting the cluster $\{v_2,f_2\}$ has no effect on $X_{v_1}$ (as there is no directed path from the cluster $\{v_2,f_2\}$ to $v_1$).

Given the probability distribution of the exogenous random variables, one obtains a unique probability distribution on all the variables under the assumption of unique solvability w.r.t.\ the causal ordering graph. The \emph{Markov ordering graph} is a directed acyclic graph $\mathrm{MO}(\BG)$ such that $d$-separations in this graph imply corresponding conditional independences between variables in this joint distribution, provided that all exogenous random variables are independent \citep{Blom2020}. The Markov ordering graph is obtained from a causal ordering graph $\mathrm{CO}(\BG)=\tuple{\V,\E}$ by constructing a graph $\tuple{V', E'}$ with vertices $V'=V$ and edges $(v\to w)\in E'$ if and only if $(v\to \mathrm{cl}(w))\in \mathcal{E}$. 
The Markov ordering graph for the set of equations in Example~\ref{ex:causal ordering} is given in Figure~\ref{fig:co example markov ordering graph}. The $d$-separations in this graph imply conditional independences between the corresponding variables under the assumption that $U_{w_1}$ and $U_{w_2}$ are independent. For instance, since $v_1$ and $w_2$ are $d$-separated we can read off from the Markov ordering graph that $X_{v_1}$ and $U_{w_2}$ must be independent.

Assuming that the probability distribution is $d$-faithful to the Markov ordering graph and that we have a conditional independence oracle, we know that the output of the PC algorithm represents the Markov equivalence class of the Markov ordering graph \citep{Mooij2020b}.\footnote{More generally, if only a subset of the variables is observed, the output of the FCI algorithm represents the  Markov equivalence class of the marginalization (latent projection) of the Markov ordering graph.} However, while for acyclic systems, the directed edges in the Markov ordering graph can also be interpreted as direct causal relations, this is not the case for all systems \citep{Blom2020,BlomMooij_UAI_22}. Several examples of this phenomenon are provided in Appendix~\ref{app:causal interpretation mog}. In this work we will show that such discrepancy between the causal and the Markov structure is actually a common feature of perfectly adapted dynamical systems at equilibrium. 

\subsection{Related work}
\label{sec:background:related work}

The causal ordering algorithm can be applied, for instance, to the equilibrium equations of a dynamical system to uncover its causal properties and conditional independence relations at equilibrium. The relationship between dynamical models and causal models has received much attention over the years. For instance, the works of \citep{Fisher1970, Iwasaki1994, Voortman2010,Sokol2014,Rubenstein2018, Bongers2018, Mogensen2018} considered causal relations in dynamical systems that are not at equilibrium, while \citep{Iwasaki1994,Dash2005,Mooij2013, Hyttinen2012, Lauritzen2002, Lacerda2008,Mooij2012,Bongers2018, Blom2019,BlomMooij_UAI_22} considered graphical and causal models that arise from studying the stationary behavior of dynamical models. In particular, extensions of the causal ordering algorithm for dynamical systems were proposed and discussed in \citep{Iwasaki1994}. Also, the causal ordering algorithm was applied in \citep{BlomMooij_UAI_22} to study the robustness of model predictions when combining two systems. The relationship between the causal semantics of a dynamical system before it reaches equilibrium and at equilibrium has also been studied \citep{Dash2005,Mooij2013,Bongers2018}. 

In previous work, researchers have noted various problems when attempting to use a single graphical model to represent both conditional independence properties and causal properties of certain dynamical systems at equilibrium \citep{Dash2005, Lacerda2008, Lauritzen2002, Dawid2010, Blom2019}. Often, restrictive assumptions on the underlying dynamical models are made to avoid these subtleties---the most common one being to exclude the possibility of cycles altogether. In this work, we will not make such restrictive assumptions and instead show that such problems are pervasive in the important class of perfectly adapted dynamical systems. We follow \citep{Blom2020,BlomMooij_UAI_22} in addressing the issues by using the causal ordering algorithm to construct separate graphical representations for the causal properties and conditional independence relations implied by these systems. 

It has been shown that the popular SCM framework \citep{Bongers2020, Pearl2000} is not flexible enough to fully capture the causal semantics (in terms of perfect interventions targeting variables) of the dynamics of a basic enzyme reaction at equilibrium, and for that purpose \citep{Blom2019} proposed to use Causal Constraints Models (CCMs) instead. However, CCMs lack a graphical representation (and consequently, all the benefits that usually come with it, like a Markov property and a graphical approach to causal reasoning). The techniques in \citep{Blom2020} can also be used to construct graphical representations of causal relations and conditional independences of these models. In Appendix~\ref{app:rewriting equations}, we demonstrate that the basic enzyme reaction is perfectly adapting and we show how the causal ordering technique can be used to obtain graphical presentations and a Markov property for this model. This approach offers several advantages over the CCMs approach to model this system.

An application area where obtaining a causal understanding of complex systems is often non-trivial due to feedback and perfect adaptation is that of systems biology. 
A research question that has seen considerable interest in that field is which reaction networks can achieve perfect adaptation \citep{Araujo2018, Muzzey2009, Ferrell2016, Ma2009, Krishnan2019}.
The present work provides a method that facilitates the analysis of perfectly adapted dynamical systems by providing a principled and computationally efficient procedure to identify perfect adaptation either from model equations or from experimental data and background knowledge.

In Section~\ref{sec:application}, we apply our methodology in an attempt to arrive at a better understanding of the causal mechanisms present in protein signalling networks. 
For protein signalling networks, apparent `causal reversals' have been reported, that is, cases where causal discovery algorithms find the opposite causal relation of what is expected \citep{Triantafillou2017, Mooij2020, Mooij2013b, Ramsey2018, Boeken2020}. 
One explanation for these reversed edges in the output of causal discovery algorithm could be the unknown presence of measurement error \citep{Blom2018}. 
As we demonstrate in this work, unknown feedback loops that result in perfect adaptation can be another reason why one might find reversed causal relations when applying causal discovery algorithms to biological data. 

\section{Perfect adaptation}
\label{sec:perfect adaptation}

In this section, we introduce the notion of perfect adaptation by taking a close look at several examples of dynamical systems that can achieve perfect adaptation. We then consider graphical representations of these systems both before and after they have reached equilibrium. This will set the stage for our main theoretical results regarding the identification of perfect adaptation in models or data, which will be presented in Section~\ref{sec:perfect adaptation:identification}.
The goals of this section are: (i) building intuition about mechanisms that result in perfect adaptation, (ii) outlining the relevance of this phenomenon in application domains, and (iii) explaining how the causal ordering algorithm helps to understand perfect adaptation.

The ability of (part of) a system to converge to its original state when a constant and persistent external stimulus is added or changed is referred to as \emph{perfect adaptation}. If the adaptive behavior does not depend on the precise setting of the system's parameters then we say that the adaptation is \emph{robust}. For our purposes, the most interesting of the two is robust perfect adaptation. As this is also often simply referred to as `perfect adaptation' in the literature, we will do so here as well.

\subsection{Examples}
\label{sec:perfect adaptation:examples}

We consider three different dynamical models corresponding to a filling bathtub, a viral infection with an immune response, and a chemical reaction network. We use simulations to demonstrate that all of these systems are capable of achieving perfect adaptation. The details of the simulations presented in this section are provided in Appendix~\ref{app:perfect adaptation simulations}, and code to reproduce these is provided by the authors under a free and open source license (see Data availability statement for details).

\begin{figure}[ht]\centering
	\begin{subfigure}{0.25\textwidth}
		\centering
    \includegraphics[width=\textwidth]{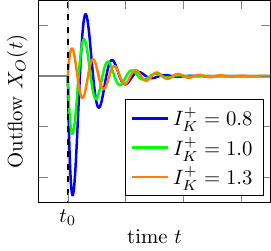}
    {\small $$I_K(t) = \begin{cases}
      I_K^- & t < t_0 \\
      I_K^+ & t \ge t_0 \\
    \end{cases}$$}
		\caption{Filling bathtub model.}
		\label{fig:adaptation:bathtub}
	\end{subfigure}%
  \qquad\qquad
	\begin{subfigure}{0.25\textwidth}
		\centering
		\includegraphics[width=\textwidth]{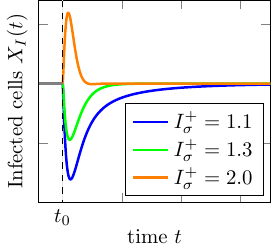}
    {\small $$I_\sigma(t) = \begin{cases}
      I_\sigma^- & t < t_0 \\
      I_\sigma^+ & t \ge t_0 \\
    \end{cases}$$}
		\caption{Viral infection model.}
		\label{fig:adaptation:viral infection}
	\end{subfigure}%
	\qquad\qquad
	\begin{subfigure}{0.25\textwidth}
		\centering
		\includegraphics[width=\textwidth]{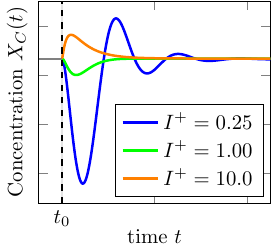}
    {\small $$I(t) = \begin{cases}
      I^- & t < t_0 \\
      I^+ & t \ge t_0 \\
    \end{cases}$$}
		\caption{Reaction network model.}
		\label{fig:adaptation:NFBLB}
	\end{subfigure}
  \caption{Simulations of the outflow rate $X_O(t)$ in the bathtub model (\subref{fig:adaptation:bathtub}), the amount of infected cells $X_I(t)$ in the viral infection model (\subref{fig:adaptation:viral infection}), and the concentration $X_C(t)$ in the biochemical reaction network with a negative feedback loop (\subref{fig:adaptation:NFBLB}). The input signal suddenly changes from an (constant) value for $t < t_0$ to a different (constant) value for $t \ge t_0$. The timing of this change is indicated by a vertical dashed line. The three systems started with input signals $I_K^-=1.2$, $I_\sigma^-=1.6$, and $I^-=1.5$, respectively. After a transient response, $X_O(t), X_I(t)$, and $X_C(t)$ all converge to their original equilibrium value (i.e.,\ they perfectly adapt to the input signal).}
	\label{fig:adaptation}
\end{figure}

\subsubsection{Filling bathtub}
\label{sec:perfect adapatation:examples:bathtub}

We consider the example of a filling bathtub of Iwasaki \& Simon \cite{Iwasaki1994} (see also \citep{Dash2005,Blom2020}). Let $I_K(t)$ be an input signal\footnote{In this work, `input signal' will always refer to an exogenous process, that is, a variable whose value may depend on time and is determined by some mechanism that is external to the system. In particular, it must not be causally influenced by the system.} that represents the size of the drain in the bathtub. The inflow rate $X_I(t)$, water level $X_D(t)$, water pressure $X_P(t)$, and outflow rate $X_O(t)$ are modelled by the following static and dynamic equations:\footnote{With the term `static equation' we refer to equations that do not contain any time derivatives. In this article, the term `dynamic equation' always corresponds to a first-order differential equation.}
\begin{align}
X_{I}(t) &= U_I,\label{eq:bathtub:dyn I}\\
\dot{X}_D(t) &= U_1(X_{I}(t)-X_{O}(t)),\label{eq:bathtub:dyn D}\\
\dot{X}_P(t) &= U_2(g\,U_3X_D(t)-X_P(t)),\label{eq:bathtub:dyn P}\\
\dot{X}_O(t) &= U_4(U_5 I_K(t) X_P(t)-X_{O}(t)),\label{eq:bathtub:dyn O}
\end{align}
where $g$ is the gravitational acceleration, and $U_I,U_1,U_2,U_3,U_4,U_5$ are independent exogenous random variables taking value in $\R_{>0}$. Let $X_D, X_P$, and $X_O$ denote the respective equilibrium solutions for the water level, water pressure, and outflow rate. The equilibrium equations associated with this model can easily be constructed by setting the time derivatives equal to zero and assuming the input signal $I_K(t)$ to have a constant value $I_K$:
\begin{align}
f_I:\qquad& X_{I} - U_I = 0, \label{eq:bathtub:eq I}\\
f_D:\qquad& U_1(X_{I}-X_{O}) = 0, \label{eq:bathtub:eq D}\\
f_P:\qquad& U_2(g\,U_3X_D-X_P) = 0, \label{eq:bathtub:eq P}\\
f_O:\qquad& U_4(U_5 I_K X_P-X_{O}) = 0.\label{eq:bathtub:eq O}
\end{align}
We call the labelling $f_D, f_P, f_O$ that we choose for the equilibrium equations that are constructed from the time derivatives the \emph{natural labelling} for this dynamical system.\footnote{In general, the natural labeling uses the same index $i$ to label as $f_i$ the equilibrium equation $0 = h_i(X(t))$ obtained from the dynamic equation $\dot{X}_{v_i}(t) = h_i(X(t))$ that models the dynamics of variable $X_{v_i}$ \citep{Mooij2013}.} A solution $(X_I, X_D, X_P, X_O)$ to the system of equilibrium equations satisfies $X_I=U_I$ and $X_O=X_I$ almost surely. From this we conclude that, at equilibrium, the outflow rate is independent of the size of the drain $I_K$, assuming that $U_I$ is independent of $I_K$. We recorded the changes in the system after we changed the input signal $I_K$ of the bathtub system in equilibrium. The results in Figure~\ref{fig:adaptation:bathtub} show that the outflow rate $X_O$ has a transient response to changes in the input signal $I_K$,\footnote{More precisely, when we speak of a `response' to an input signal, what we mean is that if we have two identical copies of a system, and from some point in time $t_0$ on, we change the input signal of one of the two copies, the endogenous response variables differ in distribution at some later time $t > t_0$.} but it eventually converges to its original value. We say that the outflow rate $X_O$ in the bathtub model \emph{perfectly adapts} to changes in $I_K$.

\subsubsection{Viral infection model}
\label{sec:perfect adaptation:examples:viral infection}

We consider the example of a simple dynamical model for a viral infection and immune response of De Boer \cite{Boer2012} (also discussed in \citep{BlomMooij_UAI_22}). The model describes target cells $X_T(t)$, infected cells $X_I(t)$, and an immune response $X_E(t)$. We will treat $I_{\sigma}(t)$ as an exogenous input signal that represents the production rate of target cells. The system is defined by the following dynamic equations:
\begin{alignat}{3}
\label{eq:simple T}
\dot{X}_T(t) &= I_{\sigma}(t) - d_T X_T(t) - \beta X_T(t) X_I(t), \\
\label{eq:simple I}
\dot{X}_I(t) &= (\beta X_T(t) - d_I - k X_E(t)) X_I(t),\\
\label{eq:simple E}
\dot{X}_E(t) &= (a X_I(t) - d_E) X_E(t).
\end{alignat}
Here, $\beta=\frac{bp}{c}$ where $b$ is the infection rate, $p$ is the number of virus particles produced per infected cell, and $c$ is the clearance rate of viral particles. Furthermore, $d_T$ is the death rate of target cells, $a$ is an activation rate, $d_E$ and $d_I$ are turnover rates and $k$ is a mass-action killing rate. We assume that $a, k$ are constants and that $d_T$, $d_I$, $d_E$, and $\beta$ are independent exogenous random variables. We use the natural labelling for the equilibrium equations that are constructed from the differential equations:\footnote{Following \citep{Boer2012}, we are only interested in strictly positive solutions of this dynamical system. Therefore, we use the equilibrium equation $f_I$ instead of $(f\beta X_T - d_I - k X_E) X_I = 0$ and $f_E$ instead of $(a X_I - d_E) X_E=0$.}
\begin{alignat}{3}
f_T:\qquad& I_{\sigma} - d_T X_T - \beta X_T X_I = 0, \\
\label{eq:eq I}
f_I:\qquad& \beta X_T - d_I - k X_E = 0,\\
\label{eq:eq E}
f_E:\qquad& a X_I - d_E = 0,
\end{alignat}
assuming a constant value $I_{\sigma}$ of the input signal. We initialized the model in an equilibrium state and simulated the response of the model after changing the input signal $I_{\sigma}$ to three different values. Figure~\ref{fig:adaptation:viral infection} shows that the amount of infected cells $X_I(t)$ has a transient response to a change in the input signal $I_\sigma$, but then returns to its original value. Hence, the amount of infected cells perfectly adapts to changes in the production rate of target cells.

\subsubsection{Reaction networks with a negative feedback loop}
\label{sec:perfect adaptation:examples:negative feedback loop}

The phenomenon of perfect adaptation is a common feature in biochemical reaction networks and there exist many reaction networks that can achieve (near) perfect adaptation \citep{Araujo2018, Ferrell2016}. For networks consisting of only three nodes, Ma \emph{et al.} \cite{Ma2009} found by an exhaustive search that there exist two major classes of reaction networks that produce (robust) adaptive behavior. The reaction diagrams for these networks are given in Figure~\ref{fig:robust network topologies}. Here we will only analyze the `Negative Feedback with a Buffer Node' (NFBN) network. An analysis of the other network, the `Incoherent Feed-forward Loop with a Proportioner Node' (IFFLP), is provided in Appendix~\ref{app:ifflp network}. The NFBN system can be described by the following first-order differential equations:
\begin{align}
\dot{X}_A(t) &= I(t) k_{IA} \frac{(1-X_A(t))}{K_{IA} + (1-X_A(t))} - F_A k_{F_A A} \frac{X_A(t)}{K_{F_A A} + X_A(t)}, \\
  \dot{X}_B(t) &= X_C(t) k_{CB} \frac{(1-X_B(t))}{K_{CB} + (1-X_B(t))} - F_B k_{F_B B} \frac{X_B(t)}{K_{F_B B} + X_B(t)}, \label{eq:buffer exact}\\
\dot{X}_C(t) &= X_A(t) k_{AC} \frac{(1-X_C(t))}{K_{AC} + (1-X_C(t))} - X_B(t) k_{BC} \frac{X_C(t)}{K_{BC} + X_C(t)},
\end{align}
where $X_A(t)$, $X_B(t)$, $X_C(t)$ are concentrations of three compounds $A$, $B$, and $C$, while $I(t)$ represents an external input into the system. Assume that $k_{IA}$, $k_{CB}$, and $k_{AC}$ are independent exogenous random variables, that we will denote as $U_A$, $U_B$, $U_C$ respectively, and that the other parameters are constants. Perfect adaptation is achieved under saturation conditions \citep{Ma2009}, $(1-X_B(t))\gg K_{CB}$ and $X_B(t)\gg K_{F_B B}$, in which case the following approximation can be made:
\begin{align}
\label{eq:buffer approx}
\dot{X}_B(t) &\approx X_C(t) k_{CB} - F_B k_{F_B B}.
\end{align}
Under the assumption that $I(t)$ has a constant value, the system converges to an equilibrium. We will denote the equilibrium equations that are associated with the time derivatives $\dot{X}_A(t)$ and $\dot{X}_C(t)$ using the natural labelling $f_A$ and $f_C$. The equilibrium equation $f_B$ is obtained by setting the approximation of the time derivative $\dot{X}_B(t)$ in \eqref{eq:buffer approx} equal to zero. We initialized this model in an equilibrium state and then simulated its response after changing the input signal $I$ to three different values. Figure~\ref{fig:adaptation:NFBLB} shows that $X_C(t)$ perfectly adapts to changes in the input signal $I$.

\begin{figure}[ht]
  \centering
	\begin{subfigure}[b]{0.35\textwidth}
		\centering
		\begin{tikzpicture}[scale=0.7,every node/.style={transform shape}] 
		\node[style={rectangle, very thick}] (I) {Input};
		\node[style={circle, draw=black, very thick}] (A) [left=0.5cm of I] {$A$};
		\node[style={circle, draw=black, very thick}] (C) [below=0.5cm of A] {$C$};
		\node[style={circle, draw=black, very thick}] (B) [left=0.75cm of A] {$B$};
		\node[style={rectangle, very thick}] (O) [right=0.5 cm of C] {Output};
		\node[style={rectangle, very thick}] (D) [left=0.75cm of B] {};
		\path
		(I) edge [style={->, very thick}] node [] {} (A)
		(A) edge [style={->, very thick}] node [] {} (C)
		(C) edge [style={->, very thick}] node [] {} (O)
		(B) edge [bend right, style={-o, very thick}] node [] {} (C)
		(C) edge [bend right, style={->, very thick}, color={orange}] node [] {} (B)
		(D) edge [style={-o, dashed, very thick}, color={orange}] node [] {} (B);
		\end{tikzpicture}
		\caption{Negative feedback with a buffer node.}
		\label{fig:NFBLB}
	\end{subfigure}\qquad
\begin{subfigure}[b]{0.45\textwidth}
		\centering
		\begin{tikzpicture}[scale=0.7,every node/.style={transform shape}] 
		\node[style={rectangle, very thick}] (I) {Input};
		\node[style={circle, draw=black, very thick}] (A) [left=0.5cm of I] {$A$};
		\node[style={circle, draw=black, very thick}] (C) [below=0.5cm of A] {$C$};
		\node[style={circle, draw=black, very thick}] (B) [left=0.75cm of A] {$B$};
		\node[style={rectangle, very thick}] (O) [right=0.5 cm of C] {Output};
		\node[style={rectangle, very thick}] (D) [left=0.75cm of B] {};
		\path
		(I) edge [style={->, very thick}] node [] {} (A)
		(A) edge [style={->, very thick}] node [] {} (C)
		(C) edge [style={->, very thick}] node [] {} (O)
		(B) edge [bend right, style={-o, very thick}] node [] {} (C)
		(A) edge [style={->, very thick}, color={orange}] node [] {} (B)
		(D) edge [style={-o, dashed, very thick}, color={blue}] node [] {} (B);
		\end{tikzpicture}
		\caption{Incoherent feed-forward loop with a proportioner node.}
		\label{fig:IFFLP}
	\end{subfigure}
	\caption{The two reaction networks that can achieve perfect adaptation \citep{Ma2009}. Figure~\ref{fig:NFBLB} shows Negative Feedback with a Buffer Node (NFBN), while Figure~\ref{fig:IFFLP} shows an Incoherent Feed-forward Loop with a Proportioner Node (IFFLP). Orange edges represent saturated reactions, blue edges represent linear reactions, and black edges are unconstrained reactions. Arrowheads represent positive influence and edges ending with a circle represent negative influence.}
	\label{fig:robust network topologies}
\end{figure}
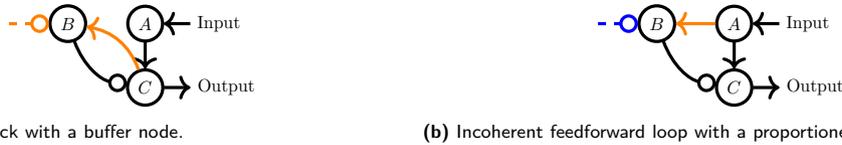

\subsection{Graphical representations}
\label{sec:perfect adaptation:graphical representations}

We will now provide the different graphical representations of the perfectly adapted dynamical systems that were introduced in the previous section. These representations are based on the graphs that are used in \citep{Iwasaki1994, Blom2020} to encode the equilibrium structure of equations, causal relations, and conditional independences. The main difference with previous work is that we also explicitly consider similar graphical representations for systems that have not yet reached equilibrium.\\

\noindent
\textit{Bipartite graph}: The \emph{equilibrium bipartite graphs} associated with the equilibrium equations of the filling bathtub, the viral infection, and the reaction network with feedback are given in Figures \ref{fig:equilibrium bipartite graph:bathtub}, \ref{fig:equilibrium bipartite graph:viral infection}, and \ref{fig:equilibrium bipartite graph:reaction network}, respectively. We have added also a node representing the input signal. The \emph{dynamic bipartite graphs} for the dynamics of these models are constructed from first-order differential equations in canonical form\footnote{A set of first-order differential equations is said to be in \emph{canonical form} if they are of the form $\dot{X}_i(t) = h_i(X_1(t),\dots,X_n(t))$ with $i=1,\dots,n$. That is, each differential equation has a single derivative on the left-hand side and a function of the variables (without any derivatives) on the right-hand side.} in the following way. Both the derivative $\dot{X}_i(t)$ and the corresponding variable $X_i(t)$ are associated with the same vertex $v_i$. The natural labelling is used for the differential equations, so that a vertex $g_i$ is associated with the differential equation for $\dot{X}_i(t)$. We then construct the dynamic bipartite graph $\BG_{\mathrm{dyn}}=\tuple{V,F,E}$ with variable vertices $v_i \in V$ and the corresponding dynamical equation vertices $g_i\in F$. Additional static equation vertices $f_i\in F$ are added as well in case the dynamical system consists of a combination of dynamic and static equations. The edge set $E$ has an edge $(v_i - f_j)$ whenever $X_i(t)$ appears in the static equation $f_j$. Additionally, there are edges $(v_i - g_j)$ whenever $X_i(t)$ or $\dot{X}_i(t)$ appears in the dynamic equation $g_j$ (which includes the cases $i=j$ due to the natural labelling used). The dynamic bipartite graphs for the bathtub model, the viral infection, and the reaction network with feedback are given in Figures \ref{fig:dynamic bipartite graph:bathtub}, \ref{fig:dynamic bipartite graph:viral infection}, and \ref{fig:dynamic bipartite graph:reaction network}, respectively.

Comparing the equilibrium bipartite graphs with the dynamic bipartite graphs we note that there is no edge $(v_D-f_D)$ in Figure~\ref{fig:equilibrium bipartite graph:bathtub} while $(v_D - g_D)$ is present in Figure~\ref{fig:dynamic bipartite graph:bathtub}. This is a direct consequence of the fact that the time derivative $\dot{X}_D(t)$ in equation~\eqref{eq:bathtub:dyn D} does not depend on $X_D(t)$ itself. Similarly, the edges $(v_I-f_I)$ and $(v_E-f_E)$ are not present in Figure~\ref{fig:equilibrium bipartite graph:viral infection} whilst the edges $(v_I-g_I)$ and $(v_E-g_E)$ are present in Figure~\ref{fig:dynamic bipartite graph:viral infection}. In this case, we see that even though the time derivatives $\dot{X}_I(t)$ and $\dot{X}_E(t)$ depend on $X_I(t)$ and $X_E(t)$ in differential equations \eqref{eq:simple I} and \eqref{eq:simple E}, these variables do not appear in the associated equilibrium equations \eqref{eq:eq I} and \eqref{eq:eq E}. Finally, there is no edge $(v_B - f_B)$ in Figure~\ref{fig:equilibrium bipartite graph:reaction network} while the edge $(v_B-g_B)$ is present in Figure~\ref{fig:dynamic bipartite graph:reaction network}. Here, the variable $X_B(t)$ does not appear in the equilibrium equation under saturation conditions \eqref{eq:buffer approx} that stems from the dynamic equation \eqref{eq:buffer exact} for $\dot{X}_B(t)$. The equilibrium bipartite graph can be compared to the dynamic bipartite graph to read off structural differences between the equations before and after equilibrium has been reached.\\

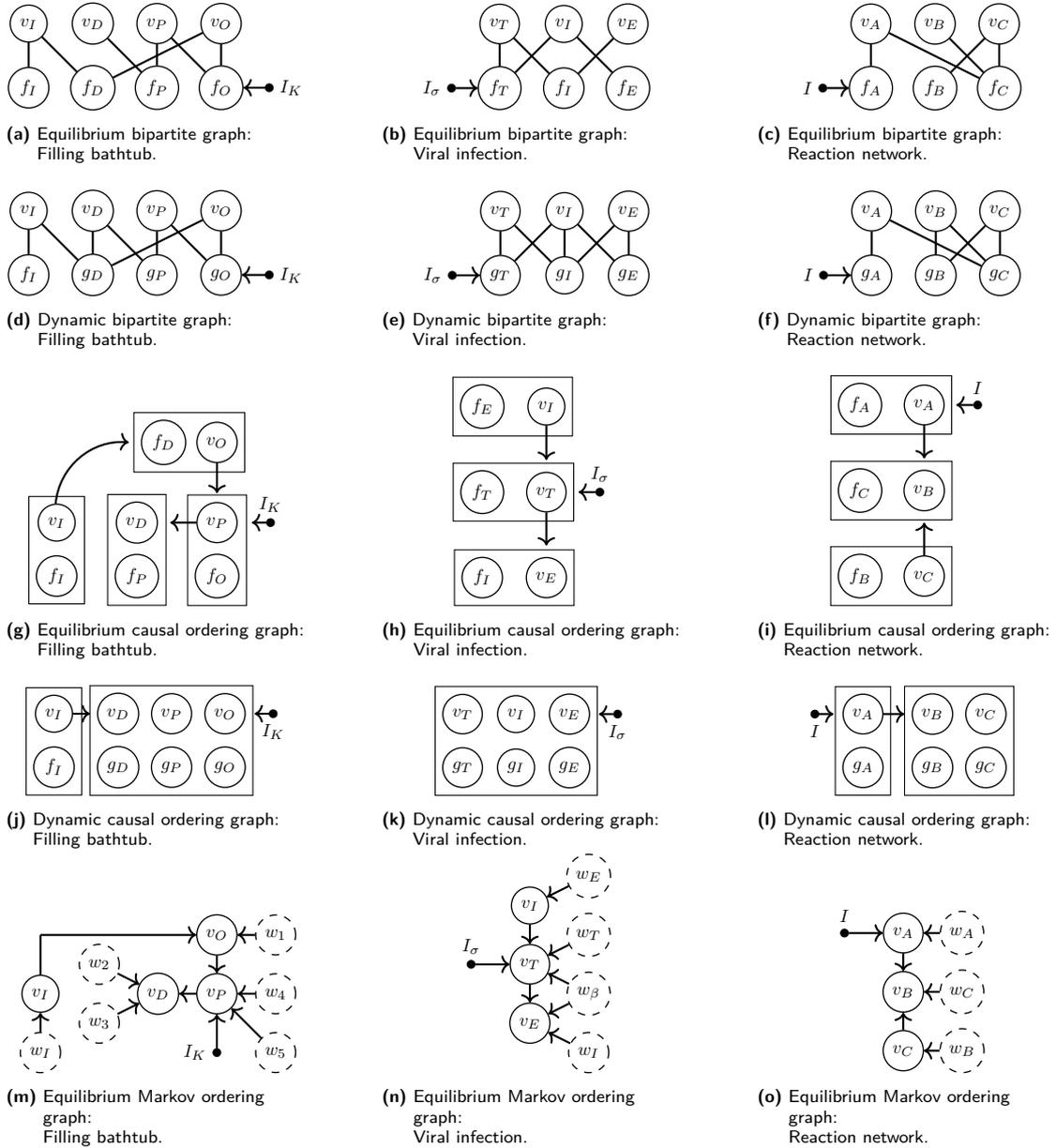
\begin{figure}[tp]\centering
	\captionsetup[subfloat]{margin=10pt,format=hang,singlelinecheck=false,justification=RaggedRight}
	\begin{subfigure}[b]{0.32\textwidth}
		\centering
		\vspace*{3mm}
		\begin{tikzpicture}[scale=0.75,every node/.style={transform shape}]
		\GraphInit[vstyle=Normal]
		\SetGraphUnit{1}
		\Vertex[L=$v_I$,x=0,y=0] {vI}
		\Vertex[L=$v_D$, x=1.2, y=0] {vD}
		\Vertex[L=$v_P$, x=2.4, y=0] {vP}
		\Vertex[L=$v_O$, x=3.6, y=0] {vO}
		\Vertex[L=$f_I$, x=0.0, y=-1.2] {fI}
		\Vertex[L=$f_D$, x=1.2, y=-1.2] {fD}
		\Vertex[L=$f_P$, x=2.4, y=-1.2] {fP}
		\Vertex[L=$f_O$, x=3.6, y=-1.2] {fO}
		
		\begin{scope}[VertexStyle/.append style = {minimum size = 4pt, 
			inner sep = 0pt,
			color=black}]
		\Vertex[x=4.5, y=-1.2, L=$I_K$, Lpos=0, LabelOut]{K}
		\end{scope}
		\draw[EdgeStyle, style={->}](K) to (fO);

		\draw[EdgeStyle, style={-}](vI) to (fI);
		\draw[EdgeStyle, style={-}](vP) to (fP);
		\draw[EdgeStyle, style={-}](vO) to (fO);
		
		\draw[EdgeStyle, style={-}](vI) to (fD);
		\draw[EdgeStyle, style={-}](vD) to (fP);
		\draw[EdgeStyle, style={-}](vP) to (fO);
		\draw[EdgeStyle, style={-}](vO) to (fD);
		\end{tikzpicture}
		\caption{Equilibrium bipartite graph:\\ Filling bathtub.}
		\label{fig:equilibrium bipartite graph:bathtub}
	\end{subfigure}%
	\begin{subfigure}[b]{0.32\textwidth}
		\centering
		\begin{tikzpicture}[scale=0.75,every node/.style={transform shape}]
		\GraphInit[vstyle=Normal]
		\SetGraphUnit{1}
		\Vertex[L=$v_T$,x=0,y=0] {vT}
		\Vertex[L=$v_I$, x=1.2, y=0] {vI}
		\Vertex[L=$v_E$, x=2.4, y=0] {vE}
		\Vertex[L=$f_T$, x=0.0, y=-1.2] {fT}
		\Vertex[L=$f_I$, x=1.2, y=-1.2] {fI}
		\Vertex[L=$f_E$, x=2.4, y=-1.2] {fE}

    \begin{scope}[VertexStyle/.append style = {minimum size = 4pt, 
			inner sep = 0pt,
			color=black}]
		\Vertex[x=-0.9, y=-1.2, L=$I_{\sigma}$, Lpos=180, LabelOut]{s}
		\end{scope}
		\draw[EdgeStyle, style={->}](s) to (fT);
		
		\draw[EdgeStyle, style={-}](vT) to (fT);
		
		\draw[EdgeStyle, style={-}](vT) to (fI);
		\draw[EdgeStyle, style={-}](vI) to (fT);
		\draw[EdgeStyle, style={-}](vI) to (fE);
		\draw[EdgeStyle, style={-}](vE) to (fI);
		\end{tikzpicture}
		\caption{Equilibrium bipartite graph:\\ Viral infection.}
		\label{fig:equilibrium bipartite graph:viral infection}
	\end{subfigure}%
	\begin{subfigure}[b]{0.32\textwidth}
		\centering
		\begin{tikzpicture}[scale=0.75,every node/.style={transform shape}]
		\GraphInit[vstyle=Normal]
		\SetGraphUnit{1}
		\Vertex[L=$v_A$,x=0,y=0] {vA}
		\Vertex[L=$v_B$, x=1.2, y=0] {vB}
		\Vertex[L=$v_C$, x=2.4, y=0] {vC}
		\Vertex[L=$f_A$, x=0.0, y=-1.2] {fA}
		\Vertex[L=$f_B$, x=1.2, y=-1.2] {fB}
		\Vertex[L=$f_C$, x=2.4, y=-1.2] {fC}

		\begin{scope}[VertexStyle/.append style = {minimum size = 4pt, 
			inner sep = 0pt,
			color=black}]
		\Vertex[x=-0.9, y=-1.2, L=$I$, Lpos=180, LabelOut]{I}
		\end{scope}
    \draw[EdgeStyle, style={->}](I) to (fA);
		
		\draw[EdgeStyle, style={-}](vA) to (fA);
		\draw[EdgeStyle, style={-}](vC) to (fC);
		
		\draw[EdgeStyle, style={-}](vA) to (fC);
		\draw[EdgeStyle, style={-}](vB) to (fC);
		\draw[EdgeStyle, style={-}](vC) to (fB);
		\end{tikzpicture}
		\caption{Equilibrium bipartite graph:\\ Reaction network.}
		\label{fig:equilibrium bipartite graph:reaction network}
	\end{subfigure}
	\begin{subfigure}[b]{0.32\textwidth}
		\vspace*{3mm}
		\centering
		\begin{tikzpicture}[scale=0.75,every node/.style={transform shape}]
		\GraphInit[vstyle=Normal]
		\SetGraphUnit{1}
		\Vertex[L=$v_I$,x=0,y=0] {vI}
		\Vertex[L=$v_D$, x=1.2, y=0] {vD}
		\Vertex[L=$v_P$, x=2.4, y=0] {vP}
		\Vertex[L=$v_O$, x=3.6, y=0] {vO}
		\Vertex[L=$f_I$, x=0.0, y=-1.2] {fI}
		\Vertex[L=$g_D$, x=1.2, y=-1.2] {fD}
		\Vertex[L=$g_P$, x=2.4, y=-1.2] {fP}
		\Vertex[L=$g_O$, x=3.6, y=-1.2] {fO}
		
    \begin{scope}[VertexStyle/.append style = {minimum size = 4pt, 
			inner sep = 0pt,
			color=black}]
		\Vertex[x=4.5, y=-1.2, L=$I_K$, Lpos=0, LabelOut]{K}
		\end{scope}
		\draw[EdgeStyle, style={->}](K) to (fO);
		
		\draw[EdgeStyle, style={-}](vI) to (fI);
		\draw[EdgeStyle, style={-}](vD) to (fD);
		\draw[EdgeStyle, style={-}](vP) to (fP);
		\draw[EdgeStyle, style={-}](vO) to (fO);
		
		\draw[EdgeStyle, style={-}](vI) to (fD);
		\draw[EdgeStyle, style={-}](vD) to (fP);
		\draw[EdgeStyle, style={-}](vP) to (fO);
		\draw[EdgeStyle, style={-}](vO) to (fD);
		\end{tikzpicture}
		\caption{Dynamic bipartite graph:\\ Filling bathtub.}
		\label{fig:dynamic bipartite graph:bathtub}
	\end{subfigure}%
	\begin{subfigure}[b]{0.32\textwidth}
		\centering
		\begin{tikzpicture}[scale=0.75,every node/.style={transform shape}]
		\GraphInit[vstyle=Normal]
		\SetGraphUnit{1}
		\Vertex[L=$v_T$,x=0,y=0] {vT}
		\Vertex[L=$v_I$, x=1.2, y=0] {vI}
		\Vertex[L=$v_E$, x=2.4, y=0] {vE}
		\Vertex[L=$g_T$, x=0.0, y=-1.2] {fT}
		\Vertex[L=$g_I$, x=1.2, y=-1.2] {fI}
		\Vertex[L=$g_E$, x=2.4, y=-1.2] {fE}

    \begin{scope}[VertexStyle/.append style = {minimum size = 4pt, 
			inner sep = 0pt,
			color=black}]
		\Vertex[x=-0.9, y=-1.2, L=$I_{\sigma}$, Lpos=180, LabelOut]{s}
		\end{scope}
		\draw[EdgeStyle, style={->}](s) to (fT);
		
		\draw[EdgeStyle, style={-}](vT) to (fT);
		\draw[EdgeStyle, style={-}](vI) to (fI);
		\draw[EdgeStyle, style={-}](vE) to (fE);
		
		\draw[EdgeStyle, style={-}](vT) to (fI);
		\draw[EdgeStyle, style={-}](vI) to (fT);
		\draw[EdgeStyle, style={-}](vI) to (fE);
		\draw[EdgeStyle, style={-}](vE) to (fI);
		\end{tikzpicture}
		\caption{Dynamic bipartite graph:\\ Viral infection.}
		\label{fig:dynamic bipartite graph:viral infection}
	\end{subfigure}%
	\begin{subfigure}[b]{0.32\textwidth}
		\centering
		\begin{tikzpicture}[scale=0.75,every node/.style={transform shape}]
		\GraphInit[vstyle=Normal]
		\SetGraphUnit{1}
		\Vertex[L=$v_A$,x=0,y=0] {vA}
		\Vertex[L=$v_B$, x=1.2, y=0] {vB}
		\Vertex[L=$v_C$, x=2.4, y=0] {vC}
		\Vertex[L=$g_A$, x=0.0, y=-1.2] {fA}
		\Vertex[L=$g_B$, x=1.2, y=-1.2] {fB}
		\Vertex[L=$g_C$, x=2.4, y=-1.2] {fC}
		
		\begin{scope}[VertexStyle/.append style = {minimum size = 4pt, 
			inner sep = 0pt,
			color=black}]
		\Vertex[x=-0.9, y=-1.2, L=$I$, Lpos=180, LabelOut]{I}
		\end{scope}
    \draw[EdgeStyle, style={->}](I) to (fA);

		\draw[EdgeStyle, style={-}](vA) to (fA);
		\draw[EdgeStyle, style={-}](vB) to (fB);
		\draw[EdgeStyle, style={-}](vC) to (fC);
		
		\draw[EdgeStyle, style={-}](vA) to (fC);
		\draw[EdgeStyle, style={-}](vB) to (fC);
		\draw[EdgeStyle, style={-}](vC) to (fB);
		\end{tikzpicture}
		\caption{Dynamic bipartite graph:\\ Reaction network.}
		\label{fig:dynamic bipartite graph:reaction network}
	\end{subfigure}
	\begin{subfigure}[b]{0.32\textwidth}
		\centering
		\vspace*{3mm}
		\begin{tikzpicture}[scale=0.75,every node/.style={transform shape}]
		\GraphInit[vstyle=Normal]
		\SetGraphUnit{1}
		
		\begin{scope}[VertexStyle/.append style = {minimum size = 4pt, 
			inner sep = 0pt,
			color=black}]
		\Vertex[x=4.0, y=0.0, L=$I_{K}$, Lpos=90, LabelOut]{K}
		\end{scope}
		
		\SetVertexMath
		\Vertex{v_I}
		\EA[unit=1.5](v_I){v_D}
		\EA[unit=1.5](v_D){v_P}
		\SO(v_I){f_I}
		\SO(v_D){f_P}
		\SO(v_P){f_O}
		\NO[unit=1.5](v_P){v_O}
		\WE(v_O){f_D}
		\node[draw=black, fit=(v_I) (f_I), inner sep=0.1cm ]{};
		\node[draw=black, fit=(v_D) (f_P), inner sep=0.1cm ]{};
		\node[draw=black, fit=(v_P) (f_O), inner sep=0.1cm ]{};
		\node[draw=black, fit=(v_O) (f_D), inner sep=0.1cm ]{};
		
		\draw[EdgeStyle, style={->}, bend left=45](v_I) to (1.35,1.5);
		\draw[EdgeStyle, style={->}](v_O) to (3,0.55);
		\draw[EdgeStyle, style={->}](v_P) to (2.125,0);
		\draw[EdgeStyle, style={->}, bend left=0](K) to (3.65,0.0);
		\end{tikzpicture}
		\caption{Equilibrium causal ordering graph:\\ Filling bathtub.}
		\label{fig:equilibrium causal ordering graph:bathtub}
	\end{subfigure}%
	\begin{subfigure}[b]{0.32\textwidth}
		\centering
		\vspace*{3mm}
		\begin{tikzpicture}[scale=0.75,every node/.style={transform shape}]
		\GraphInit[vstyle=Normal]
		\SetGraphUnit{1}
		\Vertex[L=$v_I$, x=0, y=0] {vI}
		\Vertex[L=$f_E$, x=-1.2, y=0.0] {fE}
		\Vertex[L=$v_T$, x=0, y=-1.6] {vT}
		\Vertex[L=$f_T$, x=-1.2, y=-1.6] {fT}
		\Vertex[L=$v_E$, x=0, y=-3.2] {vE}
		\Vertex[L=$f_I$, x=-1.2, y=-3.2] {fI}
		
		\begin{scope}[VertexStyle/.append style = {minimum size = 4pt, 
			inner sep = 0pt,
			color=black}]
		\Vertex[x=1.0, y=-1.6, L=$I_{\sigma}$, Lpos=90, LabelOut]{s}
		\end{scope}
		
		\node[draw=black, fit=(vT) (fT), inner sep=0.1cm ]{};
		\node[draw=black, fit=(vI) (fE), inner sep=0.1cm ]{};
		\node[draw=black, fit=(vE) (fI), inner sep=0.1cm ]{};
		
		\draw[EdgeStyle, style={->}](s) to (0.6,-1.6);
		\draw[EdgeStyle, style={->}](vI) to (0.0,-1.0);
		\draw[EdgeStyle, style={->}](vT) to (0.0,-2.6);
		\end{tikzpicture}
		\caption{Equilibrium causal ordering graph:\\ Viral infection.}
		\label{fig:equilibrium causal ordering graph:viral infection}
	\end{subfigure}%
	\begin{subfigure}[b]{0.32\textwidth}
		\centering
		\vspace*{3mm}
		\begin{tikzpicture}[scale=0.75,every node/.style={transform shape}]
		\GraphInit[vstyle=Normal]
		\SetGraphUnit{1}
		\Vertex[L=$v_A$, x=0, y=0] {vA}
		\Vertex[L=$f_A$, x=-1.2, y=0.0] {fA}
		\Vertex[L=$v_B$, x=0, y=-1.6] {vB}
		\Vertex[L=$f_C$, x=-1.2, y=-1.6] {fC}
		\Vertex[L=$v_C$, x=0, y=-3.2] {vC}
		\Vertex[L=$f_B$, x=-1.2, y=-3.2] {fB}
		
		\begin{scope}[VertexStyle/.append style = {minimum size = 4pt, 
			inner sep = 0pt,
			color=black}]
		\Vertex[x=1.0, y=0.0, L=$I$, Lpos=90, LabelOut]{I}
		\end{scope}
		
		\node[draw=black, fit=(vC) (fB), inner sep=0.1cm ]{};
		\node[draw=black, fit=(vB) (fC), inner sep=0.1cm ]{};
		\node[draw=black, fit=(vA) (fA), inner sep=0.1cm ]{};
		
		\draw[EdgeStyle, style={->}](I) to (0.6,0.0);
		\draw[EdgeStyle, style={->}](vA) to (0.0,-1.0);
		\draw[EdgeStyle, style={->}](vC) to (0.0,-2.2);
		\end{tikzpicture}
		\caption{Equilibrium causal ordering graph:\\ Reaction network.}
		\label{fig:equilibrium causal ordering graph:NFBLB}
	\end{subfigure}
	\begin{subfigure}[b]{0.32\textwidth}
		\vspace*{3mm}
		\centering
		\begin{tikzpicture}[scale=0.75,every node/.style={transform shape}]
		\GraphInit[vstyle=Normal]
		\Vertex[L=$v_I$,x=0,y=0] {vI}
		\Vertex[L=$v_D$, x=1.2, y=0] {vD}
		\Vertex[L=$v_P$, x=2.2, y=0] {vP}
		\Vertex[L=$v_O$, x=3.2, y=0] {vO}
		\Vertex[L=$f_I$, x=0.0, y=-1] {fI}
		\Vertex[L=$g_D$, x=1.2, y=-1] {fD}
		\Vertex[L=$g_P$, x=2.2, y=-1] {fP}
		\Vertex[L=$g_O$, x=3.2, y=-1] {fO}
		
		\node[draw=black, fit=(vI) (fI), inner sep=0.1cm ]{};
		\node[draw=black, fit=(vD) (vP) (vO) (fD) (fP) (fO), inner sep=0.1cm ]{};
		
		\begin{scope}[VertexStyle/.append style = {minimum size = 4pt, 
			inner sep = 0pt,
			color=black}]
		\Vertex[x=4.1, y=0.0, L=$I_K$, Lpos=270, LabelOut]{I}
		\end{scope}
		
		\draw[EdgeStyle, style={->}](vI) to (0.675,0.0);
		\draw[EdgeStyle, style={->}](I) to (3.75,0.0);
		\end{tikzpicture}
		\caption{Dynamic causal ordering graph:\\ Filling bathtub.}
		\label{fig:dynamic causal ordering graph:bathtub}
	\end{subfigure}%
	\begin{subfigure}[b]{0.32\textwidth}
		\centering
		\begin{tikzpicture}[scale=0.75,every node/.style={transform shape}]
		\GraphInit[vstyle=Normal]
		\SetGraphUnit{1}
		\Vertex[L=$v_T$,x=0,y=0] {vT}
		\Vertex[L=$v_I$, x=1.0, y=0] {vI}
		\Vertex[L=$v_E$, x=2.0, y=0] {vE}
		\Vertex[L=$g_T$, x=0.0, y=-1.0] {fT}
		\Vertex[L=$g_I$, x=1.0, y=-1.0] {fI}
		\Vertex[L=$g_E$, x=2.0, y=-1.0] {fE}
		
		\node[draw=black, fit=(vT) (vI) (vE) (fT) (fI) (fE), inner sep=0.1cm ]{};
		
		\begin{scope}[VertexStyle/.append style = {minimum size = 4pt, 
			inner sep = 0pt,
			color=black}]
		\Vertex[x=2.9, y=0.0, L=$I_{\sigma}$, Lpos=270, LabelOut]{I}
		\end{scope}
		
		\draw[EdgeStyle, style={->}](I) to (2.55,0.0);
		\end{tikzpicture}
		\caption{Dynamic causal ordering graph:\\ Viral infection.}
		\label{fig:dynamic causal ordering graph:viral infection}
	\end{subfigure}%
	\begin{subfigure}[b]{0.32\textwidth}
		\centering
		\begin{tikzpicture}[scale=0.75,every node/.style={transform shape}]
		\GraphInit[vstyle=Normal]
		\SetGraphUnit{1}
		\Vertex[L=$v_A$,x=0,y=0] {vA}
		\Vertex[L=$v_B$, x=1.3, y=0] {vB}
		\Vertex[L=$v_C$, x=2.3, y=0] {vC}
		\Vertex[L=$g_A$, x=0.0, y=-1.0] {fA}
		\Vertex[L=$g_B$, x=1.3, y=-1.0] {fB}
		\Vertex[L=$g_C$, x=2.3, y=-1.0] {fC}
		
		\node[draw=black, fit=(vA) (fA), inner sep=0.1cm ]{};
		\node[draw=black, fit=(vB) (vC) (fB) (fC), inner sep=0.1cm ]{};
		
		\begin{scope}[VertexStyle/.append style = {minimum size = 4pt, 
			inner sep = 0pt,
			color=black}]
		\Vertex[x=-0.9, y=0.0, L=$I$, Lpos=270, LabelOut]{I}
		\end{scope}
		
		\draw[EdgeStyle, style={->}](I) to (-0.55,0.0);
		\draw[EdgeStyle, style={->}](vA) to (0.775,0.0);
		\end{tikzpicture}
		\caption{Dynamic causal ordering graph:\\ Reaction network.}
		\label{fig:dynamic causal ordering graph:reaction network}
	\end{subfigure}
	\begin{subfigure}[b]{0.33\textwidth}
		\vspace*{3mm}
		\centering
		\begin{tikzpicture}[scale=0.75,every node/.style={transform shape}]
		\GraphInit[vstyle=Normal]
		\SetGraphUnit{1}
		\Vertex[L=$w_{I}$, style={dashed}, x=0.0, y=-1.1] {wI}
		\Vertex[L=$w_{2}$, style={dashed}, x=1.1, y=0.55] {w1}
		\Vertex[L=$w_{3}$, style={dashed}, x=1.1, y=-0.55] {w2}
		\Vertex[L=$w_{4}$, style={dashed}, x=4.4, y=0.0] {w3}
		\Vertex[L=$w_{5}$, style={dashed}, x=4.4, y=-1.1] {w4}
		\Vertex[L=$w_{1}$, style={dashed}, x=4.4, y=1.1] {w5}
		\begin{scope}[VertexStyle/.append style = {minimum size = 4pt, 
			inner sep = 0pt,
			color=black}]
		\Vertex[x=3.3, y=-1.1, L=$I_K$, Lpos=180, LabelOut]{I_K}
		\end{scope}		
		
		\SetVertexMath
		\Vertex{v_I}
		\EA[unit=2.2](v_I){v_D}
		\EA[unit=1.1](v_D){v_P}
		\NO[unit=1.1](v_P){v_O}
		
		\draw[EdgeStyle, style={-}](v_I) to (0,1.1);
		\draw[EdgeStyle, style={->}](0,1.1) to (v_O);
		\draw[EdgeStyle, style={->}](v_O) to (v_P);
		\draw[EdgeStyle, style={->}](v_P) to (v_D);
		\draw[EdgeStyle, style={->}](I_K) to (v_P);
		\draw[EdgeStyle, style={->}](wI) to (v_I);
		\draw[EdgeStyle, style={->}](w1) to (v_D);
		\draw[EdgeStyle, style={->}](w2) to (v_D);
		\draw[EdgeStyle, style={->}](w3) to (v_P);
		\draw[EdgeStyle, style={->}](w4) to (v_P);
		\draw[EdgeStyle, style={->}](w5) to (v_O);
		\end{tikzpicture}
		\caption{Equilibrium Markov ordering graph:\\ Filling bathtub.}
		\label{fig:markov ordering graph:bathtub}
	\end{subfigure}%
	\begin{subfigure}[b]{0.33\textwidth}
		\centering
		\begin{tikzpicture}[scale=0.75,every node/.style={transform shape}]
		\GraphInit[vstyle=Normal]
		\SetGraphUnit{1}
		\Vertex[L=$v_I$, x=1.1, y=1.1] {vI}
		\Vertex[L=$v_T$, x=1.1, y=0] {vT}
		\Vertex[L=$v_E$, x=1.1, y=-1.1] {vE}
		\Vertex[L=$w_{E}$, x=2.2, y=1.65, style={dashed}] {wE}
		\Vertex[L=$w_{T}$, x=2.2, y=0.55, style={dashed}] {wT}
		\Vertex[L=$w_{\beta}$, x=2.2, y=-0.55, style={dashed}] {wb}
		\Vertex[L=$w_{I}$, x=2.2, y=-1.65, style={dashed}] {wI}
		\begin{scope}[VertexStyle/.append style = {minimum size = 4pt, 
			inner sep = 0pt,
			color=black}]
		\Vertex[L=$I_{\sigma}$, x=0, y=0, Lpos=90, LabelOut]{Is}
		\end{scope}			
		\draw[EdgeStyle, style={->}](Is) to (vT);
		\draw[EdgeStyle, style={->}](vI) to (vT);
		\draw[EdgeStyle, style={->}](vT) to (vE);
		\draw[EdgeStyle, style={->}](wE) to (vI);
		\draw[EdgeStyle, style={->}](wT) to (vT);
		\draw[EdgeStyle, style={->}](wb) to (vT);
		\draw[EdgeStyle, style={->}](wb) to (vE);
		\draw[EdgeStyle, style={->}](wI) to (vE);
		\end{tikzpicture}
		\caption{Equilibrium Markov ordering graph:\\ Viral infection.}
		\label{fig:markov ordering graph:viral infection}
	\end{subfigure}%
	\begin{subfigure}[b]{0.33\textwidth}
		\centering
		\begin{tikzpicture}[scale=0.75,every node/.style={transform shape}]
		\GraphInit[vstyle=Normal]
		\SetGraphUnit{1}
		\Vertex[L=$v_A$, x=1.1, y=1.1] {vA}
		\Vertex[L=$v_B$, x=1.1, y=0] {vB}
		\Vertex[L=$v_C$, x=1.1, y=-1.1] {vC}
		\Vertex[L=$w_{A}$, x=2.2, y=1.1, style={dashed}] {wA}
		\Vertex[L=$w_{C}$, x=2.2, y=0.0, style={dashed}] {wC}
		\Vertex[L=$w_{B}$, x=2.2, y=-1.1, style={dashed}] {wB}
		\begin{scope}[VertexStyle/.append style = {minimum size = 4pt, 
			inner sep = 0pt,
			color=black}]
		\Vertex[L=$I$, x=0, y=1.1, Lpos=90, LabelOut]{I}
		\end{scope}	
		\draw[EdgeStyle, style={->}](I) to (vA);
		\draw[EdgeStyle, style={->}](vA) to (vB);
		\draw[EdgeStyle, style={->}](vC) to (vB);
		\draw[EdgeStyle, style={->}](wA) to (vA);
		\draw[EdgeStyle, style={->}](wC) to (vB);
		\draw[EdgeStyle, style={->}](wB) to (vC);
		\end{tikzpicture}
		\caption{Equilibrium Markov ordering graph:\\ Reaction network.}
		\label{fig:markov ordering graph:NFBLB}
	\end{subfigure}
  \caption{Graphical representations of the bathtub model (left column), the viral infection model (center column), and the reaction network with negative feedback (right column). The input vertices $I_K$, $I_{\sigma}$, and $I$ are represented by black dots while exogenous variables are indicated by dashed circles. For each model, the structure of the equilibrium equations can be read off from the equilibrium bipartite graphs in Figures \ref{fig:equilibrium bipartite graph:bathtub}, \ref{fig:equilibrium bipartite graph:viral infection}, and \ref{fig:equilibrium bipartite graph:reaction network}. The structure of the first-order differential equations can be represented with the dynamic bipartite graphs in Figures \ref{fig:dynamic bipartite graph:bathtub}, \ref{fig:dynamic bipartite graph:viral infection}, and \ref{fig:dynamic bipartite graph:reaction network}. The equilibrium causal ordering graphs corresponding to the equilibrium bipartite graphs are given in Figures \ref{fig:equilibrium causal ordering graph:bathtub}, \ref{fig:equilibrium causal ordering graph:viral infection}, and \ref{fig:equilibrium causal ordering graph:NFBLB}. Similarly, the dynamic causal ordering graphs corresponding to the dynamic bipartite graphs can be found in Figures \ref{fig:dynamic causal ordering graph:bathtub}, \ref{fig:dynamic causal ordering graph:viral infection}, and \ref{fig:dynamic causal ordering graph:reaction network}. The equilibrium Markov ordering graphs for the equilibrium distribution of the models are given in Figures \ref{fig:markov ordering graph:bathtub}, \ref{fig:markov ordering graph:viral infection}, and \ref{fig:markov ordering graph:NFBLB}.}
	\label{fig:graphical representations}
\end{figure}

\noindent
\textit{Causal ordering graph}: Application of the causal ordering algorithm to the equilibrium bipartite graphs of the filling bathtub, the viral infection, and the reaction network results in the \emph{equilibrium causal ordering graphs} in Figures \ref{fig:equilibrium causal ordering graph:bathtub}, \ref{fig:equilibrium causal ordering graph:viral infection}, and \ref{fig:equilibrium causal ordering graph:NFBLB}, respectively. Henceforth, we will assume that the dynamic bipartite graph has a perfect matching that extends the natural labelling of the dynamic equations, i.e.,\ such that all pairs $(v_i, g_i)$ are matched. Application of the causal ordering algorithm to the associated dynamic bipartite graph for the model of a filling bathtub, the viral infection model, and the reaction network results in the \emph{dynamic causal ordering graphs} in Figures \ref{fig:dynamic causal ordering graph:bathtub}, \ref{fig:dynamic causal ordering graph:viral infection}, and \ref{fig:dynamic causal ordering graph:reaction network}, respectively.\footnote{Our approach here differs from the dynamic causal ordering algorithm proposed in \citep{Iwasaki1994}, which includes separate vertices for derivatives and variables that are linked by `definitional' integration links.
} 

As shown in \citep{Blom2020}, the absence (presence) of a directed path from an equation vertex to a variable vertex in the equilibrium causal ordering graph indicates that a soft intervention targeting a parameter in that equation has no (a generic) effect on the value of that variable once the system has reached equilibrium again. Furthermore, 
the absence (presence) of a directed path from a cluster to a variable vertex in the equilibrium causal ordering graph indicates that a perfect intervention targeting the cluster has no (a generic) effect on the value of that variable once the system has reached equilibrium again.
Notice that the variables $v_i$ in the equilibrium causal ordering graph do not always end up in the same cluster with the equilibrium equation $f_i$ of the natural labelling. For example, we see in Figure~\ref{fig:equilibrium causal ordering graph:bathtub} that a soft intervention targeting the equilibrium equation $f_O$ (e.g.\ a change in the value of $U_5$) does \emph{not} affect the value of the outflow rate $X_O$ at equilibrium (since there is no directed path from $f_O$ to $v_O$), even though $f_O$ was obtained from the dynamic equation for the time derivative of the outflow rate $X_O(t)$. Similarly, targeting $f_I$ with a soft intervention in the viral infection model has no effect on $X_I$ at equilibrium and targeting $f_C$ in the reaction network model has no effect on the equilibrium distribution of $X_C$.\footnote{To preserve an unambiguous causal interpretation, equations and clusters that may be targeted by interventions should be clearly distinguished from the variables that could be affected by those interventions \citep{Blom2020}.} This suggests that the causal structure at equilibrium of perfectly adapted dynamical systems may differ from the transient causal structure. In the next section, we will use this idea to detect perfect adaptation from background knowledge and experimental data.\\

\noindent
\textit{Equilibrium Markov ordering graph}: As explained in Section~\ref{sec:background:causal ordering}, the Markov ordering graph is constructed from the causal ordering graph and includes exogenous variables. For the bathtub model, we let vertices $w_I,w_1,\ldots,w_5$ represent the independent exogenous random variables $U_I,U_1,\ldots,U_5$ that appear in the model. For the viral infection model we let $w_T, w_I$, $w_E$, $w_\beta$ represent independent exogenous random variables $d_T, d_I, d_E$, and $\beta$ in equations \eqref{eq:simple T}, \eqref{eq:simple I}, and \eqref{eq:simple E}. Finally, for the reaction network with negative feedback, we let $w_A$, $w_B$, and $w_C$ represent the independent exogenous random variables $k_{IA}$, $k_{CB}$ and $k_{AC}$, respectively. The equilibrium Markov ordering graphs for the filling bathtub model, the viral infection model, and the model of a reaction network with a negative feedback loop are given in Figures \ref{fig:markov ordering graph:bathtub}, \ref{fig:markov ordering graph:viral infection}, and \ref{fig:markov ordering graph:NFBLB} respectively.\footnote{The equilibrium Markov ordering graph for the bathtub model coincides with the result of \citep{Dash2005}, who simulated data from the bathtub model until the system reached equilibrium and then applied the PC algorithm to the equilibrium data. Although Dash \cite{Dash2005} interprets the learned graphical representation as the `causal graph', this graph in itself does not have a straightforward causal interpretation. See Appendix~\ref{app:causal interpretation mog} and the discussion in \citep{Blom2020} for more details.} These equilibrium Markov ordering graphs can be used to read off conditional independences in the equilibrium distribution that are implied by the equilibrium equations of the model. For example, since $v_I$ is $d$-separated from $v_D$ given $v_P$ in the equilibrium Markov ordering graph in Figure~\ref{fig:markov ordering graph:bathtub}, $X_I$ will be independent of $X_D$ given $X_P$ once the system has reached equilibrium. These independences can be tested for in equilibrium data by means of statistical conditional independence tests. These implied conditional independences can for instance be used in the process of model selection \citep{BlomMooij_UAI_22}.

\subsection{Existence and uniqueness of solutions}\label{sec:perfect adaptation:solvability properties}

The causal ordering algorithm is a graphical tool that can be useful when solving a system of equations. 
It decomposes the question of existence and uniqueness of a `global' solution into several `local' existence and uniqueness problems corresponding to a partitioning of the equations.
When a unique global solution exists for all possible joint values of the (independent) exogenous variables, this leads to both a causal semantics and to a Markov property \citep{Blom2020}.
We argue here that these ideas can also be extended to include differential equations.
We will illustrate this with the filling bathtub model.
We start with the (conceptually simpler) equilibrium model, which solely contains static equations, before discussing what to do when dynamic equations are present.

The equilibrium equations \eqref{eq:bathtub:eq I}--\eqref{eq:bathtub:eq O} can be solved in steps by following the 
topological ordering of the clusters in the equilibrium causal ordering graph in Figure~\ref{fig:equilibrium causal ordering graph:bathtub}. First, use $f_I$ to solve for $X_I$, resulting in $X_I = U_I$. Then, use $f_D$ to solve for $X_O$, which results in $X_O = X_{I}$. Subsequently, use $f_O$ to solve for $X_P$, yielding $X_P = \frac{X_{O}}{U_5 I_K}$. Finally, use $f_P$ to solve for $X_D$, resulting in $X_D = \frac{X_P}{g U_3}$. By substitution, we obtain a global solution of the form
$$(X_I,X_O,X_P,X_D) = \left(U_I,U_I,\frac{U_I}{U_5 I_K},\frac{U_I}{g U_3 U_5 I_K}\right).$$
Since we obtain a unique solution of each equation for the target variable in terms of the other variables appearing in the equation, this procedure shows that there exists a unique global solution of the system of equations for any value of the exogenous variables $U_I,U_1,U_2,U_3,U_4,U_5$ and any value of the input signal $I_K$.
Because of this, we obtain both a causal interpretation and a Markov property for the filling bathtub model at equilibrium as described in Section~\ref{sec:perfect adaptation:graphical representations}.

For the dynamic filling bathtub model, we can follow a similar procedure, but now the clusters may also contain differential equations. We can make use of the theory for the existence and uniqueness of solutions of ordinary differential equations (ODEs).
First note that the dynamic bipartite graph reflects the structure of the static and dynamic equations, once we rewrite the differential equations as integral equations. For example, for the time interval $[t_0,t]$:
\begin{align}
X_{I}(t) &= U_I,\\
  X_D(t) &= X_D(t_0) + \int_{t_0}^t U_1(X_{I}(\tau)-X_{O}(\tau)) \,d\tau,\\
  X_P(t) &= X_P(t_0) + \int_{t_0}^t U_2(g\,U_3X_D(\tau)-X_P(\tau)) \,d\tau,\\
  X_O(t) &= X_O(t_0) + \int_{t_0}^t U_4(U_5 I_K(\tau) X_P(\tau)-X_{O}(\tau)) \,d\tau.
\end{align}
Rewriting the differential equations as integral equations has two advantages:
(i) there is no need to introduce the derivatives as if they were (variation) independent processes; (ii) 
it makes the dependence on the initial conditions $X_D(t_0)$, $X_P(t_0)$ and $X_O(t_0)$ explicit.
The equations \eqref{eq:bathtub:dyn I}--\eqref{eq:bathtub:dyn O} describing the dynamical system can be solved in steps by following the 
topological ordering of the clusters in the dynamic causal ordering graph in Figure~\ref{fig:dynamic causal ordering graph:bathtub}.
First, solve $f_I$ for $X_I$, resulting in $X_I(t) = U_I$.
The cluster $\{g_D,g_P,g_O,v_D,v_P,v_O\}$ has to be dealt with as a single unit, which means we have to solve the subsystem of three differential equations $\{g_D,g_P,g_O\}$ (that is, equations \eqref{eq:bathtub:dyn D}--\eqref{eq:bathtub:dyn O}) for its solution with components $(X_D(t),X_P(t),X_O(t))$. 
By applying the Picard-Lindel\"of theorem \citep[see, e.g.,][]{CoddingtonLevinson1955}, one obtains that this subsystem has a unique solution on a time interval $[t_0,\infty)$ for any initial condition $(X_D(t_0),X_P(t_0),X_O(t_0))$, provided that $X_I(t)$ and $I_K(t)$ are continuous and that the input signal $I_K(t)$ is bounded.
Thus, the equations \eqref{eq:bathtub:dyn I}--\eqref{eq:bathtub:dyn O} have a unique global solution for any value of the exogenous variables $U_I,U_1,U_2,U_3,U_4,U_5$, any initial condition $(X_D(t_0),X_P(t_0),X_O(t_0))$, and any continuous and bounded input signal $I_K(t)$.
The approach of \citep{Blom2020} can in this way be extended to yield both a dynamic causal interpretation and a Markov property (by using the trick of \citep{Bongers2018} to interpret path-continuous stochastic processes as random variables).\footnote{A  more formal and rigorous treatment is left as future work.}

Important to note here is that this explicit solution procedure shows that at equilibrium, the value of the input signal $I_K$ may affect the value of $X_D$ and $X_P$, but cannot affect the values of $X_O$ and $X_I$, while there can be transient effects of the input signal $I_K(t)$ on $X_D(t), X_P(t)$ and $X_O(t)$, but not on $X_I(t)$. Furthermore, under appropriate local solvability conditions for each cluster, these observations can directly be read off from the (equilibrium and dynamic) causal ordering graphs.

\section{Identification of perfect adaptation}
\label{sec:perfect adaptation:identification}

In Section~\ref{sec:perfect adaptation:examples}, we identified perfect adaptation in three simple models through simulations. Here, we consider how to identify models that are capable of perfect adaptation without requiring simulations or explicit calculations. In Section~\ref{sec:perfect adaptation:identification:identification co} we will put the graphical representations of Section~\ref{sec:perfect adaptation:graphical representations} to use for identifying perfect adaptation in dynamical models.
We discuss possibilities for the identification of perfect adaptation from equilibrium data in Section~\ref{sec:perfect adaptation:data}.

\subsection{Identification of perfect adaptation via causal ordering}
\label{sec:perfect adaptation:identification:identification co}

The identification of perfect adaptation via causal ordering makes use of the causal semantics of the equilibrium causal ordering graph. 
The following lemma states that a change in the input signal has no effect on the value of a variable if there is no directed path from the input vertex to that variable in the equilibrium causal ordering graph.

\begin{lemma}
	\label{lemma:equilibrium response}
	Consider a model consisting of static equations, a set of first-order differential equations in canonical form, and an input signal. Assume that the equilibrium bipartite graph has a perfect matching and that the static equations and equilibrium equations derived from the first-order differential equations are uniquely solvable w.r.t.\ the equilibrium causal ordering graph for all relevant values of the input signal. If there is no directed path from the input vertex to a variable vertex in the equilibrium causal ordering graph then the value of the input signal does not influence the equilibrium distribution of that variable.
\end{lemma}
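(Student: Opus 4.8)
The plan is to reduce the statement to the solution procedure that unique solvability w.r.t.\ the equilibrium causal ordering graph provides, and then to track exactly where the input signal is allowed to enter that procedure. Under the stated assumptions, an equilibrium of the dynamical system is a joint solution of the system of equilibrium equations (the static equations together with the equations obtained by setting the time derivatives of the first-order differential equations to zero), and unique solvability w.r.t.\ the equilibrium causal ordering graph guarantees that this solution is unique and can be computed cluster by cluster along a topological ordering of the clusters. Consequently, the equilibrium distribution of any endogenous variable is the push-forward of the joint distribution of the exogenous random variables under the map defined by this procedure; my goal is to show that, for the target variable $v$, this map does not depend on the value of the input signal at all, so that its push-forward is identical for every admissible constant input value.

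The core of the argument is a graph-theoretic observation about where the input vertex $I$ sits relative to the cluster $\mathrm{cl}(v)$ containing $v$. By step~\ref{alg:step 4} of the algorithm, $I$ is a singleton cluster whose only outgoing edges point to the clusters of exactly those equilibrium equations in which the input signal appears, so any directed path out of $I$ must begin with an edge $I \to \mathrm{cl}(f)$ for some equation $f$ containing the input. I would first show that ``no directed path from $I$ to $v$'' forces the input signal to be absent from every equation lying in $\mathrm{cl}(v)$ or in an ancestor cluster of $\mathrm{cl}(v)$: were the input to appear in an equation $f$ with $\mathrm{cl}(f)$ equal to, or an ancestor of, $\mathrm{cl}(v)$, concatenating the edge $I \to \mathrm{cl}(f)$ with a directed path from $\mathrm{cl}(f)$ to $\mathrm{cl}(v)$ would yield a directed path from $I$ to $v$, contradicting the hypothesis.

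Next I would invoke the topological solution procedure. Because the edges added in step~\ref{alg:step 3} ensure that each cluster's equations reference only the variables of that cluster and of its parent (hence ancestor) clusters, solving along the topological ordering expresses the value of $v$ as a function of the exogenous quantities occurring in the equations of $\mathrm{cl}(v)$ and its ancestors only. By the previous step none of these equations contains the input signal, so the resulting expression for the equilibrium value of $v$ is a function purely of exogenous random variables, with no dependence on the input. Hence the equilibrium value of $v$, and therefore its distribution, is invariant under changes of the (constant) input signal, which is exactly the claim. Equivalently, one may observe that varying the input value is a soft intervention on precisely the equilibrium equations in which the input appears, and then apply Theorem~20 of \citep{Blom2020}, using the same path translation to match ``no directed path from $I$'' with ``no directed path from any targeted equation''.

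The step I expect to require the most care is the passage from the dynamical system to its equilibrium equations: one must argue that the limiting equilibrium distribution is genuinely governed by the unique solution of the equilibrium equations (this is where the convergence of the system and the unique-solvability assumption do the work), and that ``no influence on the equilibrium distribution'' is correctly captured by functional independence of the solution map on the input argument across all admissible input values, rather than by a mere conditional-independence statement. The remaining graph-theoretic equivalence is routine once the construction in step~\ref{alg:step 4} is unpacked, but it must be stated precisely, since it is the hinge that converts the path hypothesis on the input vertex into the absence of the input from the relevant equations.
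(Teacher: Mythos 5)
Your proposal is correct, and in the end it lands on exactly the argument the paper uses: the paper's entire proof of this lemma is the one-line observation that the statement follows directly from Theorem~20 of \citep{Blom2020}, which is precisely the ``equivalent'' reduction you mention in your closing remark (a change of the constant input value is a soft intervention on exactly those equilibrium equations in which the input appears, and absence of a directed path from the input vertex $I$ --- whose only outgoing edges, by step~\ref{alg:step 4}, go to the clusters of those equations --- translates into absence of a directed path from every targeted equation to $v$). What your main argument adds is a self-contained unpacking of what that citation encapsulates: the singleton-cluster observation, the contradiction showing the input is absent from all equations in $\mathrm{cl}(v)$ and its ancestor clusters, and the cluster-by-cluster solution along a topological ordering showing the solution map for $v$ is functionally independent of the input. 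This buys transparency (one sees concretely why the non-effect holds, rather than delegating to an external theorem) at the cost of re-proving machinery that \citep{Blom2020} already packages; both steps you carry out are sound, including the subtlety you flag that the definition of directed path in a cluster graph counts $\mathrm{cl}(x)=\mathrm{cl}(y)$ as a path, so the hypothesis also rules out the input appearing in an equation of $\mathrm{cl}(v)$ itself. Your final caveat --- that identifying the equilibrium distribution with the push-forward of the exogenous distribution under the unique solution map of the equilibrium equations presupposes that the system actually equilibrates --- is likewise left implicit in the paper (it surfaces only later, in a footnote to Theorem~\ref{thm:identification of perfect adaptation}), so flagging it is appropriate rather than a defect of your argument.
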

\begin{proof}
  The statement follows directly from Theorem~20 in \cite{Blom2020}.
\end{proof}

To establish perfect adaptation, we assume that the presence of a directed path in the dynamic causal ordering graph implies the presence of a transient causal effect.
\begin{assumption}
	\label{ass:transient response}
  Consider a model consisting of static equations, a set of first-order differential equations in canonical form, and an input signal.
  Assume that the dynamic bipartite graph has a perfect matching that extends the natural labelling.
	If there is a directed path from the input vertex to a variable vertex in the dynamic causal ordering graph, then there will be a response of that variable to changes in the input signal some time later.
\end{assumption}
Intuitively, this assumption may seem plausible, as the presence of the directed path in the dynamic causal ordering graph implies that the input signal enters into the construction of the solution of the variable, as sketched in Section~\ref{sec:perfect adaptation:solvability properties}. Unless a perfect cancellation occurs, one then expects a generic effect on the solution some time after the change in the input signal. Assumption~\ref{ass:transient response} can be seen as a consequence of a certain faithfulness assumption.\footnote{Indeed, it appears that Assumption~\ref{ass:transient response} follows from the faithfulness assumption that corresponds with a Markov property that was derived for Structural Dynamical Causal Models \citep{Bongers2018}.}
We conjecture that this assumption is generically satisfied for a large class of dynamical systems (for example, it might hold for almost all parameter values w.r.t.\ the Lebesgue measure on a suitable parameter space).\footnote{Proving this in sufficient generality seems not straightforward; indeed, even the well-known result that $d$-faithfulness is a generic property has only been shown so far for Bayesian networks with discrete variables and for linear-Gaussian structural equation models \citep{Meek1995}.}

By combining Lemma~\ref{lemma:equilibrium response} and Assumption~\ref{ass:transient response}, we immediately obtain the following result.
\begin{theorem}
	\label{thm:identification of perfect adaptation}
  Consider a model that satisfies the conditions of Lemma~\ref{lemma:equilibrium response} and assume that the associated dynamic bipartite graph has a perfect matching that extends the natural labelling. Under Assumption~\ref{ass:transient response}, the presence of a directed path from the input signal $I$ to a variable $X_v$ in the dynamic causal ordering graph and the absence of such a path in the equilibrium causal ordering graph implies that $X_v$ perfectly adapts to changes in the input signal $I$ if the system equilibrates.
\end{theorem}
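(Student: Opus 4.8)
The plan is to recognize that perfect adaptation is, by definition, the conjunction of two phenomena---a genuine transient response to a persistent change in the input signal, together with a return to the original equilibrium value---and that these two phenomena are supplied respectively by Assumption~\ref{ass:transient response} and Lemma~\ref{lemma:equilibrium response}. The proof therefore reduces to checking that the hypotheses of each are inherited from the hypotheses of the theorem, and then assembling the two conclusions. First I would unpack the definition of perfect adaptation into its two constituent requirements: a variable $X_v$ perfectly adapts to the input signal $I$ if (i) $X_v$ exhibits a transient response to a persistent change in $I$ at some time after the change, and (ii) the equilibrium distribution of $X_v$ is unaffected by that change, so that $X_v$ reverts to its original value as the system re-equilibrates.

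To establish requirement (i), I would use the theorem's assumption that there is a directed path from the input vertex $I$ to the variable vertex $v$ in the dynamic causal ordering graph, together with the assumption that the dynamic bipartite graph has a perfect matching extending the natural labelling. These are exactly the premises of Assumption~\ref{ass:transient response}, so invoking it directly yields that $X_v$ responds to changes in $I$ at some later time, supplying the transient part of the adaptation.

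To establish requirement (ii), I would note that the theorem's hypotheses include those of Lemma~\ref{lemma:equilibrium response}: the equilibrium bipartite graph has a perfect matching and the static and equilibrium equations are uniquely solvable with respect to the equilibrium causal ordering graph for all relevant input values. Since by hypothesis there is no directed path from $I$ to $v$ in the \emph{equilibrium} causal ordering graph, Lemma~\ref{lemma:equilibrium response} gives that the value of the input signal does not influence the equilibrium distribution of $X_v$; hence, once the transient subsides, the variable returns to its original equilibrium value. Combining (i) and (ii) reproduces exactly the definition of perfect adaptation, which would complete the argument.

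The step I expect to carry the real weight is requirement (i), because its validity rests entirely on Assumption~\ref{ass:transient response}, into which the nontrivial content has been localized---namely the ruling out of accidental perfect cancellations in the transient response, effectively a faithfulness-type condition that the preceding discussion only conjectures to hold generically. The remaining work is bookkeeping: verifying that the two graphical premises (a directed path in the dynamic graph, its absence in the equilibrium graph) feed correctly into the assumption and the lemma, and confirming that their two conclusions together match the definition of perfect adaptation. In this sense the theorem is an immediate corollary, and the only genuine obstacle is the one already isolated and deferred in Assumption~\ref{ass:transient response}.
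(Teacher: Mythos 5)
Your proposal is correct and matches the paper's own argument exactly: the paper states that the theorem follows ``by combining Lemma~\ref{lemma:equilibrium response} and Assumption~\ref{ass:transient response}'', which is precisely your decomposition of perfect adaptation into a transient response (supplied by Assumption~\ref{ass:transient response} via the directed path in the dynamic causal ordering graph) and a reversion to the original equilibrium value (supplied by Lemma~\ref{lemma:equilibrium response} via the absence of such a path in the equilibrium causal ordering graph). Your identification of Assumption~\ref{ass:transient response} as the locus of the nontrivial, faithfulness-type content is also consistent with the paper's discussion.
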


Theorem \ref{thm:identification of perfect adaptation} can be directly applied to the equilibrium and dynamic causal ordering graphs in Figure~\ref{fig:graphical representations} to identify perfect adaptation. For example, we see that there is a directed path from the input signal $I_K$ to $v_O$ in the dynamic causal ordering graph of the filling bathtub in Figure~\ref{fig:dynamic causal ordering graph:bathtub}, while no such path exists in the equilibrium causal ordering graph in Figure~\ref{fig:equilibrium causal ordering graph:bathtub}. It follows from Theorem~\ref{thm:identification of perfect adaptation} that $X_O$ perfectly adapts to changes in the input signal $I_K$. This is in agreement with the simulation in Figure~\ref{fig:adaptation:bathtub}. Similarly, we can verify that the amount of infected cells $X_I$ in the viral infection model perfectly adapts to changes in the input signal $I_\sigma$ and that $X_C$ perfectly adapts to $I$ in the reaction network with negative feedback. Hence, perfect adaptation in the bathtub model, the viral infection model, and the reaction network with negative feedback can be identified by applying the graphical criteria in Theorem~\ref{thm:identification of perfect adaptation} to the respective causal ordering graphs. It is important to keep in mind, though, that what we can identify in this way is only the \emph{possibility} of perfectly adaptive behavior, relying on the implicit assumption that the system will actually equilibrate.

In Appendix~\ref{app:rewriting equations} we show that the sufficient conditions in Theorem \ref{thm:identification of perfect adaptation} for the identification of perfect adaptation are not necessary. More specifically, we construct graphical representations for a dynamical model of a basic enzymatic reaction that achieves perfect adaptation but does not satisfy the conditions in Theorem~\ref{thm:identification of perfect adaptation}. 
Interestingly, though, after rewriting the equations the perfectly adaptive behavior of these systems can be captured via Theorem~\ref{thm:identification of perfect adaptation}.
Further, in Appendix~\ref{app:ifflp network} we show that the biochemical reaction network in Figure~\ref{fig:IFFLP}, which Ma \emph{et al.} \cite{Ma2009} identified as being capable of achieving perfect adaptation, does not satisfy the conditions in Theorem~\ref{thm:identification of perfect adaptation} either. 
We show that a change of variables enables one to still capture the perfectly adaptive behavior of this system via Theorem~\ref{thm:identification of perfect adaptation}.

\subsection{Identification of perfect adaptation from data}
\label{sec:perfect adaptation:data}

So far we have only considered how perfect adaptation can be identified in mathematical models. In this section we focus on methods for identifying perfect adaptation from data that is generated by perfectly adapted dynamical systems under experimental conditions. The most straightforward approach to detect perfect adaptation is to collect time-series data while experimentally changing the input signal to the system. One can then simply observe whether the variables in the system revert to their original values. However, this type of experimentation is not always feasible. Another way to identify feedback loops that achieve perfect adaptation uses a combination of observational equilibrium data, background knowledge, and experimental data. Our second main result, Theorem~\ref{thm:detect perfect adaptation}, gives sufficient conditions under which we can identify a system that is capable of perfect adaptation from experimental equilibrium data.

\begin{restatable}{theorem}{detectpa}
\label{thm:detect perfect adaptation}
  Consider a set of first-order dynamical equations in canonical form for variables $V$, satisfying the conditions of Theorem~\ref{thm:identification of perfect adaptation}, with equilibrium equations $F$ under the natural labelling. Consider a soft intervention targeting an equation $f_i\in F$. Assume that the system is uniquely solvable w.r.t. the equilibrium causal ordering graph both before and after the intervention and that the intervention alters the equilibrium distribution of all descendants of $f_i$ in the equilibrium causal ordering graph. If either
\begin{enumerate}
\item the soft intervention does not change the equilibrium distribution of $X_i$, or \label{lemma:observe:c1}
\item the soft intervention alters the equilibrium distribution of a variable corresponding to a non-descendant of $v_i$ in the equilibrium Markov ordering graph, \label{lemma:observe:c2}
\end{enumerate}
(or both), then the system is capable of perfect adaptation.
\end{restatable}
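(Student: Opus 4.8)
The plan is to prove the contrapositive of a convenient reformulation. First I would record the key reduction. By the natural labelling the dynamic bipartite graph always contains the edge $(v_j - g_j)$, so in the dynamic causal ordering graph the equation vertex $g_j$ and the variable $v_j$ lie in the same cluster; hence a persistent perturbation of the dynamic equation $g_j$ always has a directed path to $v_j$ there, and by Assumption~\ref{ass:transient response} it produces a transient response of $X_j$. Combining this with Theorem~\ref{thm:identification of perfect adaptation}, the system is capable of perfect adaptation as soon as there is an index $j$ for which $v_j$ is \emph{not} a descendant of $f_j$ in the equilibrium causal ordering graph: then $X_j$ has a transient but no persistent response, i.e.\ it perfectly adapts. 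A soft intervention on the equilibrium equation $f_i$ is the experimental counterpart of perturbing $g_i$, and it is the probe we are given. So it suffices to show that either condition forces the existence of some $j$ with $v_j$ not a descendant of $f_j$; equivalently, assuming that $v_j$ is a descendant of $f_j$ for \emph{every} $j$, I would show that both conditions fail.

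Condition~\ref{lemma:observe:c1} is then immediate. If $v_i$ is a descendant of $f_i$, the maintained hypothesis that the intervention alters the equilibrium distribution of \emph{all} descendants of $f_i$ forces it to change the equilibrium distribution of $X_i$, which is exactly the negation of Condition~\ref{lemma:observe:c1}. Hence if Condition~\ref{lemma:observe:c1} holds, $v_i$ is not a descendant of $f_i$ and $X_i$ itself perfectly adapts (the reduction above with $j = i$), using Lemma~\ref{lemma:equilibrium response} (i.e.\ Theorem~20 of \citet{Blom2020}) for the absence of a persistent effect and Assumption~\ref{ass:transient response} for the transient one.

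For Condition~\ref{lemma:observe:c2} I would argue through the relation between the descendants of $f_i$ in the equilibrium causal ordering graph and the descendants of $v_i$ in the equilibrium Markov ordering graph. The altered variable $v$ must be a descendant of $f_i$ in the equilibrium causal ordering graph, since by the reachability characterisation of soft-intervention effects (Theorem~20 of \citet{Blom2020}, underlying Lemma~\ref{lemma:equilibrium response}) an altered variable is reachable from the intervened equation. The crux is to show that, under the standing assumption that every $v_j$ is a descendant of $f_j$, the matching underlying the equilibrium causal ordering graph must coincide with the natural labelling, so that $f_i$ and $v_i$ share a cluster and the set of variables reachable from $f_i$ equals the set of Markov-ordering descendants of $v_i$. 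The key sub-lemma is that, in an acyclic cluster structure, a source cluster whose matched variable differs from its equation index would already violate "$v_j$ is a descendant of $f_j$'' for that variable's index; peeling off source clusters then propagates this to force the natural matching everywhere. Granting this, every altered variable, being a descendant of $f_i$, is a descendant of $v_i$ in the Markov ordering graph, which negates Condition~\ref{lemma:observe:c2}. Contrapositively, Condition~\ref{lemma:observe:c2} forces the matching to differ from the natural labelling at some source, so the corresponding $v_j$ fails to be reachable from $f_j$ and $X_j$ perfectly adapts; note that this adapting variable need not be $X_i$.

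The main obstacle is the graph-theoretic heart of the Condition~\ref{lemma:observe:c2} argument: translating "$v$ is altered but is a non-descendant of $v_i$ in the Markov ordering graph'' into "the equilibrium matching is non-natural, hence some self-match is broken.'' The delicate point is that the equilibrium Markov ordering graph carries no edge between two variables lying in the same cluster, so the identity "variables reachable from $f_i$ equals the Markov-ordering descendants of $v_i$'' holds cleanly only when every cluster of the equilibrium causal ordering graph contains a single variable---as it does in all the worked examples. When the equilibrium causal ordering graph contains a non-trivial strongly connected cluster (a genuine equilibrium cycle), the variables reachable from $f_i$ can strictly contain the Markov-ordering descendants of $v_i$, and this gap must be controlled separately; I expect this cyclic case to require either an additional acyclicity hypothesis on the equilibrium structure or a more careful treatment of multi-variable clusters, and it is where I would concentrate the technical effort.
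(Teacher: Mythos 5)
Your reduction and your treatment of Condition~\ref{lemma:observe:c1} coincide with the paper's proof: the paper likewise observes that the natural labelling forces $g_i$ and $v_i$ into one cluster of the dynamic causal ordering graph, so that by Theorem~\ref{thm:identification of perfect adaptation} it suffices to exhibit a single index $m$ with $v_m$ not a descendant of $f_m$ in the equilibrium causal ordering graph, and Condition~\ref{lemma:observe:c1} together with the alters-all-descendants hypothesis immediately yields $m=i$. For Condition~\ref{lemma:observe:c2}, however, you take a genuinely different route. You argue globally and contrapositively: if every $v_j$ is a descendant of $f_j$, peel off source clusters to force the natural matching and hence the identity between the variables reachable from $f_i$ and the Markov-ordering descendants of $v_i$. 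The paper instead runs a local matching chase: fix a perfect matching $M$, set $v_j=M(f_i)$, rule out $j=i$ on the grounds that $v_i$ would then be a Markov-ordering ancestor of the altered non-descendant, and iterate $f_j\mapsto M(f_j)$; since the matched variables are pairwise distinct and the relevant ancestor sets are nested inside the finite ancestor set of $v_i$, some $f_m$ is eventually matched to a non-ancestor of $v_m$, and $X_m$ perfectly adapts. In the regime where every cluster pairs one variable with one equation, your peeling lemma is sound (a source cluster $\{v_a,f_b\}$ with $b\neq a$ makes $v_a$ a non-descendant of $f_a$, since a source cluster has no incoming paths; topological induction propagates naturality), and your argument is then a clean global alternative to the paper's iteration. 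One phrasing repair: say ``some perfect matching is the natural one'' rather than ``the matching underlying the graph,'' since the causal ordering graph is matching-independent and inside a nontrivial cluster several matchings coexist, none of which is singled out.

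The multi-variable-cluster gap you flag is real, but it is instructive that the paper's own proof is exposed at exactly the same point, so your concern identifies an implicit assumption of the published argument rather than a defect peculiar to your route. Consider equilibrium equations $f_1(X_1,X_2)=0$, $f_2(X_1,X_2)=0$, $f_3(X_1,X_3)=0$ arising from canonical dynamics: the equilibrium causal ordering graph has clusters $S=\{v_1,v_2,f_1,f_2\}$ and $\{v_3,f_3\}$ with a single edge $v_1\to\mathrm{cl}(f_3)$, so the Markov ordering graph contains $v_1\to v_3$ while $v_2$ has no children. A soft intervention on $f_2$ generically alters $X_1,X_2,X_3$ (all descendants of $f_2$), and $v_1$ and $v_3$ are non-descendants of $v_2$ in the Markov ordering graph, so Condition~\ref{lemma:observe:c2} fires; yet under the natural matching the paper's claimed contradiction at $j=i$ does not materialize ($v_2$ is not a Markov-ordering ancestor of $v_1$ or $v_3$, precisely because within-cluster variables are non-adjacent in the Markov ordering graph), and under the swapped matching the paper's iteration cycles inside $S$ without ever reaching a fresh non-ancestor. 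Both proofs therefore silently use the identification ``variables reachable from $f_i$ equal the Markov-ordering descendants of $v_i$ when $v_i\in\mathrm{cl}(f_i)$,'' which holds when each cluster contains one variable (as in every worked example of the paper) but can fail for nontrivial strongly connected clusters. So your decision to concentrate the technical effort there is well placed; as written, your proposal, like the published proof, is complete only in the single-variable-per-cluster regime, and closing the general case appears to require either an additional hypothesis on the equilibrium cluster structure or a sharpened version of Theorem~4 of \citep{Blom2020} that tracks which variable inside a cluster carries the outgoing effect.
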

\begin{proof}
	The proof is given in Appendix~\ref{app:proof of theorem}.
\end{proof}

Theorem~\ref{thm:detect perfect adaptation} applies in particular to experiments on the filling bathtub, viral infection, and chemical reaction systems (for the corresponding graphs, see Figure \ref{fig:graphical representations}). For example, a soft intervention targeting $f_O$ in the bathtub example has no effect on the outflow rate at equilibrium $X_O$, because there is no directed path from $f_O$ to $v_O$ in Figure~\ref{fig:equilibrium causal ordering graph:bathtub}, and an intervention targeting $f_C$ has no effect on the equilibrium concentration $X_C$ in the reaction network because there is no directed path from $f_C$ to $v_C$ in Figure \ref{fig:equilibrium causal ordering graph:NFBLB}. In both cases the first condition of Theorem~\ref{thm:detect perfect adaptation} is satisfied. For the viral infection model, we see that a soft intervention targeting $f_E$ has an effect on the amount of infected cells $X_I$ at equilibrium (since there is a directed path from $f_E$ to $v_I$ in Figure~\ref{fig:equilibrium causal ordering graph:viral infection}), while there is no directed path from $v_E$ to $v_I$ in Figure~\ref{fig:markov ordering graph:viral infection}. In this case the second condition of Theorem \ref{thm:detect perfect adaptation} is satisfied.

We can devise the following scheme to detect perfectly adapted dynamical systems from data and background knowledge. The procedure relies on several assumptions, including $d$-faithfulness of the equilibrium distribution to the equilibrium Markov ordering graph. We start by collecting observational equilibrium data and use a causal discovery algorithm (such as LCD or FCI) to learn a (partial) representation of the equilibrium Markov ordering graph, assuming the observational distribution at equilibrium to be $d$-faithful w.r.t.\ the equilibrium Markov ordering graph. We then consider a soft intervention that changes a known equation in the first-order differential equation model (i.e.\ it targets a known equilibrium equation). If this intervention does not change the distribution of the variable corresponding to this target using the natural labelling, or if it changes the distribution of identifiable non-descendants of the variable corresponding to the target according to the learned Markov equivalence class, we can apply Theorem~\ref{thm:detect perfect adaptation} to identify the dynamical system as being capable of perfect adaptation. This way, we could identify perfect adaptation in specific cases such as the filling bathtub, viral infection, and reaction network by exploiting a combination of background knowledge and experimental data.
Another example of a possible application of Theorem~\ref{thm:detect perfect adaptation} is given in Section~\ref{sec:application:MEK inhibition}.

\section{Perfect adaptation in protein signalling}
\label{sec:application}

In this section we apply the ideas developed in the previous sections to a biological system that has been intensely studied
during the past decades to emphasize the practical relevance of perfect adaptation. 
The so-called RAS-RAF-MEK-ERK signaling cascade is a text-book example of a \emph{protein signalling network}, which forms an important ingredient of the `machinery' of cells in living organisms.
The molecular pathways in such a network fulfill various important functions, for instance the transmission and processing of information.
Systems biologists make use of dynamical systems to model such networks both qualitatively and quantitatively. 
Because of the high complexity of protein signalling networks, which typically consist of many different interacting components, this has also been considered a promising application domain for causal discovery methods.

In an influential paper, Sachs \emph{et al.} \cite{Sachs2005} applied causal discovery to reconstruct a protein signaling network from experimental data. 
Over the years, the dataset of \citep{Sachs2005} has become an often used `benchmark' for assessing the accuracy of causal discovery algorithms, where the `consensus network' in \citep{Sachs2005} is usually considered as the perfect ground truth.
The apparent successes of causal discovery on this particular dataset may have led to the impression that causal discovery algorithms can in general successfully discover the causal semantics of complex protein signaling networks from real-world data. 
However, this success has hitherto not been repeated on other, similar datasets, to the best of our knowledge.
Indeed, modeling and understanding such systems and inferring their behavior and structure from data still poses many challenges, for instance because of feedback loops and the inherent dynamical nature of such systems \citep{Sachs2013}. 

In this section, we focus on understanding the properties of the equilibrium distribution of a simple model of the RAS-RAF-MEK-ERK signalling pathway, and specifically investigate the phenomenon of perfect adaptation. 
Like many other biological systems, protein signalling networks often show adaptive behavior which helps to ensure a certain robustness of their functionality against various disturbances and perturbations \citep{Ferrell2016}.
Using the technique of causal ordering to analyze the conditional independences and causal relations that are implied by the model at equilibrium, we elucidate the causal interpretation of the output of constraint-based causal discovery algorithms when they are applied to equilibrium protein expression data if the parameters are such that the system shows perfect adaptation.

We test some of the model's predictions on real-world data and compare with another model that has been proposed.
We do not claim that the perfectly adaptive model that we analyze here is a realistic model of the protein signalling pathway.
Although we will show in Section~\ref{sec:application:real-world protein signalling} that the model is able to explain certain observations in real-world data, this is not that surprising for a model with that many parameters.\footnote{As John von Neumann once put it: ``With four parameters I can fit an elephant, and with five I can make him wiggle his trunk''.} 
Instead, our goal is to demonstrate that in systems with perfect adaptation the standard interpretation of the output of causal discovery algorithms may not apply.\footnote{This was already pointed out in \citep{Dash2005} for the example of the filling bathtub, but our work shows how widespread this phenomenon may be, and thereby emphasizes its practical relevance for causal discovery.}
This could explain why the output of certain causal discovery algorithms applied to the data of \citep{Sachs2005} appears to be at odds with the biological consensus network presented in \citep{Sachs2005}, see for example \citep{Ramsey2018} and \citep{Mooij2020}.

This section is structured as follows. 
In Section~\ref{sec:application:dynamical model} we introduce the perfectly adaptive model for the signalling pathway. 
We proceed with the associated graphical representations in Section~\ref{sec:application:causal ordering}. 
Then, in Section~\ref{sec:application:MEK inhibition}, we study the model's predictions under a soft intervention and verify these in simulations.
In Section~\ref{sec:application:real-world protein signalling} we take a closer look at some real-world data, more specifically, the data from \citep{Sachs2005}, and compare the model's predictions with the data.
In Section~\ref{sec:application:causal discovery}, we explain how the phenomenon of perfect adaptation may lead to unexpected outcomes of causal discovery methods.
In the end, we will have to conclude that the causal structure of the RAS-RAF-MEK-ERK cascade seems far from understood, and that it seems unlikely that the data in \citep{Sachs2005} is sufficiently rich to be able to draw strong conclusions regarding the causal behavior of the signalling network.


\subsection{Dynamical model}
\label{sec:application:dynamical model}

We adapt the mathematical model of \citep{Shin2009} for the RAS-RAF-MEK-ERK signalling cascade, as in \citep{BlomMooij_UAI_22}.\footnote{For simplicity, we omitted the feedback mechanism through RAF Kinase Inhibitor Protein (RKIP). In the differential equation for activated MEK we therefore discarded the dependence on RKIP. The goal here is not to give the most realistic model but to elucidate the phenomenon of perfect adaptation and the causal interpretation of the equilibrium Markov ordering graph for perfectly adapted dynamical systems.}
 Let $V=\{v_s, v_r, v_m, v_e\}$ be an index set for endogenous variables that represent the equilibrium concentrations $X_{s}$, $X_{r}$, $X_{m}$, and $X_{e}$ of active (phosphorylated) RAS, RAF, MEK, and ERK proteins, respectively. We model their dynamics as:
\begin{align}
\label{eq:mapk s}
\dot{X}_{s}(t) &= \frac{I(t) k_{Is} \left(T_s-X_s(t)\right)}{\left(K_{Is} + (T_s-X_s(t)) \right) \left(1+\left(\frac{X_e(t)}{K_e}\right)^{\frac{3}{2}}\right) } -  F_{s} k_{F_s s} \frac{X_s(t)}{K_{F_s s} + X_s(t)} \\
\label{eq:mapk r}
\dot{X}_r(t) &= \frac{X_s(t) k_{sr} (T_r - X_r(t))}{K_{sr} + (T_r - X_r(t))} - F_r k_{F_r r} \frac{X_r(t)}{K_{F_r r} + X_r(t)} \\
\label{eq:mapk m}
\dot{X}_m(t) &= \frac{X_r(t) k_{rm} (T_m - X_m(t))}{K_{rm} + (T_m - X_m(t))} - F_m k_{F_m m} \frac{X_m(t)}{K_{F_m m} + X_m(t)} \\
\label{eq:mapk e}
\dot{X}_e(t) &= \frac{X_m(t) k_{me} (T_e - X_e(t))}{K_{me} + (T_e - X_e(t))} - F_e k_{F_e e} \frac{X_e(t)}{K_{F_e e} + X_e(t)},
\end{align}
where we assume that $I(t)$ is an external stimulus or perturbation. Roughly speaking, there is a signalling pathway that goes from $I(t)$ to $X_s(t)$ to $X_r(t)$ to $X_m(t)$ to $X_e(t)$ with negative feedback from $X_e(t)$ on $X_s(t)$. As we did for the reaction network with negative feedback in Section \ref{sec:perfect adaptation:examples:negative feedback loop}, we will consider the system under certain saturation conditions. Specifically, for $(T_e-X_e(t))\gg K_{me}$ and $X_e(t)\gg K_{F_e e}$ the following approximation holds:
\begin{align}
\label{eq:erk approximation}
\dot{X}_e(t) \approx X_m(t) k_{me} - F_e k_{F_e e}.
\end{align}
We let $f_s$, $f_r$, $f_m$, and $f_e$ represent the equilibrium equations corresponding to the dynamical equations in \eqref{eq:mapk s}, \eqref{eq:mapk r}, \eqref{eq:mapk m}, and \eqref{eq:erk approximation} respectively, where we assume the input signal to have a constant (possibly random) value $I$. 

We simulated the model under these saturation conditions (picking values for $K_{me}$ and $K_{F_ee}$ close to zero) until it reached equilibrium, and then we recorded the changes in the concentrations $X_s(t)$, $X_r(t)$, $X_m(t)$ and $X_e(t)$ after a change in the input signal $I$. The details of this simulation can be found in Appendix~\ref{app:perfect adaptation simulations}. The results in Figure \ref{fig:adaptation} show that active RAS, RAF, and MEK revert to their original values after an initial response, while the equilibrium value of active ERK changes. Apparently, the equilibrium concentrations $X_s$, $X_r$, and $X_m$ perfectly adapt to the input signal $I$, while the equilibrium concentration $X_e$ of active ERK depends on the input signal $I$.

\begin{figure}[ht]\centering
	\begin{subfigure}[b]{0.24\textwidth}
		\centering
    \includegraphics[width=\textwidth]{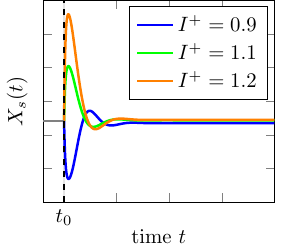}
    \caption{Concentration of active RAS ($X_s(t)$) against time ($t$).}
		\label{fig:adaptation:MAPK:RAS}
	\end{subfigure}\ 
	\begin{subfigure}[b]{0.24\textwidth}
		\centering
		\includegraphics[width=\textwidth]{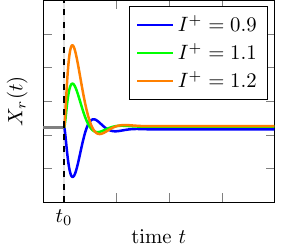}
    \caption{Concentration of active RAF ($X_r(t)$) against time ($t$).}
		\label{fig:adaptation:MAPK:RAF}
	\end{subfigure}\ 
	\begin{subfigure}[b]{0.24\textwidth}
		\centering
		\includegraphics[width=\textwidth]{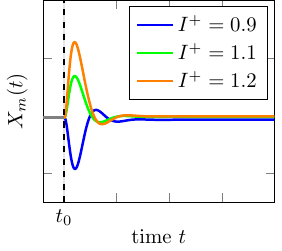}
		\caption{Concentration of active MEK ($X_m(t)$) against time ($t$).}
		\label{fig:adaptation:MAPK:MEK}
	\end{subfigure}\ 
	\begin{subfigure}[b]{0.24\textwidth}
		\centering
		\includegraphics[width=\textwidth]{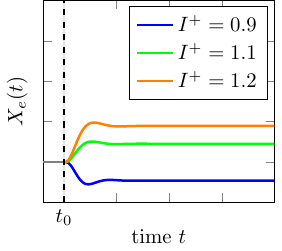}
		\caption{Concentration of active ERK ($X_e(t)$) against time ($t$).}
		\label{fig:adaptation:MAPK:ERK}
	\end{subfigure}
	\caption{Perfect adaptation in the model for the RAS-RAF-MEK-ERK signalling pathway. After an initial response to a change of the input signal from value $I^-=1.0$ to a value $I^+$ at time $t=t_0$, the equilibrium concentrations of active RAS, RAF, and MEK revert to their original values, while the concentration of active ERK changes.}
	\label{fig:adaptation:MAPK}
\end{figure}

\subsection{Graphical representations}
\label{sec:application:causal ordering}

We consider graphical representations of the protein signalling pathway. 
Using the natural labelling, we construct the dynamic bipartite graph in Figure~\ref{fig:protein pathway:dynamic bipartite graph} from the first-order differential equations, with the input signal $I$ included. The associated dynamic causal ordering graph is given in Figure~\ref{fig:protein pathway:dynamic causal ordering graph}.

Under saturation conditions, the equilibrium equations $f_s$, $f_r$, $f_m$, and $f_e$ obtained by setting equations \eqref{eq:mapk s}, \eqref{eq:mapk r}, \eqref{eq:mapk m}, and \eqref{eq:erk approximation} to zero have the bipartite structure in Figure~\ref{fig:protein pathway:bipartite graph}. Note that there is no edge $(f_e-v_e)$ in the equilibrium bipartite graph because $X_E(t)$ does not appear in the approximation \eqref{eq:erk approximation} of \eqref{eq:mapk e}. The associated equilibrium causal ordering graph is given in Figure~\ref{fig:protein pathway:causal ordering graph}, where the cluster $\{I\}$ is added with an edge towards the cluster $\{v_e, f_s\}$ because $I$ appears in equation~\eqref{eq:mapk s} and in no other equations. So far we have treated all symbols in equations \eqref{eq:mapk s}, \eqref{eq:mapk r}, \eqref{eq:mapk m}, and \eqref{eq:mapk e} as deterministic parameters. Let $w_s$, $w_r$, $w_m$, and $w_e$ represent independent exogenous random variables appearing in the equilibrium equations $f_s$, $f_r$, $f_m$, and $f_e$ respectively. After adding them to the causal ordering graph with edges to their respective clusters we construct the equilibrium Markov ordering graph for the equilibrium distribution in Figure~\ref{fig:protein pathway:markov ordering graph}.

\begin{figure}[ht]
%
%
%
	\begin{subfigure}[b]{0.33\textwidth}
		\begin{tikzpicture}[scale=0.75,every node/.style={transform shape}]
		\GraphInit[vstyle=Normal]
		\SetGraphUnit{1}
		\Vertex[L=$v_s$,x=0,y=0] {vs}
		\Vertex[L=$v_r$,x=1.1,y=0] {vr}
		\Vertex[L=$v_m$,x=2.2,y=0] {vm}
		\Vertex[L=$v_e$,x=3.3,y=0] {ve}
		\Vertex[L=$g_s$,x=0,y=-1.5] {fs}
		\Vertex[L=$g_r$,x=1.1,y=-1.5] {fr}
		\Vertex[L=$g_m$,x=2.2,y=-1.5] {fm}
		\Vertex[L=$g_e$,x=3.3,y=-1.5] {fe}
		\begin{scope}[VertexStyle/.append style = {minimum size = 4pt, 
			inner sep = 0pt,
			color=black}]
		\Vertex[x=-1.0, y=-1.5, L=$I$, Lpos=180, LabelOut]{I}
		\end{scope}
		\draw[EdgeStyle, style={->}](I) to (fs);
		\draw[EdgeStyle, style={-}](vs) to (fs);
		\draw[EdgeStyle, style={-}](vs) to (fr);
		\draw[EdgeStyle, style={-}](vr) to (fr);
		\draw[EdgeStyle, style={-}](vr) to (fm);
		\draw[EdgeStyle, style={-}](vm) to (fm);
		\draw[EdgeStyle, style={-}](vm) to (fe);
		\draw[EdgeStyle, style={-}](ve) to (fe);
		\draw[EdgeStyle, style={-}](ve) to (fs);
		\end{tikzpicture}
		\caption{Dynamic bipartite graph.}
		\label{fig:protein pathway:dynamic bipartite graph}
	\end{subfigure}%
	\begin{subfigure}[b]{0.33\textwidth}
		\begin{tikzpicture}[scale=0.75,every node/.style={transform shape}]
		\GraphInit[vstyle=Normal]
		\SetGraphUnit{1}
		\Vertex[L=$v_m$,x=3.0,y=0] {vm}
		\Vertex[L=$v_r$,x=2.0,y=0] {vr}
		\Vertex[L=$v_s$,x=1.0,y=0] {vs}
		\Vertex[L=$v_e$,x=0.0,y=0] {ve}
		\Vertex[L=$g_e$,x=3.0,y=-1.0] {fe}
		\Vertex[L=$g_m$,x=2.0,y=-1.0] {fm}
		\Vertex[L=$g_r$,x=1.0,y=-1.0] {fr}
		\Vertex[L=$g_s$,x=0.0,y=-1.0] {fs}
		
		\begin{scope}[VertexStyle/.append style = {minimum size = 4pt, 
			inner sep = 0pt,
			color=black}]
		\Vertex[x=-1.0, y=0.0, L=$I$, Lpos=180, LabelOut]{I}
		\end{scope}
		
		\node[draw=black, fit=(vm) (fe) (vr) (fm) (vs) (fr) (ve) (fs), inner sep=0.1cm]{};
		\draw[EdgeStyle, style={->}](I) to (-0.5,0.0);
		\end{tikzpicture}
		\caption{Dynamic causal ordering graph.}
		\label{fig:protein pathway:dynamic causal ordering graph}
	\end{subfigure}
  \\
	\begin{subfigure}[b]{0.33\textwidth}
		\vspace*{3mm}
		\begin{tikzpicture}[scale=0.75,every node/.style={transform shape}]
		\GraphInit[vstyle=Normal]
		\SetGraphUnit{1}
		\Vertex[L=$v_s$,x=0,y=0] {vs}
		\Vertex[L=$v_r$,x=1.1,y=0] {vr}
		\Vertex[L=$v_m$,x=2.2,y=0] {vm}
		\Vertex[L=$v_e$,x=3.3,y=0] {ve}
		\Vertex[L=$f_s$,x=0,y=-1.5] {fs}
		\Vertex[L=$f_r$,x=1.1,y=-1.5] {fr}
		\Vertex[L=$f_m$,x=2.2,y=-1.5] {fm}
		\Vertex[L=$f_e$,x=3.3,y=-1.5] {fe}
		\begin{scope}[VertexStyle/.append style = {minimum size = 4pt, 
			inner sep = 0pt,
			color=black}]
		\Vertex[x=-1.0, y=-1.5, L=$I$, Lpos=180, LabelOut]{I}
		\end{scope}
		\draw[EdgeStyle, style={->}](I) to (fs);
		\draw[EdgeStyle, style={-}](vs) to (fs);
		\draw[EdgeStyle, style={-}](vs) to (fr);
		\draw[EdgeStyle, style={-}](vr) to (fr);
		\draw[EdgeStyle, style={-}](vr) to (fm);
		\draw[EdgeStyle, style={-}](vm) to (fm);
		\draw[EdgeStyle, style={-}](vm) to (fe);
		\draw[EdgeStyle, style={-}](ve) to (fs);
		\end{tikzpicture}
		\caption{Equilibrium bipartite graph.}
		\label{fig:protein pathway:bipartite graph}
	\end{subfigure}%
	\begin{subfigure}[b]{0.33\textwidth}
		\begin{tikzpicture}[scale=0.75,every node/.style={transform shape}]
		\GraphInit[vstyle=Normal]
		\SetGraphUnit{1}
		\Vertex[L=$v_m$,x=3.9,y=0] {vm}
		\Vertex[L=$v_r$,x=2.6,y=0] {vr}
		\Vertex[L=$v_s$,x=1.3,y=0] {vs}
		\Vertex[L=$v_e$,x=0.0,y=0] {ve}
		\Vertex[L=$f_e$,x=3.9,y=-1.2] {fe}
		\Vertex[L=$f_m$,x=2.6,y=-1.2] {fm}
		\Vertex[L=$f_r$,x=1.3,y=-1.2] {fr}
		\Vertex[L=$f_s$,x=0.0,y=-1.2] {fs}
		
		\begin{scope}[VertexStyle/.append style = {minimum size = 4pt, 
			inner sep = 0pt,
			color=black}]
		\Vertex[x=-1.0, y=0.0, L=$I$, Lpos=180, LabelOut]{I}
		\end{scope}
		
		\node[draw=black, fit=(vm) (fe), inner sep=0.1cm]{};
		\node[draw=black, fit=(vr) (fm), inner sep=0.1cm]{};
		\node[draw=black, fit=(vs) (fr), inner sep=0.1cm]{};
		\node[draw=black, fit=(ve) (fs), inner sep=0.1cm]{};
		
		\draw[EdgeStyle, style={->}](I) to (-0.525,0.0);
		\draw[EdgeStyle, style={->}](vs) to (0.525,0.0);
		\draw[EdgeStyle, style={->}](vr) to (1.825,0.0);
		\draw[EdgeStyle, style={->}](vm) to (3.125,0.0);
		\end{tikzpicture}
		\caption{Equilibrium causal ordering graph.}
		\label{fig:protein pathway:causal ordering graph}
	\end{subfigure}%
	\begin{subfigure}[b]{0.33\textwidth}
		\begin{tikzpicture}[scale=0.75,every node/.style={transform shape}]
		\GraphInit[vstyle=Normal]
		\SetGraphUnit{1}
		\Vertex[L=$v_e$, x=0.0, y=0] {vE}
		\Vertex[L=$v_s$, x=1.1, y=0] {vS}
		\Vertex[L=$v_r$, x=2.2, y=0] {vR}
		\Vertex[L=$v_m$, x=3.3, y=0] {vM}
		\Vertex[L=$w_{s}$, x=0, y=-1.1, style={dashed}] {wS}
		\Vertex[L=$w_{r}$, x=1.1, y=-1.1, style={dashed}] {wR}
		\Vertex[L=$w_{m}$, x=2.2, y=-1.1, style={dashed}] {wM}
		\Vertex[L=$w_{e}$, x=3.3, y=-1.1, style={dashed}] {wE}
		
		\begin{scope}[VertexStyle/.append style = {minimum size = 4pt, 
			inner sep = 0pt,
			color=black}]
		\Vertex[x=-1.0, y=0.0, L=$I$, Lpos=180, LabelOut]{I}
		\end{scope}
		\draw[EdgeStyle, style={->}](I) to (vE);
		\draw[EdgeStyle, style={->}](wS) to (vE);
		\draw[EdgeStyle, style={->}](wR) to (vS);
		\draw[EdgeStyle, style={->}](wM) to (vR);
		\draw[EdgeStyle, style={->}](wE) to (vM);
		\draw[EdgeStyle, style={->}](vM) to (vR);
		\draw[EdgeStyle, style={->}](vR) to (vS);
		\draw[EdgeStyle, style={->}](vS) to (vE);
		\end{tikzpicture}
		\caption{Equilibrium Markov ordering graph.}
		\label{fig:protein pathway:markov ordering graph}
	\end{subfigure}
  \caption{Five graphs associated with the protein signalling pathway model under saturation conditions where indices $s,r,m,e$ correspond to concentrations of active RAS, RAF, MEK, and ERK respectively.}
	\label{fig:protein pathway}
\end{figure}
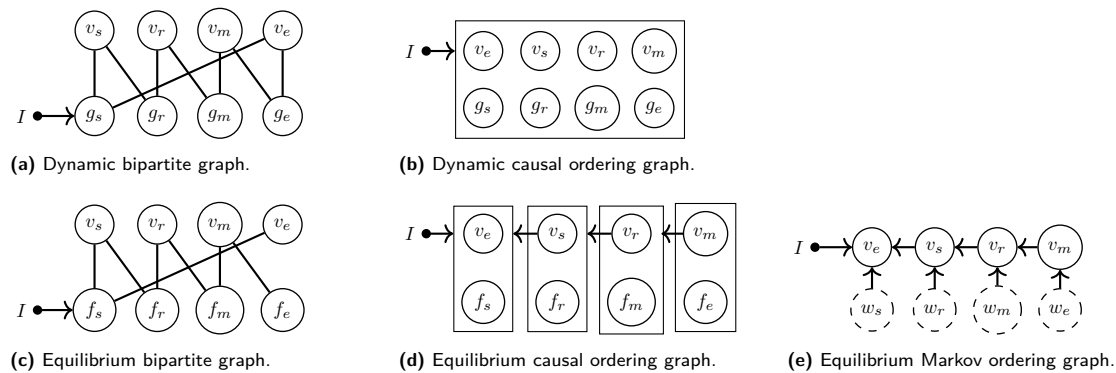

There is a directed path from the input vertex $I$ to $v_s$, $v_r$, and $v_m$ in the dynamic causal ordering graph (see Figure~\ref{fig:protein pathway:dynamic causal ordering graph}), while the equilibrium causal ordering graph has no directed paths from $I$ to either $v_s$, $v_r$, or $v_m$ (see Figure~\ref{fig:protein pathway:causal ordering graph}). Under Assumption~\ref{ass:transient response}, we can apply Theorem~\ref{thm:identification of perfect adaptation} to conclude that $X_s$, $X_r$, and $X_m$ will perfectly adapt to a persistent change in the input signal $I$. This is in line with what we observed in the simulations (see Figure~\ref{fig:adaptation:MAPK}).


The $d$-separations in the equilibrium Markov ordering graph (see Figure~\ref{fig:protein pathway:markov ordering graph}) imply conditional independences between the corresponding variables at equilibrium. For example, from the graph in Figure~\ref{fig:protein pathway:markov ordering graph} we read off the following (implied) conditional independences:
\begin{align*}
\quad I \indep v_r,  \quad I \indep v_m, \quad \quad v_e \indep v_m \given v_r\,.
\end{align*}
We verified that these conditional independences indeed appear in the simulated equilibrium distribution of the model (see Appendix~\ref{app:conditional independences} for details).

\subsection{Inhibiting the activity of MEK}
\label{sec:application:MEK inhibition}

A common biological experiment that is used to study protein signalling pathways is the use of an inhibitor that decreases the activity of a protein on the pathway. Such an inhibitor slows down the rate at which the active protein is able to activate another protein. Here, we consider inhibition of MEK activity. We can model this as a change of the parameters of the differential equations in which $X_m(t)$ appears. 
We can interpret this experiment as a soft intervention on differential equation $g_e$ in the dynamic model and on equation $f_e$ at equilibrium, decreasing the rate $k_{me}$ at which ERK is activated. Since there is a directed path from $f_e$ to $v_m,v_r,v_s$, and $v_e$ in the equilibrium causal ordering graph in Figure~\ref{fig:protein pathway:causal ordering graph}, we expect that a change in an input signal $I_e$ on $f_e$ (e.g.\ a change in the parameter $k_{me}$ in the case of the MEK inhibition) affects the equilibrium concentrations of active MEK, RAF, RAS, and ERK. 

We assessed the effect of decreasing the activity of MEK on the equilibrium concentrations of RAS, RAF, MEK, and ERK. To that end, we simulated the perfectly adapted model (with parameters as described in Appendix~\ref{app:perfect adaptation simulations}, in particular, $k_{me}=1.0$) until it reached equilibrium. We then decreased the parameter that controls the activity of MEK to $k_{me}=0.98$. The recorded responses of the concentrations of active RAS, RAF, MEK, and ERK are displayed in Figure~\ref{fig:experiments:simulations:inhibition}. From this we confirm our prediction that inhibition of MEK activity affects the equilibrium concentrations of RAS, RAF, MEK, and ERK. A qualitative aspect of this change that one cannot read off from the graph is that the MEK inhibition \emph{increases} (rather than decreases) the concentrations of active MEK, RAF, RAS according to the model for this choice of the parameters.

Note that RAS, RAF, and MEK are non-descendants of ERK in the equilibrium Markov ordering graph in Figure~\ref{fig:protein pathway:markov ordering graph}, so that under the assumptions in Theorem~\ref{thm:detect perfect adaptation} we could actually use this experiment to detect perfect adaptation in the protein pathway. 

\begin{figure}[ht]
	\centering
	\includegraphics{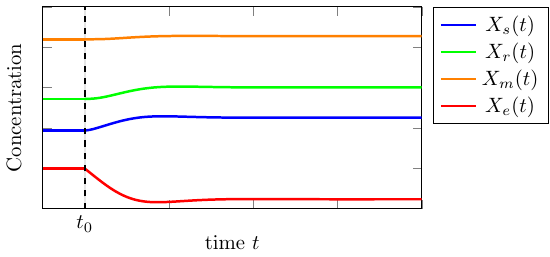}
	\caption{Simulation of the response of the concentrations of active RAS, RAF, MEK, and ERK after inhibition of the activity of MEK. The system starts out in equilibrium with $k_{me}=1.0$. The concentrations of RAS, RAF, MEK, and ERK are recorded after the parameter controlling MEK activity is decreased to $k_{me}=0.98$ from $t=t_0$ on.}
	\label{fig:experiments:simulations:inhibition}
\end{figure}

\subsection{Testing model predictions on real-world data}
\label{sec:application:real-world protein signalling}

In this subsection, we verify some of the predictions of the model we obtained in Sections~\ref{sec:application:causal ordering} and \ref{sec:application:MEK inhibition} on real-world data.
We will compare with predictions of the causal Bayesian network model proposed by \citep{Sachs2005}.

Figure~\ref{fig:sachs} shows scatter-plots for the (logarithms of) the expressions of active RAF, MEK and ERK in the multivariate
single-cell protein expression dataset that was used in \citep{Sachs2005}, for three (out of 14) different experimental conditions.
The baseline condition (in blue) is the one where the cells were treated with anti-CD3 and anti-CD28, activators of the RAS-RAF-MEK-ERK signalling cascade.
In another condition (in red), the cells were additionally exposed to U0126, a known inhibitor of MEK activity.
By inspecting the scatter plots, we get a quick visual check of some of the predictions of the model.
In particular, these plots clearly show that inhibition of MEK activity by administering U0126 results in an increase in the concentrations of active RAF and active MEK and a reduction in the concentration of active ERK. 
Furthermore, we clearly see a strong dependence between RAF and MEK (in both experimental conditions), but there is no discernible dependence between RAF and ERK or between MEK and ERK (in either experimental condition). 
In the light of the supposedly direct effect of MEK on ERK, it is surprising that the data shows no significant dependence between the two.\footnote{In addition, ERK levels are considerably lower than those of RAF and MEK. This might be something specific to this experimental setting (perhaps there was too much time between stimulation and measurement to see the ERK response), as in other experiments high ERK levels and strong correlations with MEK have been observed \citep{Filippi++2016}. In the setting of the experiment of \citep{Sachs2005}, the G06976 treatment condition shows that ERK levels can actually get as high as those of RAF and MEK.} 
This apparent `faithfulness violation' is problematic for constraint-based causal discovery methods, as they will typically not identify the causal relation between MEK and ERK.

\begin{figure}[ht]\centering
	\begin{subfigure}[b]{0.32\textwidth}
		\centering
		\includegraphics[width=\textwidth]{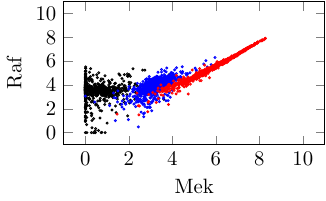}
		\caption{Log-expressions of active MEK and RAF.}
		\label{fig:sachs:rafmek}
	\end{subfigure}\hfill
	\begin{subfigure}[b]{0.33\textwidth}
		\centering
		\includegraphics[width=\textwidth]{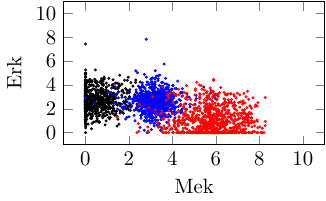}
		\caption{Log-expressions of active MEK and ERK.}
		\label{fig:sachs:erkmek}
	\end{subfigure}\hfill
	\begin{subfigure}[b]{0.32\textwidth}
		\centering
    \includegraphics[width=\textwidth]{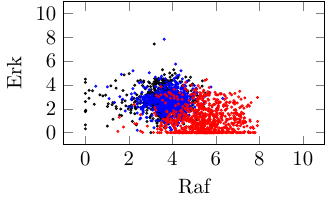}
		\caption{Log-expressions of active RAF and ERK.}
		\label{fig:sachs:raferk}
	\end{subfigure}%
  \caption{Scatter plots of the logarithms of active RAF, MEK, and ERK concentrations for the data in \citep{Sachs2005}.
  The blue circles correspond to cells treated only with anti-CD3 and anti-CD28, which activate the signaling cascade.
  The red circles correspond to cells treated with anti-CD3, anti-CD28 and in addition, the MEK-activity inhibitor U0126. 
  The inhibition of MEK results in an increase of MEK and RAF, whereas ERK is reduced. The black circles correspond to cells treated with $\beta$2cAMP (but not anti-CD3 and anti-CD28), which seems to affect MEK, but leaves RAF and ERK invariant.}
	\label{fig:sachs}
\end{figure}

\begin{table}[b]\centering
  \caption{For various observations in the data of \citep{Sachs2005}, we indicate whether they are predicted by 
  the causal Bayesian network model $\mathrm{RAF} \to \mathrm{MEK} \to \mathrm{ERK}$ proposed by \citep{Sachs2005} and
  by our perfectly adaptive equilibrium model (Section~\ref{sec:application:causal ordering}).\label{tab:sachs:predictions}}
\begin{tabular}{lll}
\toprule
  Observation & Causal Bayesian network model & Perfectly adaptive model \\
\midrule
  RAF and MEK are dependent in both conditions                 & $+$ & $+$ \\
  MEK and ERK are independent in both conditions               & $-$ & $-$ \\
  Inhibition of MEK activity affects active RAF & $-$ & $+$ \\
  Inhibition of MEK activity affects active MEK & $-$ & $+$ \\
  Inhibition of MEK activity affects active ERK & $+$ & $+$ \\
		\bottomrule
\end{tabular}
\end{table}

According to \citep{Sachs2005}, the biological consensus (at the time) was that there is a signalling pathway from RAF to MEK to ERK.\footnote{Nowadays, the biological consensus appears to be that RAF activates MEK, MEK activates ERK, and that it is very likely that there is negative feedback from ERK to RAF, although the molecular pathway of this feedback remains unknown \citep{Fritsche2011}.}
They propose to model this as a causal Bayesian network $\mathrm{RAF} \to \mathrm{MEK} \to \mathrm{ERK}$, and this is also the structure identified by their causal discovery algorithm.
That model predicts that inhibiting MEK activity can only affect ERK.
In Table~\ref{tab:sachs:predictions}, we summarize some of the observations made using the scatter plots in Figure~\ref{fig:sachs}, and whether or not they are in line with the predictions of the models (our perfectly adaptive model on the one hand, and the causal Bayesian network model on the other hand).
We conclude that the predictions of the perfectly adaptive model are more in agreement with the data than those of the causal Bayesian network model.

Still, the perfectly adaptive model does not explain all aspects of the data. 
For example, the effects of the $\beta$2cAMP stimulation (black circles in Figure~\ref{fig:sachs}) are hard to explain with either model. 
$\beta$2cAMP is an AMP analogue that is supposed to activate the RAS-RAF-MEK-ERK cascade by promoting active RAS.
It seems counterintuitive that this would lead to a strong reduction of active MEK in comparison to the activation of the cascade by means of anti-CD3 and anti-CD28, while leading to the same levels of active RAF and ERK.
For completeness, in Table~\ref{tab:sachs:effects} we have indicated for all 12 perturbations in \citep{Sachs2005} the effects on the levels of active RAF, MEK and ERK. 
It appears questionable whether a simple causal model (such as the perfectly adaptive model) could account for all the observed perturbation effects.

\begin{table}[h!b]
  \caption{Qualitative effects of reagents on the measured abundances of active RAF, MEK and ERK, as read off from the data in \cite{Sachs2005}. Legend: $--$ strong decrease, $-$ decrease, ($-$) slight decrease, $0$ no change, ($+$) slight increase, $+$ increase, $++$ strong increase. Most researchers only use conditions 1--9 for causal discovery. The data appears to contain an error: RAF and MEK values are identical for the first 848 samples in conditions 3 and 7; therefore, we disregarded those values.}
  \label{tab:sachs:effects}
	\centering
	\begin{tabular}{l l c c c}
		\toprule
    Condition & Reagents                                        & RAF   & MEK   & ERK \\
		\midrule
    1         & anti-CD3 + anti-CD28                            & \multicolumn{3}{c}{\dots\dots baseline \dots\dots} \\
    3         & anti-CD3 + anti-CD28 + AKT inhibitor            & NA    & NA    & $0$   \\
    4         & anti-CD3 + anti-CD28 + G06976                   & $++$  & $++$  & $++$  \\
    5         & anti-CD3 + anti-CD28 + Psitectorigenin          & $0$   & $0$   & $--$  \\
    6         & anti-CD3 + anti-CD28 + U0126                    & $++$  & $++$  & ($+$) \\
    7         & anti-CD3 + anti-CD28 + LY294002                 & NA    & NA    & $0$   \\
    8         & PMA                                             & $-$   & $0$   & $0$   \\
    9         & $\beta$2cAMP                                    & ($-$) & $--$  & ($+$) \\
    \midrule
    2         & anti-CD3 + anti-CD28 + ICAM-2                   & \multicolumn{3}{c}{\dots\dots baseline \dots\dots} \\
    10        & anti-CD3 + anti-CD28 + ICAM-2 + AKT inhibitor   & $0$   & $0$   & $--$  \\
    11        & anti-CD3 + anti-CD28 + ICAM-2 + G06976          & $++$  & $++$  & $++$  \\
    12        & anti-CD3 + anti-CD28 + ICAM-2 + Psitectorigenin & $0$   & $0$   & $--$  \\
    13        & anti-CD3 + anti-CD28 + ICAM-2 + U0126           & $0$   & $(+)$ & $--$  \\
    14        & anti-CD3 + anti-CD28 + ICAM-2 + LY294002        & $0$   & $0$   & $--$  \\
		\bottomrule
	\end{tabular}
\end{table}

\subsection{Caveats for causal discovery}\label{sec:application:causal discovery}

Experiments in which the protein signalling network is perturbed in various ways are of crucial importance to obtaining a causal understanding of the system.
While very sophisticated causal discovery algorithms are available, we will here illustrate the key concepts by means of applying one of the simplest causal discovery algorithms based on conditional independences to equilibrium data from the model.
We simulate the system in two different conditions, a baseline and a condition where the activity of MEK has been inhibited.

Consider observational equilibrium data from the protein signalling pathway model and also experimental equilibrium data from a setting where MEK activity is inhibited.
We introduce a \emph{context} variable $C$ that in this case simply indicates for each sample whether or not a MEK inhibition was applied.
Because the decision whether to apply the MEK inhibitor to a sample occurs \emph{before} the measurement, the equilibrium concentrations measured afterwards cannot causally affect this decision. 
Hence $C$ is an exogenous variable, i.e., it is not caused by other observed variables.
This motivates the use of the LCD algorithm, which requires the existence of such a variable.
For the MEK inhibition, the context variable represents a (soft) intervention on the equation $f_e$ in the equilibrium causal ordering graph in Figure~\ref{fig:protein pathway:causal ordering graph}. 
The equilibrium Markov ordering graph that includes the context variable $C$ (but where the independent exogenous random variables have been marginalized out) is given in Figure~\ref{fig:protein pathway:lcd:model}. 
To construct this graph, the context variable $C$ is first added to the equilibrium causal ordering graph in Figure~\ref{fig:protein pathway:causal ordering graph} as a singleton cluster with an edge towards the cluster $\{v_m, f_e\}$. 
The equilibrium Markov ordering graph is then constructed from the resulting directed cluster graph in the usual way. 
From Figure~\ref{fig:protein pathway:lcd:model}, we can read off (conditional) independences to find the LCD triples that are implied by the equilibrium equations of the model. 
We find that our model implies the following LCD triples: $(C, v_m, v_r)$, $(C, v_m, v_s)$, $(C,v_m,v_e)$, $(C,v_r,v_s)$, $(C,v_r,v_e)$, and $(C,v_s, v_e)$.
We verified this by explicit simulation (details in Appendix~\ref{app:conditional independences}).

In particular, one of the LCD triples we obtained is $(C, v_m, v_r)$, whereas the triple $(C, v_r, v_m)$ does not qualify as such.
According to the standard interpretation of causal discovery algorithms, this would imply that MEK causes RAF rather than the other way around (see also Section~\ref{sec:background:local causal discovery}),
a surprising finding in the light of the text-book treatments of the RAS-RAF-MEK-ERK signalling pathway, and the `consensus network' according to \citep{Sachs2005}.
The equilibrium Markov ordering graph corresponding to the causal Bayesian network model of \citep{Sachs2005} is shown in Figure~\ref{fig:protein pathway:lcd:sachs}.
As one can see from Figure~\ref{fig:protein pathway:lcd:sachs}, the causal Bayesian network model implies no LCD triples at all.

Apparently, the causal relationship RAF$\to$MEK appears to be reversed in the LCD triples found in the equilibrium data of the perfectly adaptive model.
Similar observations of apparent `causal reversals' in protein interaction networks have been observed more often, see also \citep{Triantafillou2017, Mooij2020, Mooij2013b, Ramsey2018, Boeken2020}.
We conclude that the mechanism of perfect adaptation provides one possible theoretical explanation of what might seem at first sight to be an incorrect reversal of a causal edge.

\begin{table}[b]
  \centering
  \caption{Results of conditional independence tests on the (log-transformed) protein expression data of \citep{Sachs2005}. Specifically, we report the p-values of Kendall's test for partial correlation.}
	\label{tab:sachs:conditional independences}
    \begin{tabular}{ll}
		\toprule
      (Conditional) independence tested & p-value \\
		\midrule
      $v_r \indep v_m \given C$ & $<4.6 \times {10}^{-324}$ \\
      $v_r \indep v_e \given C$ & $0.79$ \\
      $v_m \indep v_e \given C$ & $0.35$ \\
      $v_r \indep C$            & $4.0\times 10^{-172}$ \\
      $v_m \indep C$            & $3.7\times 10^{-245}$ \\
      $v_e \indep C$            & $3.5\times 10^{-138}$ \\
      $v_r \indep C \given v_m$ & $2.7\times 10^{-8}$ \\
      $v_m \indep C \given v_r$ & $8.2\times 10^{-168}$ \\
      $v_r \indep C \given v_e$ & $4.3\times 10^{-210}$ \\
      $v_m \indep C \given v_e$ & $<4.6 \times {10}^{-324}$ \\
      $v_e \indep C \given v_m$ & $1.5\times 10^{-134}$ \\
      $v_e \indep C \given v_r$ & $2.6\times 10^{-156}$ \\
    \bottomrule
    \end{tabular}
\end{table}

We investigated which of these LCD patterns can be found in the real-world data.
In Table~\ref{tab:sachs:conditional independences} we list the $p$-values of conditional independence tests applied to the data of \citep{Sachs2005}.\footnote{We made use of Kendall's test for partial correlation because the implementation in the R package \texttt{ppcor} can deal with ties, while that for Spearman's test cannot.}
Using a reasonable critical level for the test (say $\alpha=0.01$), we do not find any LCD pattern.
Yet, the conditional dependence of RAF on the context variable when conditioning on MEK is much weaker than the conditional dependence of MEK on the context variable when conditioning on RAF. 
Strictly speaking, no conclusions should be drawn from this, but it does seem to suggest that also here the data is more in line with the perfectly adaptive model than with the causal Bayesian network model; indeed, if the adaptation is imperfect, for example because the saturation conditions only hold approximately, one would expect to see a weak conditional dependence $v_r \not\indep C \given v_m$.

\begin{figure}[ht]
	\centering
	\begin{subfigure}[b]{0.4\textwidth}
    \begin{tikzpicture}
      \centering
      \GraphInit[vstyle=Normal]
      \SetGraphUnit{1}
      \Vertex[L=$v_e$, x=0.0, y=0] {vE}
      \Vertex[L=$v_s$, x=1.1, y=0] {vS}
      \Vertex[L=$v_r$, x=2.2, y=0] {vR}
      \Vertex[L=$v_m$, x=3.3, y=0] {vM}
      
      \begin{scope}[VertexStyle/.append style = {minimum size = 4pt, 
        inner sep = 0pt,
        color=black}]
      \Vertex[x=0, y=1.0, L=$I$, Lpos=180, LabelOut]{I}
      \Vertex[L=$C$, x=3.3, y=1.0, Lpos=180, LabelOut]{C}
      \end{scope}
      \draw[EdgeStyle, style={->}](I) to (vE);
      \draw[EdgeStyle, style={->}](vM) to (vR);
      \draw[EdgeStyle, style={->}](vR) to (vS);
      \draw[EdgeStyle, style={->}](vS) to (vE);
      \draw[EdgeStyle, style={->}](C) to (vM);
    \end{tikzpicture}
    \caption{Perfectly adaptive model (with feedback)}
		\label{fig:protein pathway:lcd:model}
	\end{subfigure}\qquad
	\begin{subfigure}[b]{0.3\textwidth}
    \begin{tikzpicture}
      \centering
      \GraphInit[vstyle=Normal]
      \SetGraphUnit{1}
      \Vertex[L=$v_r$, x=0.0, y=0] {vR}
      \Vertex[L=$v_m$, x=1.1, y=0] {vM}
      \Vertex[L=$v_e$, x=2.2, y=0] {vE}
      
      \begin{scope}[VertexStyle/.append style = {minimum size = 4pt, 
        inner sep = 0pt,
        color=black}]
      \Vertex[x=0, y=1.0, L=$I$, Lpos=180, LabelOut]{I}
      \Vertex[L=$C$, x=2.2, y=1.0, Lpos=180, LabelOut]{C}
      \end{scope}
      \draw[EdgeStyle, style={->}](I) to (vR);
      \draw[EdgeStyle, style={->}](vR) to (vM);
      \draw[EdgeStyle, style={->}](vM) to (vE);
      \draw[EdgeStyle, style={->}](C) to (vE);
    \end{tikzpicture}
    \caption{Causal Bayesian network model}
		\label{fig:protein pathway:lcd:sachs}
	\end{subfigure}
	\caption{Equilibrium Markov ordering graphs of the protein signalling pathway with the context variable $C$ included, corresponding to two  hypothetical models. This context variable indicates whether a cell was treated with a MEK inhibitor or not.}
	\label{fig:protein pathway:lcd}
\end{figure}
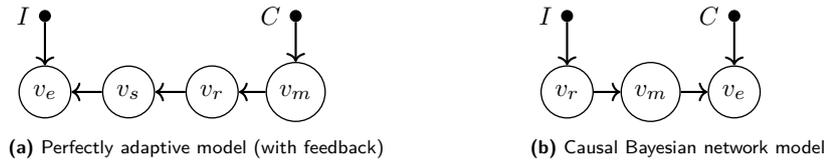

\section{Discussion and conclusion}
\label{sec:discussion}

Perfect adaptation is the phenomenon that a dynamical system initially responds to a change of input signal but reverts back to its original value as the system converges to equilibrium. We used the technique of causal ordering to obtain sufficient graphical conditions to identify perfect adaptation in a dynamical system described by a combination of equations and first-order differential equations. To represent the structure of the (non-equilibrium) dynamical system, we introduced the notions of the \emph{dynamic bipartite graph} and the corresponding \emph{dynamic causal ordering graph} obtained by the causal ordering algorithm. Moreover, we showed how perfect adaptation can be detected in equilibrium observational and experimental data for soft interventions with known targets.
We illustrated our ideas on a variety of dynamical models and corresponding equilibrium equations.
We believe that the methods presented in this work provide a useful tool for the characterization of a large class of dynamical systems that are able to achieve perfect adaptation and for the automated analysis of the behavior of certain perfectly adapted dynamical systems.

In all examples that we discussed, the technique of causal ordering revealed the structure of a \emph{given} set of equations.
In some cases, more structure can be revealed by first rewriting the equations before applying the causal ordering algorithm.
In Appendix~\ref{app:rewriting equations} we analyze a dynamical system describing a basic enzyme reaction, for which rewriting of the equilibrium equations reveals more structure (and hence yields a stronger Markov property).
In Appendix~\ref{app:ifflp network}, we analyze the `Incoherent Feed-forward Loop with a Proportioner Node' (IFFLP).
We observe that a nonlinear transformation of the variables (and rewriting the equations correspondingly) reveals more structure, and hence yields more conditional independences and less causal relations amongst the transformed variables.
The further development of methods to analyze perfectly adapted dynamical systems that do not satisfy the conditions of Theorem~\ref{thm:identification of perfect adaptation} remains a challenge for future work. 
Also, the question of how one can more generally discover causal structure using nonlinear transformations of the variables is an interesting topic for future research that relates to what is nowadays known as `causal representation learning' in machine learning \citep{Chalupka++2017}.

We also investigated the consequences of the phenomenon of perfect adaptation for causal discovery.
We demonstrated that for perfectly adapted dynamical systems the output of existing constraint-based causal discovery algorithms applied to equilibrium data may appear counterintuitive and at odds with our understanding of the mechanisms that drive the system. 
As we have illustrated in this work, careful application of the causal ordering algorithm enables a better theoretical understanding of these phenomena.

We applied our approach to a model for a well-known protein signalling pathway, and tested the model's predictions both in simulations and on real-world protein expression data. 
The challenges for causal discovery that are encountered in non-linear dynamical systems with feedback loops, possibly leading to context-specific perfectly adaptive behavior, are substantial.
If the behavior of the model that we analyzed in Section~\ref{sec:application} is representative of that of actual systems occurring \emph{in vitro} and \emph{in vivo}, then it seems unlikely that existing causal discovery methods based on causal Bayesian networks will lead to reasonable results.
These observations further motivate the development of causal discovery algorithms based on bipartite graphical representations that would be more widely applicable than the existing ones based on causal Bayesian networks or simple structural causal models. 

\section*{Acknowledgments}

We thank the reviewers and the editor for their constructive comments, which helped us improve our work.

\medskip
\noindent \textbf{Funding information:}
This work was supported by the ERC under the European Union's Horizon 2020 research and innovation programme (grant agreement 639466).

\medskip
\noindent 
\textbf{Author contributions:}
The idea to use causal ordering to study perfectly adapted dynamical systems
was due to TB, as well as the development of the theoretical results. The
notions of the dynamic bipartite graph and the dynamic causal
ordering graph were proposed by JMM. Simulations were conducted by TB.
Analysis of the protein signaling data and preparation of the manuscript
was done by both authors.
The SDC approach was applied for the sequence of authors.
All authors have accepted responsibility for the entire content of this manuscript and approved its submission.

\medskip
\noindent 
\textbf{Conflict of interest:}
JMM is a member of the Editorial Board of Journal of Causal Inference and
was not involved in the review process of this article.

%

\medskip
\noindent 
\textbf{Data availability:}
The simulated datasets can be reproduced with the \texttt{R} code provided at \url{https://bitbucket.org/jorism/jci2023paper.git} as free and open source software.
The protein expression dataset of \cite{Sachs2005} analyzed in Section~\ref{sec:application:real-world protein signalling}
is publicly available as Supplementary Material to \cite{Sachs2005} at
\url{https://www.science.org/doi/suppl/10.1126/science.1105809/suppl_file/sachs.som.datasets.zip}

\bibliography{library.bib}

\appendix

\section{Proof of Theorem \ref{thm:detect perfect adaptation}}
\label{app:proof of theorem}

\detectpa*

\begin{proof}
	If condition~\ref{lemma:observe:c1} holds there is no directed path from $f_i$ to $v_i$ in the equilibrium causal ordering graph, by the assumption that the soft intervention on $f_i$ changes the equilibrium distribution of all its descendants. By definition of the dynamic bipartite graph there is a directed path from $g_i$ to $v_i$ in the dynamic causal ordering graph, because $g_i$ and $v_i$ end up in the same cluster (note that this follows by using the natural labelling as perfect matching and the result that the causal ordering graph does not depend on the chosen perfect matching \citep{Blom2020}). It follows from Theorem~\ref{thm:identification of perfect adaptation} that $X_i$ perfectly adapts to an input signal $I_{f_i}$ on $f_i$ (i.e.\ a soft intervention targeting $\dot{X}_i(t)$ and thus the equilibrium equation $f_i$). 
	
	Suppose that \ref{lemma:observe:c1} does not hold while \ref{lemma:observe:c2} does hold. By Theorem 4 in \citep{Blom2020} (which roughly states that the presence of a causal effect at equilibrium implies the presence of a corresponding directed path in the equilibrium causal ordering graph) we have that $f_i$ is an ancestor of $v_i$ and some $v_h$ in the equilibrium causal ordering graph, while $v_i$ is not an ancestor of $v_h$ in the equilibrium Markov ordering graph. For a perfect matching $M$ of the equilibrium bipartite graph let $v_j = M(f_i)$. Then $v_j$ is in the same cluster as $f_i$ in the equilibrium causal ordering graph by construction. Note that $j=i$ would give a contradiction, as then $v_i$ would be an ancestor of $v_h$ in the equilibrium Markov ordering graph. Suppose that the vertex $f_j$, which is associated with $v_j$ through the natural labelling, is matched to a non-ancestor of $v_j$ in the equilibrium causal ordering graph. Because of the edge $(g_j - v_j)$ in the dynamic bipartite graph, it follows from Theorem~\ref{thm:identification of perfect adaptation} that $X_j$ perfectly adapts to an input signal $I_{f_j}$ on $f_j$. Therefore the system is able to achieve perfect adaptation. Now suppose that $f_j$ is matched to an ancestor $v_k$ of $v_j$ in the equilibrium causal ordering graph, and consider the vertex $f_k$. The previous argument can be repeated to show perfect adaptation for $X_k$ is present when $f_k$ is matched to a non-ancestor of $v_k$ in the equilibrium causal ordering graph. Otherwise, $f_k$ must be matched to an ancestor of $v_k$. Note that the ancestors of $v_k$ are a subset of the ancestors of $v_j$, which in turn are a subset of the ancestors of $v_i$. In a finite system of equations, $v_i$ has a finite set of ancestors and therefore we eventually find, by repeating our argument, a vertex $f_m$ that cannot be matched to an ancestor of $v_m$ because $v_m$ has no ancestors that are not matched to one of the vertices $f_i, f_j, f_k, \ldots$ that were considered up to that point. Because $f_m$ is matched to a non-ancestor we then find that $X_m$ perfectly adapts to an input signal on $I_{f_m}$ as before.
\end{proof}

\section{Simulation settings}
\label{app:perfect adaptation simulations}

For the simulations in Figures \ref{fig:adaptation}, \ref{fig:adaptation:MAPK} and \ref{fig:experiments:simulations:inhibition} of the model of a filling bathtub, the viral infection model, the reaction network with a feedback loop, and the protein pathway we used the settings listed below. Since we only simulated a single response, we used constant values for the exogenous random variables as well.

\begin{description}
	\item[Filling bathtub] First we recorded the behavior of the system for the parameters $I_K=1.2$, $U_I=5.0$, $U_1=1.1$, $U_2=1.0$, $U_3=1.2$, $U_4=1.0$, $U_5=0.8$, $g=1.0$ until it reached equilibrium. We then changed the input parameter $I_K$ to $0.8$, $1.0$, and $1.3$ and recorded the response until the system reverted to equilibrium.
	\item[Viral infection] For the parameter settings $I_{\sigma}=1.6$, $d_T=0.9$, $\beta=0.9$, $d_I=0.3$, $k=1.5$, $a=0.1$, $d_E=0.25$, we simulated the model until it reached equilibrium. We changed the input parameter $I_{\sigma}$ to $1.1$, $1.3$, and $2.0$ and recorded the response until equilibrium was reached.
	\item[Reaction network]	We simulated the model until it reached equilibrium with parameters $I=1.5$, $k_{IA}=1.4$, $K_{IA}=0.8$, $F_A=1.1$, $k_{F_A A}=0.9$, $K_{F_A A}=1.2$, $k_{CB}=0.6$, $K_{CB}=0.0001$, $F_B=0.7$, $k_{F_B B}=0.7$, $K_{F_B B}=0.0001$, $k_{AC}=2.1$, $K_{AC}=1.5$, $k_{BC}=0.7$, $K_{BC}=0.6$. The settings were chosen in such a way that the saturation conditions $(1-X_B(t))\gg K_{CB}$ and $X_B(t)\gg K_{F_B B}$ were satisfied. We then changed the input signal to $0.25$, $1.0$, and $10.0$ and recorded the response.
  \item[Protein pathway] The parameter settings of the simulation were $I=1.0$, $k_{Is}=1.0$, $T_s=1.0$, $K_{Is}=1.0$, $K_e=1.2$, $F_s=1.0$, $k_{F_s s}=1.0$, $K_{F_s s}=0.9$, $k_{sr}=1.0$, $K_{sr}=1.0$, $T_r=1.0$, $F_r=0.3$, $k_{F_r r}=1.0$, $K_{F_r r}=0.8$, $k_{rm}=1.0$, $K_{rm}=0.9$, $T_m=1.0$, $F_m=0.2$, $k_{F_m m}=1.0$, $K_{F_m m}=1.2$, $k_{me}=1.0$, $K_{me}=0.0001$, $T_e=1.0$, $F_e=0.7$, $k_{F_e e}=1.2$, $K_{F_e e}=0.0001$. This ensured that the saturation conditions $(T_e-X_e(t))\gg K_{me}$ and $X_e(t)\gg K_{F_e e}$ were satisfied. For Figure~\ref{fig:adaptation:MAPK}, we changed the input signal $I$ to $0.9, 1.1, 1.2$, respectively, after the system had reached equilibrium, and continued the simulation. For Figure~\ref{fig:experiments:simulations:inhibition} we reduced the parameter value of $k_{me}$ to $0.98$ at some point in time after the system had equilibrated.
\end{description}

The same qualitative behavior as reported here can be observed for a range of parameter values.
The behavior of the protein pathway is rather complex; in particular, $X_e(t)$ may show a switch-like behavior (either taking on a value close to 0, or close to $T_e$), and therefore some parameter tuning is required if one wants to ensure that the assumed saturation conditions ($(T_e-X_e(t))\gg K_{me}$ and $X_e(t)\gg K_{F_e e}$) hold.

\section{Conditional independences and causal discovery}
\label{app:conditional independences}

The equilibrium Markov ordering graph in Figure~\ref{fig:protein pathway:markov ordering graph} was derived from the equilibrium equations of the protein pathway model under saturation conditions. From this we can read off the following $d$-separations:

\begin{align*}
&I\dsep{} v_s, \quad I\dsep{} v_s\given v_r, \quad I\dsep{}v_s\given v_m, \quad I\dsep{} v_s \given \{v_r, v_m\}, \\
&I\dsep{} v_r, \quad I\dsep{} v_r\given v_s, \quad I\dsep{}v_r\given v_m, \quad I\dsep{} v_r \given \{v_s, v_m\}, \\
&I\dsep{} v_m, \quad I\dsep{} v_m\given v_s, \quad I\dsep{}v_m\given v_r, \quad I\dsep{} v_m \given \{v_s, v_r\}, \\
&v_e \dsep{} v_r \given v_s, \quad v_e \dsep{} v_r \given \{v_s,v_m\}, \quad v_e \dsep{} v_r \given \{v_s, I\}, \quad v_e \dsep{} v_r \given \{v_s,v_m,I\},\\
&v_e \dsep{} v_m \given v_s, \quad v_e \dsep{} v_m \given v_r, \quad v_e \dsep{} v_m \given \{v_s, v_r\},\\
&v_e \dsep{} v_m \given \{v_s, I\}, \quad v_e \dsep{} v_m \given \{v_r, I\}, \quad v_e \dsep{} v_m \given \{v_s, v_r, I\}, \\
&v_s \dsep{} v_m \given v_r, \quad v_s \dsep{} v_m \given \{v_e, v_r\}, \quad v_s \dsep{} v_m \given \{v_r, I\}, \quad v_s \dsep{} v_m \given \{v_e, v_r, I\}.
\end{align*}

It is easy to check that the equilibrium equations and endogenous variables in this model are uniquely solvable w.r.t.\ the causal ordering graph. Therefore, the $d$-separations above imply conditional independences between the variables in the model, assuming the exogenous variables to be independent. 

To test whether the predicted conditional independences hold when the system is at equilibrium, we ran the simulation $n=500$ times until it reached equilibrium and recorded the equilibrium concentrations $X_s$, $X_r$, $X_m$, and $X_e$. 
Parameters were as described in Appendix~\ref{app:perfect adaptation simulations}, except that $k_{Is}, k_{sr}, k_{rm}, k_{me}$ were all drawn randomly from a uniform distribution on $[1.0,1.1]$, and the input signal $I$ was drawn randomly from a uniform distribution on $[0.9,1.1]$. 
We tested all (conditional) independences with a maximum of one conditioning variable using Spearman's rank correlation test with a p-value threshold of $0.01$.\footnote{Because the LCD algorithm only uses conditional independence tests with a maximum of one variable in the conditioning test, we do not consider conditional independence tests with larger conditioning sets in this work. We did experiment with larger conditioning sets but we were not able to retrieve all predicted conditional dependences with our parameter settings and only $n=500$ samples.}
Table~\ref{tab:protein pathway:conditional independences} shows that the conditional independences with a maximum conditioning set of size one that are implied by the equilibrium Markov ordering graph are indeed present in the simulated data.

\begin{table}[ht]
  \caption{The conditional independences in the simulation of the protein pathway (described in Section~\ref{sec:application:causal ordering} and Appendix~\ref{app:conditional independences}) were assessed using Spearman's rank correlations. With a p-value threshold of $0.01$, $d$-separations with a separating set of size $0$ or $1$ coincide with conditional independence test results.}
	\label{tab:protein pathway:conditional independences}
	\centering
	\begin{tabular}{l l l l}
		\toprule
		Independence test & Correlation & $p$-value & $d$-separation \\
		\midrule
		$I\indep X_s$ & $-0.010$   & $0.82$ & yes \\
		$I\indep X_r$ & $-0.003$   & $0.94$ & yes \\
		$I\indep X_m$ & $+0.012$   & $0.80$ & yes\\
    $I\indep X_e$ & $+0.408$   & $<2.2 \times {10}^{-16}$ & no\\
		$X_s\indep X_r$ & $+0.976$ & $<2.2 \times {10}^{-16}$ & no\\
		$X_s\indep X_m$ & $+0.946$ & $<2.2 \times {10}^{-16}$ & no\\
		$X_s\indep X_e$ & $-0.891$ & $<2.2 \times {10}^{-16}$ & no\\
		$X_r\indep X_m$ & $+0.969$ & $<2.2 \times {10}^{-16}$ & no\\
		$X_r\indep X_e$ & $-0.868$ & $<2.2 \times {10}^{-16}$ & no\\
		$X_m\indep X_e$ & $-0.836$ & $<2.2 \times {10}^{-16}$ & no\\
		$I\indep X_s\given X_r$ & $-0.033$ & $0.46$ & yes\\
		$I\indep X_s\given X_m$ & $-0.066$ & $0.14$ & yes\\ 
		$I\indep X_r\given X_s$ & $+0.032$ & $0.48$ & yes\\ 
		$I\indep X_r\given X_m$ & $-0.059$ & $0.19$ & yes\\ 
		$I\indep X_m\given X_s$ & $+0.066$ & $0.14$ & yes\\ 
		$I\indep X_m\given X_r$ & $+0.060$ & $0.18$ & yes\\
		$X_e\indep X_r\given X_s$ & $+0.020$ & $0.66$ & yes\\
		$X_e\indep X_m\given X_s$ & $+0.045$ & $0.32$ & yes\\
		$X_e\indep X_m\given X_r$ & $+0.041$ & $0.36$ & yes\\
		$X_s\indep X_m\given X_r$ & $-0.007$ & $0.87$ & yes\\
    $I\indep X_e \given X_s$ & $+0.876$ & $7.9 \times {10}^{-160}$ & no \\
    $I\indep X_e \given X_r$ & $+0.814$ & $2.0 \times {10}^{-119}$ & no \\
    $I\indep X_e \given X_m$ & $+0.760$ & $3.5 \times {10}^{-95}$ & no \\
    $I\indep X_s \given X_e$ & $+0.850$ & $3.2 \times {10}^{-140}$ & no \\
    $I\indep X_r \given X_e$ & $+0.772$ & $8.6 \times {10}^{-100}$ & no \\
    $I\indep X_m \given X_e$ & $+0.703$ & $1.5 \times {10}^{-75}$ & no \\
    $X_e \indep X_s \given X_r$ & $-0.405$ & $3.6 \times {10}^{-21}$ & no \\
    $X_e \indep X_s \given X_m$ & $-0.562$ & $7.8 \times {10}^{-43}$ & no \\
    $X_e \indep X_s \given I$ & $-0.971$ & $2.0 \times {10}^{-310}$ & no \\
    $X_e \indep X_r \given X_m$ & $-0.425$ & $2.4 \times {10}^{-23}$ & no \\
    $X_e \indep X_r \given I$ & $-0.949$ & $1.0 \times {10}^{-250}$ & no \\
    $X_e \indep X_m \given I$ & $-0.921$ & $3.9 \times {10}^{-205}$ & no \\
    $X_s \indep X_r \given I$ & $+0.976$ & $<4.6 \times {10}^{-324}$ & no \\
    $X_s \indep X_r \given X_e$ & $+0.900$ & $1.5 \times {10}^{-181}$ & no \\
    $X_s \indep X_r \given X_m$ & $+0.745$ & $2.0 \times {10}^{-89}$ & no \\
    $X_s \indep X_m \given I$ & $+0.946$ & $1.9 \times {10}^{-245}$ & no \\
    $X_s \indep X_m \given X_e$ & $+0.807$ & $1.2 \times {10}^{-115}$ & no \\
    $X_r \indep X_m \given I$ & $+0.969$ & $4.6 \times {10}^{-305}$ & no \\
    $X_r \indep X_m \given X_e$ & $+0.894$ & $2.3 \times {10}^{-175}$ & no \\
    $X_r \indep X_m \given X_s$ & $+0.652$ & $1.1 \times {10}^{-61}$ & no\\
		\bottomrule
	\end{tabular}
\end{table}

To verify the predicted LCD triples of Section~\ref{sec:application:causal discovery},
we ran the simulation $n=500$ times with $k_{me} = C$ as the context variable.
Parameters were as described in Appendix~\ref{app:perfect adaptation simulations}, except that $I, k_{Is}, k_{F_s s}, k_{sr}, k_{rm}, k_{me}$ were all drawn randomly from a uniform distribution on $[1.0,1.1]$, and the parameter $k_{F_e e}$ from a uniform distribution on $[1.2,1.3]$.
Note that some parameter tuning is necessary if one wants to ensure that the equilibrium state satisfies the saturation conditions $(T_e-X_e(t))\gg K_{me}$ and $X_e(t)\gg K_{F_e e}$.
On the other hand, the parameters need to have sufficient random variation, otherwise LCD may fail because of (almost) deterministic relations between variables.
We ran the simulations until the system reaches equilibrium and recorded the equilibrium values of the variables. 
We then applied the LCD algorithm to search for LCD triples in this equilibrium data with context variable $C = k_{me}$.
For the conditional independence tests we used Spearman's rank correlation with a p-value threshold of $0.01$. 
We found the expected LCD triples $(C, v_m, v_r)$, $(C, v_m, v_s)$, $(C,v_m,v_e)$, $(C,v_r,v_s)$, $(C,v_r,v_e)$, $(C,v_s, v_e)$ and no others.

\section{Rewriting equations may reveal additional structure}
\label{app:rewriting equations}

Theorem~\ref{thm:identification of perfect adaptation} specifies sufficient but not necessary conditions for the presence of perfect adaptation. The equilibrium distribution of some systems is not faithful to the equilibrium Markov ordering graph associated with the equilibrium equations of the model. Here, we will discuss a dynamical model for a basic enzymatic reaction and we will demonstrate that this model is capable of perfect adaptation. However, it does not satisfy the conditions in Theorem~\ref{thm:identification of perfect adaptation}, and the presence of directed paths in the equilibrium causal ordering graph does not imply the presence of a causal effect at equilibrium. We will also show that this may be addressed by appropriately rewriting the equations.

The basic enzyme reaction models a substrate $S$ that reacts with an enzyme $E$ to form a complex $C$, which is converted into a product $P$ and the enzyme $E$. The dynamical equations for the concentrations $X_S(t),X_E(t),X_C(t),$ and $X_P(t)$ are given by:
\begin{align}
\dot{X}_S(t) &= k_0 - k_1 X_S(t) X_E(t) + k_{-1}X_C(t),\label{eq:ber s} \\
\dot{X}_C(t) &= k_1 X_S(t)X_E(t) - (k_{-1}+k_2) X_C(t),\label{eq:ber c}\\
\dot{X}_E(t) &= -k_1 X_S(t)X_E(t) + (k_{-1}+k_2) X_C(t),\label{eq:ber e}\\
\dot{X}_P(t) &= k_2 X_C(t) - k_3 X_P(t)\label{eq:ber p},
\end{align}
where $k_{-1},k_0,k_1,k_2,k_3$ are parameters taking value in $\mathbb{R}_{>0}$ \citep{Blom2019, Murray2002}.
We treat the initial conditions $X_S(0) = S_0$, $X_C(0) = C_0$, $X_E(0) = E_0$, and $X_P(0) = P_0$ as independent exogenous random variables, and consider the parameter $k_1$ as input signal.

Figure~\ref{fig:enzyme reaction:dynamic bipartite graph} shows the dynamic bipartite graph corresponding to the dynamic equations, and Figure~\ref{fig:enzyme reaction:dynamic causal ordering graph} the corresponding dynamic causal ordering graph.
Since there is a directed path from $k_1$ to $v_P$ in Figure~\ref{fig:enzyme reaction:dynamic causal ordering graph},
we would expect that a change in $k_1$ would generically lead to a response of $X_P(t)$. We verified this by simulating this model with $k_{-1}=1.0$, $k_0=1.0$, $k_1=k_1^-=10.0$, $k_2=0.8$, $k_3=2.5$ and with initial conditions $S_0=0.45$, $E_0=0.5$, $C_0=1.25$, and $P_0=0.40$ until the system reached equilibrium. We then recorded the response of $X_P(t)$ after changing the input signal $k_1$ into different values $k_1^+$. Figure~\ref{fig:enzyme reaction:adaptation} shows that $X_P(t)$ indeed responds to changes in the input signal $k_1$, but eventually returns to its original equilibrium value.

\begin{figure}[ht]
  \centering
	\begin{subfigure}[t]{0.25\textwidth}
		\centering
		\begin{tikzpicture}[scale=0.75,every node/.style={transform shape}]
		\GraphInit[vstyle=Normal]
		\SetGraphUnit{1}
		\Vertex[L=$v_S$,x=0,y=0] {vS}
		\Vertex[L=$v_E$,x=1.1,y=0] {vE}
		\Vertex[L=$v_C$,x=2.2,y=0] {vC}
		\Vertex[L=$v_P$,x=3.3,y=0] {vP}
		\Vertex[L=$g_S$,x=0,y=-1.3] {fS}
		\Vertex[L=$g_E$,x=1.1,y=-1.3] {fE}
		\Vertex[L=$g_C$,x=2.2,y=-1.3] {fC}
		\Vertex[L=$g_P$,x=3.3,y=-1.3] {fP}
		
		\begin{scope}[VertexStyle/.append style = {minimum size = 4pt, 
			inner sep = 0pt,
			color=black}]
		\Vertex[x=1.1, y=-2.6, L=$k_1$, Lpos=270, LabelOut]{k1}
		\end{scope}
		
		\draw[EdgeStyle, style={->}](k1) to (fS);
		\draw[EdgeStyle, style={->}](k1) to (fC);
		\draw[EdgeStyle, style={->}](k1) to (fE);

		\draw[EdgeStyle, style={-}](vS) to (fS);
		\draw[EdgeStyle, style={-}](vE) to (fS);
		\draw[EdgeStyle, style={-}](vC) to (fS);

		\draw[EdgeStyle, style={-}](vS) to (fC);
		\draw[EdgeStyle, style={-}](vE) to (fC);
		\draw[EdgeStyle, style={-}](vC) to (fC);

		\draw[EdgeStyle, style={-}](vE) to (fE);
		\draw[EdgeStyle, style={-}](vS) to (fE);
		\draw[EdgeStyle, style={-}](vC) to (fE);
		
		\draw[EdgeStyle, style={-}](vP) to (fP);
		\draw[EdgeStyle, style={-}](vC) to (fP);
		\end{tikzpicture}
		\caption{Dynamic bipartite graph.}
		\label{fig:enzyme reaction:dynamic bipartite graph}
	\end{subfigure}%
  \qquad
	\begin{subfigure}[t]{0.26\textwidth}
		\centering
      \begin{tikzpicture}[scale=0.75,every node/.style={transform shape}]
      \GraphInit[vstyle=Normal]
      \SetGraphUnit{1}
      \Vertex[L=$v_S$,x=1.2,y=0] {vS}
      \Vertex[L=$v_E$,x=2.4,y=0] {vE}
      \Vertex[L=$v_C$,x=3.6,y=0] {vC}
      \Vertex[L=$v_P$,x=5.1,y=0] {vP}
      \Vertex[L=$g_S$,x=1.2,y=-1.2] {gS}
      \Vertex[L=$g_E$,x=2.4,y=-1.2] {gE}
      \Vertex[L=$g_C$,x=3.6,y=-1.2] {gC}
      \Vertex[L=$g_P$,x=5.1,y=-1.2] {gP}
      \begin{scope}[VertexStyle/.append style = {minimum size = 4pt, 
        inner sep = 0pt,
        color=black}]
      \Vertex[x=2.5, y=-2.5, L=$k_1$, Lpos=270, LabelOut]{k1}
      \end{scope}
      \node[draw=black, fit=(vS) (vC) (vE) (gS) (gC) (gE), inner sep=0.1cm]{};
      \node[draw=black, fit=(vP) (gP), inner sep=0.1cm]{};
      \draw[EdgeStyle, style={->}](vC) to (4.55,0.0);
      \draw[EdgeStyle, style={->}](k1) to (2.5,-1.85);
      \end{tikzpicture}
		\caption{Dynamic causal ordering graph.}
		\label{fig:enzyme reaction:dynamic causal ordering graph}
	\end{subfigure}%
  \qquad
	\begin{subfigure}[t]{0.35\textwidth}
		\centering
		\includegraphics{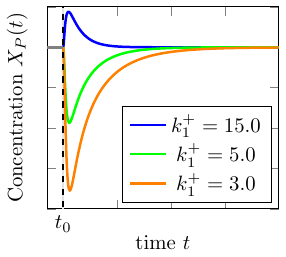}
		\caption{Response of $X_P$ to persistent changes of $k_1$.
    $k_1(t) = k_1^-$ for $t < t_0$, $k_1(t) = k_1^+$ for $t \ge t_0$.}
		\label{fig:enzyme reaction:adaptation}
	\end{subfigure}
  \caption{The dynamic bipartite graph (\subref{fig:enzyme reaction:dynamic bipartite graph}) and dynamic causal ordering graph (\subref{fig:enzyme reaction:dynamic causal ordering graph}) for the basic enzyme reaction modelled by equations \eqref{eq:ber s}, \eqref{eq:ber c}, \eqref{eq:ber e}, and \eqref{eq:ber p}. Panel~(\subref{fig:enzyme reaction:adaptation}) shows simulations that suggest that the concentration $X_P$ perfectly adapts after an initial transient response to a persistent change in the parameter $k_1$.}
	\label{fig:enzyme reaction}
\end{figure}

The equilibrium equations of the model are given by:
\begin{align}
f_S:\qquad & k_0 - k_1 X_S X_E + k_{-1}X_C = 0, \label{eq:ber:eq S}\\
f_C:\qquad & k_1 X_S X_E - (k_{-1}+k_2) X_C = 0,\label{eq:ber:eq C}\\
f_E:\qquad & -k_1 X_S X_E + (k_{-1}+k_2) X_C = 0,\label{eq:ber:eq E}\\
f_P:\qquad & k_2 X_C - k_3 X_P = 0,\label{eq:ber:eq P}\\
f_{CE}:\qquad & X_C + X_E - (C_0 + E_0) = 0,\label{eq:ber:eq CE}
\end{align}
where the last equation is derived from the constant of motion $X_C(t)+X_E(t)$ (see \citep{Blom2019} for more details).
The equilibrium bipartite graph (Figure~\ref{fig:enzyme reaction:equilibrium bipartite graph}) does not have a perfect matching (as it contains more equations than endogenous variables), 
but we can apply the extended causal ordering algorithm \citep{Blom2020} to construct the equilibrium causal ordering graph in Figure~\ref{fig:enzyme reaction:equilibrium causal ordering graph}. There is a directed path from $k_1$ to $v_P$ in the equilibrium Markov ordering graph. Therefore, even though the basic enzyme reaction does achieve perfect adaptation, we see that it does not satisfy the conditions of Theorem~\ref{thm:identification of perfect adaptation}. 
The basic enzyme reaction is an example of a system for which directed paths in the equilibrium causal ordering graph do not imply generic causal relations between variables.

\begin{figure}
  \centering
	\begin{subfigure}[b]{0.35\textwidth}
    \centering
		\begin{tikzpicture}[scale=0.75,every node/.style={transform shape}]
		\GraphInit[vstyle=Normal]
		\SetGraphUnit{1}
		\Vertex[L=$v_S$,x=0,y=0] {vS}
		\Vertex[L=$v_E$,x=1.1,y=0] {vE}
		\Vertex[L=$v_C$,x=2.2,y=0] {vC}
		\Vertex[L=$v_P$,x=3.3,y=0] {vP}
		\Vertex[L=$f_S$,x=0,y=-1.3] {fS}
		\Vertex[L=$f_E$,x=1.1,y=-1.3] {fE}
		\Vertex[L=$f_C$,x=2.2,y=-1.3] {fC}
		\Vertex[L=$f_P$,x=3.3,y=-1.3] {fP}
    \Vertex[L=$f_{CE}$,x=-1.1,y=-1.3] {fCE}
		\begin{scope}[VertexStyle/.append style = {minimum size = 4pt, 
			inner sep = 0pt,
			color=black}]
		\Vertex[x=1.1, y=-2.6, L=$k_1$, Lpos=270, LabelOut]{k1}
		\end{scope}
		
		\draw[EdgeStyle, style={->}](k1) to (fS);
		\draw[EdgeStyle, style={->}](k1) to (fC);
		\draw[EdgeStyle, style={->}](k1) to (fE);

		\draw[EdgeStyle, style={-}](vS) to (fS);
		\draw[EdgeStyle, style={-}](vS) to (fC);
		\draw[EdgeStyle, style={-}](vS) to (fE);
		\draw[EdgeStyle, style={-}](vE) to (fS);
		\draw[EdgeStyle, style={-}](vE) to (fC);
		\draw[EdgeStyle, style={-}](vE) to (fE);
		\draw[EdgeStyle, style={-}](vE) to (fCE);
		\draw[EdgeStyle, style={-}](vC) to (fS);
		\draw[EdgeStyle, style={-}](vC) to (fC);
		\draw[EdgeStyle, style={-}](vC) to (fE);
		\draw[EdgeStyle, style={-}](vC) to (fP);
		\draw[EdgeStyle, style={-}](vC) to (fCE);
		\draw[EdgeStyle, style={-}](vP) to (fP);
		\end{tikzpicture}
    \caption{Equilibrium bipartite graph.}
		\label{fig:enzyme reaction:equilibrium bipartite graph}
	\end{subfigure}%
	\qquad\qquad
	\begin{subfigure}[b]{0.4\textwidth}
    \centering
      \begin{tikzpicture}[scale=0.75,every node/.style={transform shape}]
      \GraphInit[vstyle=Normal]
      \SetGraphUnit{1}
      \Vertex[L=$v_S$,x=1.2,y=0] {vS}
      \Vertex[L=$v_E$,x=2.4,y=0] {vE}
      \Vertex[L=$v_C$,x=3.6,y=0] {vC}
      \Vertex[L=$v_P$,x=5.1,y=0] {vP}
      \Vertex[L=$f_{CE}$,x=0,y=-1.2] {fEC}
      \Vertex[L=$f_S$,x=1.2,y=-1.2] {fS}
      \Vertex[L=$f_E$,x=2.4,y=-1.2] {fE}
      \Vertex[L=$f_C$,x=3.6,y=-1.2] {fC}
      \Vertex[L=$f_P$,x=5.1,y=-1.2] {fP}
      \begin{scope}[VertexStyle/.append style = {minimum size = 4pt, 
        inner sep = 0pt,
        color=black}]
      \Vertex[x=2.5, y=-2.5, L=$k_1$, Lpos=270, LabelOut]{k1}
      \end{scope}
      \node[draw=black, fit=(vS) (vC) (vE) (fS) (fC) (fE) (fEC), inner sep=0.1cm]{};
      \node[draw=black, fit=(vP) (fP), inner sep=0.1cm]{};
      \draw[EdgeStyle, style={->}](vC) to (4.55,0.0);
      \draw[EdgeStyle, style={->}](k1) to (2.5,-1.85);
      \end{tikzpicture}
		\caption{Equilibrium causal ordering graph.}
		\label{fig:enzyme reaction:equilibrium causal ordering graph}
  \end{subfigure}\\[\baselineskip]
	\begin{subfigure}[b]{0.35\textwidth}
    \centering
		\begin{tikzpicture}[scale=0.75,every node/.style={transform shape}]
		\GraphInit[vstyle=Normal]
		\SetGraphUnit{1}
		\Vertex[L=$v_S$,x=0,y=0] {vS}
		\Vertex[L=$v_E$,x=1.1,y=0] {vE}
		\Vertex[L=$v_C$,x=2.2,y=0] {vC}
		\Vertex[L=$v_P$,x=3.3,y=0] {vP}
		\Vertex[L=$f_S$,x=0,y=-1.3] {fS}
		\Vertex[L=$f_{CE}$,x=1.1,y=-1.3] {fE}
		\Vertex[L=$f'_C$,x=2.2,y=-1.3] {fC}
		\Vertex[L=$f_P$,x=3.3,y=-1.3] {fP}
		\begin{scope}[VertexStyle/.append style = {minimum size = 4pt, 
			inner sep = 0pt,
			color=black}]
		\Vertex[x=-1.1, y=-1.3, L=$k_1$, Lpos=180, LabelOut]{k1}
		\end{scope}
		
		\draw[EdgeStyle, style={->}](k1) to (fS);

		\draw[EdgeStyle, style={-}](vS) to (fS);
		\draw[EdgeStyle, style={-}](vE) to (fE);
		\draw[EdgeStyle, style={-}](vC) to (fC);
		\draw[EdgeStyle, style={-}](vP) to (fP);
		
		\draw[EdgeStyle, style={-}](vE) to (fS);
		\draw[EdgeStyle, style={-}](vC) to (fS);
		
		\draw[EdgeStyle, style={-}](vC) to (fE);
		
		\draw[EdgeStyle, style={-}](vC) to (fP);
		\end{tikzpicture}
		\caption{Alternative equilibrium bipartite graph.}
		\label{fig:enzyme reaction:substitution:equilibrium bipartite graph}
	\end{subfigure}%
	\qquad\qquad
	\begin{subfigure}[b]{0.40\textwidth}
    \centering
		\begin{tikzpicture}[scale=0.75,every node/.style={transform shape}]
		\GraphInit[vstyle=Normal]
		\SetGraphUnit{1}
		\Vertex[L=$v_P$,x=4.5,y=0] {vP}
		\Vertex[L=$v_C$,x=3.0,y=0] {vC}
		\Vertex[L=$v_E$,x=1.5,y=0] {vE}
		\Vertex[L=$v_S$,x=0.0,y=0] {vS}
		\Vertex[L=$f_P$,x=4.5,y=-1.2] {fP}
		\Vertex[L=$f'_C$,x=3.0,y=-1.2] {fC}
		\Vertex[L=$f_{CE}$,x=1.5,y=-1.2] {fE}
		\Vertex[L=$f_S$,x=0.0,y=-1.2] {fS}

    \begin{scope}[VertexStyle/.append style = {minimum size = 4pt, 
			inner sep = 0pt,
			color=black}]
		\Vertex[x=-1.1, y=-1.3, L=$k_1$, Lpos=180, LabelOut]{k1}
		\end{scope}
		
    \draw[EdgeStyle, style={->}](k1) to (-0.6,-1.3);
		
		\node[draw=black, fit=(vS) (fS), inner sep=0.1cm]{};
		\node[draw=black, fit=(vC) (fC), inner sep=0.1cm]{};
		\node[draw=black, fit=(vE) (fE), inner sep=0.1cm]{};
		\node[draw=black, fit=(vP) (fP), inner sep=0.1cm]{};
		
		\draw[EdgeStyle, style={->}](vC) to (3.95,0.0);
		\draw[EdgeStyle, style={->}](vC) to (2.15,0.0);
		\draw[EdgeStyle, style={->}](vE) to (0.515,0.0);
		
		\draw[EdgeStyle, style={-}](vC) to (3.0,0.8);
		\draw[EdgeStyle, style={-}](3.0,0.8) to (0.0,0.8);
		\draw[EdgeStyle, style={->}](0.0,0.8) to (0.0,0.525);
		\end{tikzpicture}
		\caption{Alternative equilibrium causal ordering graph.}
		\label{fig:enzyme reaction:substitution:equilibrium causal ordering graph}
	\end{subfigure}
  \caption{Different graphical representations of the basic enzyme reaction at equilibrium. 
    Panels (\subref{fig:enzyme reaction:equilibrium bipartite graph}) and (\subref{fig:enzyme reaction:equilibrium causal ordering graph}) show the equilibrium bipartite graph and equilibrium causal ordering graph, respectively, for the equilibrium equations \eqref{eq:ber:eq S}, \eqref{eq:ber:eq C}, \eqref{eq:ber:eq E}, \eqref{eq:ber:eq P}, and \eqref{eq:ber:eq CE}. 
    Panels (\subref{fig:enzyme reaction:substitution:equilibrium bipartite graph}) and (\subref{fig:enzyme reaction:substitution:equilibrium causal ordering graph}) show the equilibrium bipartite graph and the equilibrium causal ordering graph, respectively, for the rewritten equilibrium equations \eqref{eq:ber:eq S}, \eqref{eq:ber:eq P}, \eqref{eq:ber:eq CE} and \eqref{eq:ber:eq C'}. 
    Rewriting the equilibrium equations yields a sparser equilibrium bipartite graph, and therefore more structure is revealed in the equilibrium causal ordering graph.}
	\label{fig:enzyme reaction:causal ordering}
\end{figure}

By rewriting the equilibrium equations we can achieve stronger conclusions for this particular case. For instance, we can consider the equation $f'_C$, obtained from summing equations $f_S$ and $f_C$:
\begin{align}
f'_C:\qquad & k_0 - k_2 X_C = 0,\label{eq:ber:eq C'}
\end{align}
in combination with $f_S$, $f_P$, and $f_{CE}$. 
The equilibrium equations $f_C$ and $f_E$ can be dropped because they are linear combinations of the other equations. 
The graphs corresponding with these rewritten equations are shown in Figure~\ref{fig:enzyme reaction:causal ordering}\subref{fig:enzyme reaction:substitution:equilibrium bipartite graph}--\subref{fig:enzyme reaction:substitution:equilibrium causal ordering graph}.
The equilibrium bipartite graph for the rewritten equations in Figure~\ref{fig:enzyme reaction:substitution:equilibrium bipartite graph} turns out to be sparser than the one for the original equilibrium equations in Figure~\ref{fig:enzyme reaction:equilibrium bipartite graph}.
The equilibrium causal ordering graph in Figure~\ref{fig:enzyme reaction:substitution:equilibrium causal ordering graph} for the rewritten equilibrium equations consequently reveals more structure than the one in Figure~\ref{fig:enzyme reaction:equilibrium causal ordering graph} for the original equilibrium equations.

The two equilibrium causal ordering graphs do not model the same set of perfect interventions. For example, the (non)effects of an intervention targeting the cluster $\{v_S, f_S\}$ in the equilibrium causal ordering graph in Figure~\ref{fig:enzyme reaction:substitution:equilibrium causal ordering graph} (where $f_S$ is replaced by an equation $v_S = \xi_S$ setting $v_S$ equal to a constant $\xi_S\in\mathbb{R}_{>0}$) cannot be read off from the equilibrium causal ordering graph in Figure~\ref{fig:enzyme reaction:equilibrium causal ordering graph}.
Furthermore, there is no directed path from $k_1$ to $v_P$ in Figure~\ref{fig:enzyme reaction:substitution:equilibrium causal ordering graph}, and hence we can now make use of Theorem~\ref{thm:identification of perfect adaptation} to conclude that the concentration $X_P(t)$ perfectly adapts to persistent changes of the parameter $k_1$.

\section{Transforming variables may reveal structure}
\label{app:ifflp network}

The IFFLP network in Ma \emph{et al.} \cite{Ma2009} that we briefly discussed in Section~\ref{sec:perfect adaptation:examples:negative feedback loop} could be a graphical representation of the following differential equations:
\begin{align}
\label{eq:ifflp:A}
\dot{X}_A(t) &= I(t) k_{IA} \frac{(1-X_A(t))}{K_{IA} + (1-X_A(t))} - F_A k_{F_A A} \frac{X_A(t)}{K_{F_A A} + X_A(t)}, \\
\label{eq:ifflp:B}
\dot{X}_B(t) &= X_A(t) k_{AB} \frac{(1-X_B(t))}{K_{AB} + (1-X_B(t))} - F_B k_{F_B B} \frac{X_B(t)}{K_{F_B B} + X_B(t)}, \\
\label{eq:ifflp:C}
\dot{X}_C(t) &= X_A(t) k_{AC} \frac{(1-X_C(t))}{K_{AC} + (1-X_C(t))} - X_B(t) k_{BC} \frac{X_C(t)}{K_{BC} + X_C(t)},
\end{align}
where $I(t)$ represents an external input into the system. This network is capable of perfect adaptation if the first term of $\dot{X}_B(t)$ is in the saturated region $(1-X_B(t))\gg K_{AB}$ and the second term is in the linear region $X_B(t)\ll K_{F_B B}$, which allows us to make the following approximation:
\begin{align}
\label{eq:ifflp:B'}
\frac{dX_B(t)}{dt} &\approx X_A(t) k_{AB} - \frac{F_B k_{F_B B}}{K_{F_B B}} X_B(t).
\end{align}
Therefore, an equilibrium solution $X_B$ for $B$ is proportional to the equilibrium solution $X_A$ for $A$. Since both terms in equation \eqref{eq:ifflp:C} are proportional to $X_A$ we find that the equilibrium solution $X_C$ for $C$ is a function of only the parameters $k_{AC}$, $K_{AC}$, $k_{BC}$, and $K_{BC}$ (note that $X_A$ factors out of the equilibrium equation corresponding to \eqref{eq:ifflp:B'}), and hence it does not depend on the input parameter $I(t)$. Since a change in the input signal $I(t)$ changes $\dot{X}_A(t)$ there is a transient effect on $X_A(t)$. Similarly there must also be a transient effect on both $X_B(t)$ and $X_C(t)$. It follows that the system achieves perfect adaptation.

The equilibrium equations associated with equations \eqref{eq:ifflp:A}, the approximation \eqref{eq:ifflp:B'} to \eqref{eq:ifflp:B}, and \eqref{eq:ifflp:C} are given by:
\begin{align}
f_A:\qquad& I k_{IA} \frac{(1-X_A)}{K_{IA} + (1-X_A)} - F_A k_{F_A A} \frac{X_A}{K_{F_A A} + X_A} = 0, \label{eq:A}\\
f_B:\qquad& X_A k_{AB} - \frac{F_B k_{F_B B}}{K_{F_B B}} X_B = 0, \label{eq:B} \\
f_C:\qquad& X_A k_{AC} \frac{(1-X_C)}{K_{AC} + (1-X_C)} - X_B k_{BC} \frac{X_C}{K_{BC} + X_C} = 0. \label{eq:C}
\end{align}
The associated equilibrium causal ordering graph in Figure \ref{fig:IFFLP:equilibrium causal ordering graphs:original}
shows that there is a directed path from the input signal $I$ to the cluster $\{v_A, v_B, v_C\}$. Therefore, the conditions of Theorem \ref{thm:identification of perfect adaptation} are not satisfied for the system.

Interestingly, though, if we first make a change of variables from $(X_A,X_B,X_C) \mapsto (X_A,X_R,X_C)$ with $X_R := X_B / X_A$
(assuming that $X_A \in \R_{>0}$), we can make use of the causal ordering approach. 
Indeed, we can rewrite the equilibrium equations as follows in terms of the new variables $X_A,X_R,X_C$:
\begin{align}
f_A:\qquad& I k_{IA} \frac{(1-X_A)}{K_{IA} + (1-X_A)} - F_A k_{F_A A} \frac{X_A}{K_{F_A A} + X_A} = 0, \label{eq:A2}\\
f_R:\qquad& k_{AB} - \frac{F_B k_{F_B B}}{K_{F_B B}} X_R = 0, \label{eq:B2} \\
f_{C'}:\qquad& k_{AC} \frac{(1-X_C)}{K_{AC} + (1-X_C)} - X_R k_{BC} \frac{X_C}{K_{BC} + X_C} = 0. \label{eq:C2}
\end{align}
This yields a sparser equilibrium bipartite graph. As shown in Figure~\ref{fig:IFFLP:equilibrium causal ordering graphs:reparameterized}, we can now read off from the corresponding equilibrium causal ordering graph that the input signal $I$ does not affect the equilibrium values of $X_R$ and $X_C$.
Hence, in this parameterization of the model, Theorem \ref{thm:identification of perfect adaptation} can be used to identify the perfect adaptive behavior of the system.

\begin{figure}[ht] \centering
	\begin{subfigure}[b]{0.25\textwidth}
    \begin{tikzpicture}[scale=0.75,every node/.style={transform shape}]
    \GraphInit[vstyle=Normal]
    \SetGraphUnit{1}
    \Vertex[L=$v_A$,x=0,y=0] {vT}
    \Vertex[L=$v_B$,x=1.5, y=0] {vI}
    \Vertex[L=$v_C$,x=3.0, y=0] {vE}
    \Vertex[L=$f_A$,x=0.0, y=-1.5] {fT}
    \Vertex[L=$f_B$,x=1.5, y=-1.5] {fI}
    \Vertex[L=$f_C$,x=3.0, y=-1.5] {fE}
    
    \node[draw=black, fit=(vT) (vI) (vE) (fT) (fI) (fE), inner sep=0.1cm ]{};
    
    \begin{scope}[VertexStyle/.append style = {minimum size = 4pt, 
      inner sep = 0pt,
      color=black}]
    \Vertex[x=-1.2, y=0.0, L=$I$, Lpos=270, LabelOut]{I}
    \end{scope}
    
    \draw[EdgeStyle, style={->}](I) to (-0.6,0.0);
    \end{tikzpicture}
	  \caption{Original parameterization.}
		\label{fig:IFFLP:equilibrium causal ordering graphs:original}
	\end{subfigure}%
  \qquad\qquad
	\begin{subfigure}[b]{0.25\textwidth}
    \begin{tikzpicture}[scale=0.75,every node/.style={transform shape}]
    \GraphInit[vstyle=Normal]
    \SetGraphUnit{1}
    \Vertex[L=$v_A$,x=0,y=0] {vT}
    \Vertex[L=$v_R$,x=1.5, y=0] {vI}
    \Vertex[L=$v_C$,x=3.0, y=0] {vE}
    \Vertex[L=$f_A$,x=0.0, y=-1.5] {fT}
    \Vertex[L=$f_R$,x=1.5, y=-1.5] {fI}
    \Vertex[L=$f_{C'}$,x=3.0, y=-1.5] {fE}
    
    \node[draw=black, fit=(vT) (fT), inner sep=0.1cm] (cT) {};
    \node[draw=black, fit=(vI) (fI), inner sep=0.1cm] (cI) {};
    \node[draw=black, fit=(vE) (fE), inner sep=0.1cm] (cE) {};
    
    \begin{scope}[VertexStyle/.append style = {minimum size = 4pt, 
      inner sep = 0pt,
      color=black}]
    \Vertex[x=-1.2, y=0.0, L=$I$, Lpos=270, LabelOut]{I}
    \end{scope}
    
    \draw[EdgeStyle, style={->}](I) to (-0.6,0.0);
    \draw[EdgeStyle, style={->}](vI) to (2.4,0.0);
    \end{tikzpicture}
	\caption{Alternative parameterization.}
		\label{fig:IFFLP:equilibrium causal ordering graphs:reparameterized}
  \end{subfigure}
    \caption{Equilibrium causal ordering graphs for the IFFLP network. (\subref{fig:IFFLP:equilibrium causal ordering graphs:original}) modelled by equations \eqref{eq:A}, \eqref{eq:B}, and \eqref{eq:C} and variables $X_A, X_B, X_C$; (\subref{fig:IFFLP:equilibrium causal ordering graphs:reparameterized}) modelled by equations \eqref{eq:A2}, \eqref{eq:B2}, and \eqref{eq:C2} and variables $X_A, X_R, X_C$.}
	\label{fig:IFFLP:equilibrium causal ordering graphs}
\end{figure}

\section{Markov ordering graphs have no inherent causal interpretation}
\label{app:causal interpretation mog}

The causal interpretation of the equilibrium Markov ordering graph for the bathtub model is discussed at length in \citep{Blom2020}. The  conclusion is that the Markov ordering graph \emph{alone} does not contain enough information to read off the effects of interventions in an unambiguous way.\footnote{To arrive at this conclusion, we first need to explicitly state what we mean when we talk about `causal relations'. We follow the common interpretation in contemporary literature in terms of interventions. In the context of a specific model, this means that an intervention on the cause brings about a change in the effect.} As a result, the Markov ordering graph does not have a straightforward causal interpretation in terms of interventions, contrary to what is sometimes claimed \citep{Dash2005, Iwasaki1994}. For the sake of completeness we will summarize here the discussion of the causal interpretation of the equilibrium Markov ordering graph of the bathtub model. 

\begin{example}
For the bathtub model (Section~\ref{sec:perfect adapatation:examples:bathtub}),
consider an intervention targeting the dynamical equation $g_D$ that also changes the associated equilibrium equation $f_D$. For example, consider putting the bathtub outside in the rain. This will not change the inflow through the faucet, but will add to the total amount of in-flowing water into the tub, and can be modeled by modifying equation \eqref{eq:bathtub:dyn D} into:
$$\dot{X}_D(t) = U_1(X_I(t) - X_O(t) + U_{R}(t)),$$
where $U_R(t)$ is a new exogenous variable that quantifies the amount of in-flowing water due to the rain (in the original model, $U_R(t) = 0$).
Through explicit calculations, it can be shown that this soft intervention has an effect on $X_O$, $X_P$, and $X_D$ at equilibrium.\footnote{Assuming that the system converges to equilibrium after the intervention, the equilibrium causal ordering graph in Figure~\ref{fig:equilibrium causal ordering graph:bathtub} tells us that soft interventions on $f_D$ \emph{generically} change the equilibrium distributions of $X_O$, $X_P$, and $X_D$. We need explicit calculations to verify that the generic effects correspond to a change in distribution.} If we were to read off the effects of a soft intervention targeting $f_D$ by verifying whether there is a directed path from $v_D$ in the equilibrium Markov ordering graph, we would erroneously conclude that this intervention \emph{only} has an effect on $X_D$. In a similar fashion it can be shown that the graph does not represent the effects of perfect interventions targeting $\{f_D, v_D\}$ or $\{f_D, v_O\}$ either. From the equilibrium causal ordering graph one can read off that the absence and presence of (generic) effects of the perfect intervention $\{f_P, v_D\}$ correspond with the absence and presence of directed paths in the equilibrium Markov ordering graph starting from $v_D$. Because the equilibrium Markov ordering graph alone does not specify to which experimental procedure a perfect intervention on one of its vertices corresponds to, we conclude that the equilibrium Markov ordering graph on its own does not possess a causal interpretation.
\end{example}


A correct interpretation of a directed edge $(v_i\to v_j)$ in the equilibrium Markov ordering graph would be that an intervention targeting equations in the cluster of $v_i$ \emph{in the causal ordering graph} generically will have an effect on the equilibrium distribution of $v_j$. However, the Markov ordering graph itself does not specify the variables and equations that share a cluster with the variables in the causal ordering graph. In many dynamical systems, though, the equilibrium equations $f_i$ derived from differential equations for variables $X_i(t)$ end up in the same cluster as the associated variable $v_i$. Under such an assumption, one could give an unambiguous causal interpretation to the Markov ordering graph in terms of perfect interventions.\footnote{For first-order dynamical systems in which each variable is self-regulating, each variable $v_i$ shares a cluster with the equilibrium equation $f_i$ associated with the time derivative $\dot{X}_i(t)$ \citep{BlomMooij_UAI_22}. Note that the examples of perfectly adaptive systems that we have considered in this work do contain variables that are not self-regulating.}

However, there is another obstacle to interpreting a directed edge $(v_i\to v_j)$ in the Markov ordering graph as a \emph{direct} causal effect. This can go wrong in case causal cycles are present. Indeed, the Markov ordering graph for a simple SCM is the acyclification of its graph \citep{Bongers2020}. Therefore, if $v_j$ is part of a feedback loop together with $v_k$, and the structural equation of $v_k$ depends on $v_i$, the Markov ordering graph will contain the directed edge $v_i \to v_j$ even if there is no direct effect of $v_i$ on $v_j$ (that is, if the structural equation for $v_j$ does not depend on $v_i$).

Therefore, if one cannot rule out the possibility of feedback, one should avoid reading the Markov ordering graph as if the directed edges represent direct causal effects (and as if directed paths represent causal effects).

\nocite{Dash2008}

\end{document}